%% file: iclr2026_conference.tex
\documentclass{article} 
\usepackage{iclr2026_conference,times}

\input{math_commands.tex}

\usepackage{hyperref}
\hypersetup{
    colorlinks=true,   % 显示彩色文本而不是边框
    linkcolor=blue,    % 目录、交叉引用颜色
    citecolor=blue,    % 文献引用颜色
    urlcolor=blue      % URL 链接颜色
}
\usepackage{tcolorbox}
\usepackage{tabularx}
\tcbuselibrary{listings}
\usepackage{subcaption}

\usepackage{caption}
\usepackage{multirow}
\usepackage{wrapfig}
\usepackage[table,dvipsnames]{xcolor} 
\newcommand{\added}[1]{#1}

\usepackage{booktabs}
\usepackage{url}
\usepackage{amsmath}
\usepackage{amssymb}
\usepackage{amsthm}
\usepackage{algorithm}
\usepackage{algorithmicx}
\usepackage{algpseudocode}
\usepackage{makecell} % 导言区
\usepackage{caption} % 导言区引入
\newcommand{\coloneqq}{\mathrel{\mathop:}=}
\usepackage{graphicx}
\newtheorem{theorem}{Theorem}
\newtheorem{lemma}{Lemma}
\newtheorem{definition}{Definition}

\newtheorem{corollary}{Corollary}

\title{SPICE: Submodular Penalized Information–Conflict Selection for Efficient Large Language Model Training}

\author{%
  \parbox{\textwidth}{
  \vspace{1em}
  Powei Chang\footnotemark[1]\quad
  Jinpeng Zhang\footnotemark[1]\quad
  Bowen Chen\quad
  Chenyu Wang\quad
  Chenlu Guo\quad
  Yixing Zhang\quad 
  Yukang Gao\quad
  JianXiang Xiang\quad
  Yue Gao\quad
  Chaoqun Sun\quad
  Yiyi Chen\quad
  Dongying Kong\footnotemark[2]\\[0.6em]
  Bilibili Inc. \; \textsuperscript{*} Equal contribution \quad 
  \footnotemark[2] \ Corresponding author \\
  \mdseries{Correspondence to: \{zhangbowei01,zhangjinpeng01,kongdongying\}@bilibili.com}
  }%
}
\iclrfinalcopy % Uncomment for camera-ready version, but NOT for submission.
\begin{document}

\maketitle
\begin{abstract}
\input{content/00_abstract}
\end{abstract}
\vspace{-1em}
\input{figure/first_figure}
\vspace{-1em}
\section{Introduction}
\input{content/01_introduction_2}

\section{Theoretical Guarantees}
\input{content/03_theoretical_analysis}

\section{Empirical Analysis}
\input{content/04_empirical_analysis}

\section{Methodology}
\input{content/05_method}

\vspace{-0.5em}
\section{Experiments}
\input{content/06_experiments}

\section{Related Work}
\input{content/02_related_work}
\section{Conclusion and Future Work}
\input{content/07_Conclusion}

\clearpage
\bibliography{iclr2026_conference}
\bibliographystyle{iclr2026_conference}
\clearpage

\appendix
\section*{Appendix}

\input{appendix/appendix_AdaFisher}

\input{appendix/appendix_Submodularity}

\input{appendix/appendix_Greedy_classic}

\input{appendix/appendix_Greedy_theorem_2}

\input{appendix/appendix_Bound}

\input{appendix/appendix_empirical_exp_detail}

\input{appendix/appendix_Setup}

\input{appendix/appendix_emp_result}

% \input{appendix/appendix_llm}

\end{document}

%% file: math_commands.tex
\usepackage{amsmath,amsfonts,bm}

\def\eqref#1{equation~\ref{#1}}
\def\1{\bm{1}}

\DeclareMathAlphabet{\mathsfit}{\encodingdefault}{\sfdefault}{m}{sl}
\SetMathAlphabet{\mathsfit}{bold}{\encodingdefault}{\sfdefault}{bx}{n}

%% file: content/00_abstract.tex
Information-based data selection for instruction tuning is compelling: maximizing the log-determinant of the Fisher information yields a monotone submodular objective, enabling greedy algorithms to achieve a $(1-1/e)$ approximation under a cardinality budget. In practice, however, we identify alleviating gradient conflicts, misalignment between per-sample gradients, is a key factor that slows down the decay of marginal log-determinant information gains, thereby preventing significant loss of information. We formalize this via an $\varepsilon$-decomposition that quantifies the deviation from ideal submodularity as a function of conflict statistics, yielding data-dependent approximation factors that tighten as conflicts diminish. Guided by this analysis, we propose SPICE, a conflict-aware selector that maximizes information while penalizing misalignment, and that supports early stopping and proxy models for efficiency. Empirically, SPICE selects subsets with higher log-determinant information than original criteria, and these informational gains translate into performance improvements: across 8 benchmarks with LLaMA2-7B and Qwen2-7B, SPICE uses only 10\% of the data, yet matches or exceeds 6 methods including full-data tuning. This achieves performance improvements with substantially lower training cost.
Code is available at 
% \href{https://anonymous.4open.science/r/SPICE-6DF7/README.md}{anonymous code}.
\href{https://github.com/Chang-pw/SPICE#}{code}.

%% file: figure/first_figure.tex
\begin{figure}[h]
    \centering
    \includegraphics[width=0.75\linewidth]{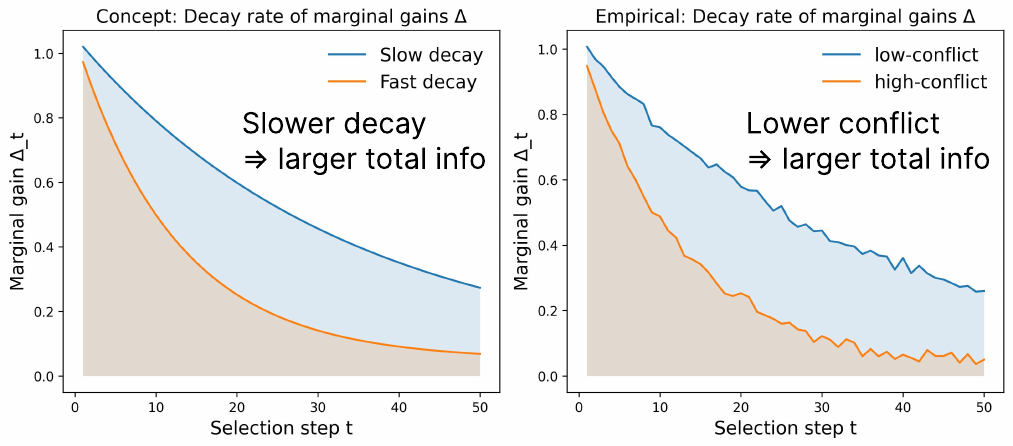}
    \caption{\textbf{(a) Concept.} At step $t$, the marginal information gain $\Delta_t$ is the incremental increase of Fisher Information utility when adding one sample under the current set $S$. \textbf{Slower decay} of $\Delta_t$ yields larger cumulative information under the same budget $k$. \textbf{(b) Empirical.} A low-conflict selection (conflict measured by negative cosine alignment to the mean gradient) exhibits slower decay and thus higher information utility at equal budgets.}
    \label{fig:1}
\end{figure}

% \textbf{(a) Concept.} At Step $t$, the marginal information gain $\Delta_t$ is the incremental increase of $\log \det (I + \alpha F_S) $ when adding one sample under the current set $S$. Slower decay of $\Delta_t$ yields larger cumulative information under the same budget $k$. \textbf{(b) Empirical.} A low-conflict selection (conflict measured by negative cosine alignment to the mean gradient) exhibits slower decay and thus higher information utility at equal budgets.

%% file: content/01_introduction_2.tex
\label{intro}
The remarkable capabilities of large language models (LLMs) have fundamentally transformed natural language processing. However, their performance critically depends on the quality and composition of instruction-tuning data~\citep{r1}. Instruction tuning has emerged as an effective paradigm for improving both performance and alignment by fine-tuning with instruction-response pairs~\citep{chang2023surveyevaluationlargelanguage}. Surprisingly, increasing the amount of training data does not always yield better performance~\citep{r1,zhou2023limaalignment,r6_ifd}. Recent empirical studies reveal a striking phenomenon: in instruction tuning, training on only 10–20\% of the total data can outperform using the full dataset~\citep{random}. This observation raises a fundamental question: \emph{how can we match or surpass full-data tuning performance with less data?}

Data selection has emerged as a promising solution to the high cost of large-scale training. Recent advances demonstrate notable gains: LESS~\citep{r8} attains 90\% of full-data performance with only 5\% of training samples, while SelectIT~\citep{r9} achieves comparable results with 10–20\%. Among these approaches, gradient-based selection methods show great promise. In particular, approaches based on the log-determinant of the Fisher Information Matrix (log-det FIM) offer a principled formulation and certain theoretical guarantees via submodularity~\citep{Fisher,deb2025fishersftdataefficientsupervisedfinetuning}.  However, these methods suffer from heavy computational overhead (e.g., sometimes exceeding 100 GPU-hours) and offer only partial theoretical explanations for their empirical behavior, leaving a noticeable gap between theory and practice.

In information-based selection, this gap is particularly evident in greedy selection with FIM as the objective~\citep{deb2025fishersftdataefficientsupervisedfinetuning}. While theoretical guarantees suggest consistently near-optimal performance, practitioners observe that greedy selection often deteriorates far more rapidly than predicted. In fact, some selections perform substantially worse than the bound implies, and the discrepancy becomes particularly pronounced once the selected subset exceeds a certain threshold. At this point, marginal (information) gain, i.e., the incremental increase in the log-det FIM when adding one sample under a current set, collapse rapidly despite theoretical assurances of gradual diminishing returns. When marginal information gains diminish more slowly, they yield larger total information under a cardinality budget. Such observations naturally prompt a central question: \emph{what causes this theory–practice gap, and how can it be bridged?}

Our key insight addresses the missing factor: the decay rate of marginal information gains. The log-det FIM is indeed submodular~\citep{Submodular}, ensuring diminishing returns. However, its marginal information gains do not decrease uniformly. Instead, the decay rate varies substantially depending on interactions among sample gradients. We demonstrate that this variability is governed by \textbf{gradient conflicts}. These occur when gradients of different samples are poorly aligned~\citep{caggrad}. Importantly, we go beyond prior gradient alignment work by establishing a direct connection between conflicts and submodular data selection theory, rather than focusing solely on training dynamics.

% Crucially, our work differs from existing gradient alignment literature by connecting conflicts directly with submodular data selection theory, rather than focusing solely on training dynamics.

We provide a novel theoretical framework that quantitatively links gradient conflicts to marginal information gains decay. Our analysis splits marginal information gains into a submodular baseline and a perturbation term. We prove that the magnitude of this perturbation, which determines how quickly the decay of marginal information gains, is bounded by the sum of squared gradient inner products. When gradients are highly misaligned, this perturbation grows, accelerating information gains decay. This result explains why greedy approximation guarantees weaken in practice: \emph{conflicts accelerate the erosion of marginal information gains}. Importantly, this does not challenge the submodularity of the log-det FIM, but rather provides a deeper understanding of its practical behavior.

Building on this theoretical framework, we design a conflict-aware greedy selection algorithm: \textbf{SPICE} (\textbf{S}ubmodular \textbf{P}enalized \textbf{I}nformation–\textbf{C}onflict s\textbf{E}lection). SPICE maximizes the log-det FIM objective while adaptively penalizing gradient conflicts. It incorporates a data-driven early stopping rule that selects an appropriate subset size as marginal information gains decay. This design preserves computational efficiency while yielding substantially higher-quality subsets. Finally, we conduct extensive experiments to validate the effectiveness of our approach.

Our contributions can be summarized as follows:

(1) \textbf{Theoretical Framework.} To our knowledge, we establish the first quantitative $\varepsilon$-analysis linking gradient conflicts to the decay of marginal FIM gains. We provide formal bounds suggesting that the effective submodularity curvature, which governs greedy approximation quality, is directly controlled by gradient conflict intensity, bridging the theory-practice gap for subsequent optimization. These theoretical insights are also validated empirically, demonstrating their practical effectiveness.

(2) \textbf{Conflict-Aware Algorithm.} Guided by this theory, we propose SPICE, a greedy selection method that adaptively penalizes conflicts rather than discarding informative samples. SPICE achieves efficient selection complexity of $\mathcal{O}(k|D|d)$ and demonstrates consistent improvements over state-of-the-art baselines across 8 benchmarks while training on only 10\% of the data with a total cost of ~20 GPU-hours. 
Overall, we demonstrate empirically that such larger information translates into practical gains on downstream tasks.

% Our approach provides a general framework that extends beyond data selection to data-efficient training across domains.

Beyond practical gains, our work highlights a central insight: \textbf{gradient conflicts are a key factor underlying the accelerated diminishing returns in data selection}. By making the decay rate of marginal information gains explicit and tractable, we offer both a sharper theoretical understanding and a practical tool for constructing compact, high-quality training sets. This framework opens new directions for understanding and improving data efficiency in large-scale training, even billion-scale LLMs, with potential applications spanning computer vision, reinforcement learning, and multimodal learning.

%% file: content/03_theoretical_analysis.tex
\label{theoretical analysis}

Gradient-based data selection methods frequently fall short of their theoretical guarantees.
% often perform worse than their theoretical guarantees predict.
Existing methods like FisherSFT~\citep{deb2025fishersftdataefficientsupervisedfinetuning} rely on greedy maximization of the FIM, which enjoys strong submodularity guarantees. Yet practitioners observe performance degradation that deviates from the smooth diminishing returns that theory suggests. Our key insight is that sample interactions lead to non-uniform decay in marginal gains, accelerating performance degradation beyond classical bounds. We formalize this through a novel $\varepsilon$-decomposition that explains the theory-practice gap and provides data-dependent bounds for better selection algorithms.

\subsection{Preliminaries}
\label{sec:preliminaries}
We consider data selection from $\mathcal{D} = \{ (x_i, y_i) \}_{i=1}^N$ instruction-response pairs, aiming to find a subset $S \subseteq \mathcal{D}$ with $|S| \leq k$ that maximizes learning efficiency (e.g., downstream performance).

\begin{definition}[Fisher Information Utility]
\label{def:utility}
For each sample $i$, let $g_i = \nabla_\theta \ell((x_i, y_i); \theta)$ denote the gradient vector. The empirical Fisher information matrix (FIM) is $\mathbf{F}_S = \sum_{i \in S} g_i g_i^\top$, and the utility function is:
\begin{equation}
    F(S) = \log \det(\mathbf{I} + \alpha \mathbf{F}_S),
\end{equation}
where $\alpha > 0$ ensures positive definiteness.
\end{definition}
For computational efficiency in large models, we also consider the AdaFisher variant~\citep{gomes2025adafisheradaptivesecondorder} that reduces complexity from $\mathcal{O}(d^2)$ to $\mathcal{O}(d)$ (details in Appendix~\ref{appendix:adafisher}), while preserving the same utility structure.

\begin{definition}[Submodular Function]
\label{def:submodular}
A set function $F:2^{\mathcal{D}}\to\mathbb{R}$ is \emph{submodular} if for all $A\subseteq B\subseteq\mathcal{D}$ and $x\in\mathcal{D}\setminus B$, the marginal gain $\Delta_x(S) \triangleq F(S\cup\{x\}) - F(S)$ satisfies:
\begin{equation}
    \Delta_x(A) \geq \Delta_x(B).
\end{equation}
\end{definition}
This diminishing returns property ensures that greedy algorithms achieve a $(1-1/e)$ approximation to the optimal solution in Theorem~\ref{thm:greedy-classic}~\citep{greedy}.

\subsection{\texorpdfstring{$\varepsilon$}{ε}-Decomposition: Beyond Standard Submodularity}
\label{sec:strict-submodular}

While the Fisher utility $F(S) = \log\det(\mathbf{I} + \alpha \mathbf{F}_S)$ is indeed submodular (proof in Appendix~\ref{sec:proof-submodular}), this standard result alone cannot explain why greedy selection often deteriorates so rapidly in practice. The limitation is that submodularity only guarantees diminishing returns, but says nothing about the \emph{rate} of this decay. 
Our key insight involves \textbf{decomposing each marginal gain into two components}: a modular baseline that captures the intrinsic value of each sample, and a perturbation term that quantifies how sample interactions affect marginal utility.

\begin{definition}[\(\mathrm{base}\) \(\varepsilon\)-decomposition]
\label{def:epsilon_decomposition}
For any \(x\in\mathcal{D}\) and \(S\subseteq\mathcal{D}\), we define the modular baseline
\begin{equation}
\mathrm{base}_x \triangleq \log\big(1+\alpha\|g_x\|^2\big),
\end{equation}
and the perturbation as
\begin{equation}\label{eq:eps-bound}
\varepsilon_x(S)\triangleq \Delta_x(S)-\mathrm{base}_x
= \log\!\frac{1+\alpha\, g_x^\top (I+\alpha F_S)^{-1} g_x}{1+\alpha\|g_x\|^2}.
\end{equation}
\end{definition}

\begin{theorem}[$\varepsilon$-submodularity]
\label{thm:epsilon_submodular}
The $\varepsilon$-decomposition reveals that \textbf{submodularity of $F$ is entirely driven by the perturbation terms}:
for any sets $A \subseteq B \subseteq \mathcal{D}$ and element $x \in \mathcal{D} \setminus B$,
\begin{equation}
\label{eq5}
\Delta_x(A) - \Delta_x(B) = 
\underbrace{[\mathrm{base}_x - \mathrm{base}_x]}_{=0} + 
\underbrace{[\varepsilon_x(A) - \varepsilon_x(B)]}_{\text{captures all diminishing returns}} = 
\varepsilon_x(A) - \varepsilon_x(B) \geq 0.
\end{equation}
\vspace{-1em}
% Since $\varepsilon_x(S) \leq 0$ and decreases as $|S|$ grows, the magnitude $|\varepsilon_x(S)|$ \added{characterizes the cumulative decay of marginal gains via the total curvature}. Larger perturbations lead to faster decay, explaining the rapid performance degradation observed in practice. \hfill\qedsymbol
% \end{theorem}

% This result is crucial: it shows that the practical performance of greedy selection is largely determined by how quickly $|\varepsilon_x(S)|$ grows, not just by the abstract submodularity property. Next, we bound $|\varepsilon_x(S)|$ in terms of gradient inner products.
Since $\varepsilon_x(S) \le 0$ and is non-increasing as $|S|$ grows, \added{its value on a large set $S$
summarizes how much the marginal gain of $x$ has decayed from its baseline at $S=\varnothing$.
Indeed, along any chain $\varnothing = S_0 \subset \cdots \subset S_T = S$ we have
\vspace{-0.2em}
\begin{equation}
\Delta_x(\varnothing) - \Delta_x(S)
= \sum_{t=1}^T \bigl(\Delta_x(S_{t-1}) - \Delta_x(S_t)\bigr)
= \sum_{t=1}^T \bigl(\varepsilon_x(S_{t-1}) - \varepsilon_x(S_t)\bigr)
= -\,\varepsilon_x(S),
\end{equation}
\vspace{-0.8em}
so $|\varepsilon_x(S)|$ is exactly this \textbf{cumulative decay }of the marginal gain of $x$.}
\end{theorem}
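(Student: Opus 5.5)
The plan is to first establish the closed form of the marginal gain and then read off every claim of the theorem from it. Write $M_S \triangleq \mathbf{I}+\alpha\mathbf{F}_S$, which is symmetric positive definite. Since $x\notin S$, we have $\mathbf{F}_{S\cup\{x\}}=\mathbf{F}_S+g_xg_x^\top$. The matrix determinant lemma then gives
\begin{equation*}
\Delta_x(S)=\log\det\bigl(M_S+\alpha g_xg_x^\top\bigr)-\log\det M_S=\log\bigl(1+\alpha\,g_x^\top M_S^{-1}g_x\bigr).
\end{equation*}
This is exactly the identity implicit in equation~\eqref{eq:eps-bound}. With $\mathrm{base}_x=\log(1+\alpha\|g_x\|^2)=\Delta_x(\varnothing)$, the first equality in~\eqref{eq5} is a tautology: $\mathrm{base}_x$ does not depend on the set, so it cancels, leaving $\Delta_x(A)-\Delta_x(B)=\varepsilon_x(A)-\varepsilon_x(B)$.

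The substantive step is the inequality $\varepsilon_x(A)\ge\varepsilon_x(B)$ for $A\subseteq B$. Since $\log(1+\alpha t)$ is increasing in $t$, it suffices to show $g_x^\top M_A^{-1}g_x\ge g_x^\top M_B^{-1}g_x$. We have $M_B-M_A=\alpha\sum_{i\in B\setminus A}g_ig_i^\top\succeq 0$, so $M_B\succeq M_A\succ0$. Matrix inversion is operator-monotone decreasing on positive definite matrices, so $M_A^{-1}\succeq M_B^{-1}$, and the quadratic-form inequality follows.

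The inversion step is the only nonroutine point. I would justify it directly by writing $M_A^{-1/2}M_BM_A^{-1/2}\succeq\mathbf{I}$, which implies that its inverse is $\preceq\mathbf{I}$, and then conjugating back by $M_A^{-1/2}$. Alternatively, one can add the elements of $B\setminus A$ one at a time and use Sherman--Morrison. Each rank-one update replaces $M^{-1}$ by $M^{-1}-\alpha\,\frac{M^{-1}g_ig_i^\top M^{-1}}{1+\alpha g_i^\top M^{-1}g_i}$, which visibly decreases the quadratic form. This also recovers the submodularity result of Appendix~\ref{sec:proof-submodular}, which I may cite instead.

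For the remaining remarks, take $A=\varnothing$. Then $\varepsilon_x(\varnothing)=0$, and the monotonicity just shown gives $\varepsilon_x(S)\le0$ and that $\varepsilon_x$ is non-increasing along inclusion chains. The chain identity follows by telescoping. Along $\varnothing=S_0\subset\cdots\subset S_T=S$, each summand equals $\varepsilon_x(S_{t-1})-\varepsilon_x(S_t)$ by the cancellation of the base term. The sum collapses to $\varepsilon_x(\varnothing)-\varepsilon_x(S)=-\varepsilon_x(S)$, which equals $|\varepsilon_x(S)|$ since $\varepsilon_x(S)\le0$. This holds for $x\notin S$, as assumed throughout.
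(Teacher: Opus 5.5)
Your proposal is correct and follows the paper's own route. The paper's argument in Appendix~\ref{sec:proof-submodular} uses the same ingredients: the matrix-determinant-lemma closed form for $\Delta_x(S)$, the Loewner order $\mathbf{I}+\alpha\mathbf{F}_A \preceq \mathbf{I}+\alpha\mathbf{F}_B$ with order reversal under inversion (which the paper only cites, whereas you also justify it by congruence or Sherman--Morrison), cancellation of the set-independent $\mathrm{base}_x$, and telescoping with $\varepsilon_x(\varnothing)=0$ under the standing assumption $x\notin S$.
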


\subsection{Improved Approximation via \texorpdfstring{$\varepsilon$}{ε}-Control}
\label{sec:approximation}

The $\varepsilon$-decomposition allows improvement beyond the standard $(1-1/e)$ approximation guarantee. The key insight is that controlling the perturbation magnitude $|\varepsilon_x(S)|$ improves the approximation quality through the submodular curvature.

\begin{theorem}[Classical greedy guarantee]
\label{thm:greedy-classic}
Let $F$ be a normalized, monotone submodular function and let $S_{\mathrm{greedy}}$ be the set returned by the greedy algorithm under a cardinality constraint $k$. Then
\begin{equation}
F(S_{\mathrm{greedy}}) \;\ge\; \Big(1-\dfrac{1}{e}\Big)\, F(S^*),
\end{equation}
where $S^*$ is an optimal $k$-subset. The full proof is provided in Appendix~\ref{appendix:proof-greedy-classic}. \hfill\qedsymbol
\end{theorem}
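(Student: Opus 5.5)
We follow the standard Nemhauser--Wolsey--Fisher argument, which uses only monotonicity and submodularity (Definition~\ref{def:submodular}) together with normalization $F(\varnothing)=0$. Let $S_0=\varnothing\subset S_1\subset\cdots\subset S_k=S_{\mathrm{greedy}}$ be the greedy iterates. At each step the algorithm adds an element maximizing the marginal gain, so $S_{i+1}=S_i\cup\{x_{i+1}\}$ with $x_{i+1}\in\arg\max_x \Delta_x(S_i)$. Let $S^*=\{o_1,\dots,o_k\}$ be an optimal $k$-subset.

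The key inequality is that at every step $i$
\begin{equation*}
F(S^*) - F(S_i) \;\le\; k\,\bigl(F(S_{i+1}) - F(S_i)\bigr).
\end{equation*}
We prove it in three moves. First, monotonicity gives $F(S^*)\le F(S^*\cup S_i)$. Next, telescope $F(S^*\cup S_i)-F(S_i)$ by adding $o_1,\dots,o_k$ one at a time to $S_i$. Each increment is $\Delta_{o_j}(S_i\cup\{o_1,\dots,o_{j-1}\})$, and by submodularity with $A=S_i$ this is at most $\Delta_{o_j}(S_i)$. Any $o_j$ already in the running set contributes zero, so it causes no trouble. Finally, by the greedy choice each $\Delta_{o_j}(S_i)\le \Delta_{x_{i+1}}(S_i)=F(S_{i+1})-F(S_i)$. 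Summing the $k$ terms yields the displayed inequality.

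Now set $\delta_i = F(S^*)-F(S_i)$. The inequality rearranges to $\delta_{i+1}\le (1-1/k)\,\delta_i$. Iterating from $\delta_0=F(S^*)-F(\varnothing)=F(S^*)$ gives
\begin{equation*}
\delta_k\le (1-1/k)^k F(S^*)\le e^{-1}F(S^*),
\end{equation*}
using $1-t\le e^{-t}$. Hence $F(S_{\mathrm{greedy}})\ge (1-1/e)F(S^*)$.

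For the instance at hand we also check the hypotheses for $F(S)=\log\det(\mathbf I+\alpha\mathbf F_S)$. Normalization holds because $\log\det \mathbf I=0$. Monotonicity holds because each marginal gain $\log(1+\alpha g_x^\top(\mathbf I+\alpha\mathbf F_S)^{-1}g_x)$ is nonnegative, by the matrix determinant lemma. Submodularity is Appendix~\ref{sec:proof-submodular}, equivalently Theorem~\ref{thm:epsilon_submodular}. The only step needing care is the telescoping bound in the key inequality: the submodularity comparison must be applied with the correct nested pair $S_i\subseteq S_i\cup\{o_1,\dots,o_{j-1}\}$, and elements of $S^*$ already in $S_i$ must be handled. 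Beyond that the argument is routine.
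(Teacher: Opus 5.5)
Your proposal is correct and follows essentially the same Nemhauser--Wolsey--Fisher argument as the paper, ending in the same recursion $R_{t+1}\le(1-1/k)R_t$. The paper packages your telescoping step as $F(S\cup T)-F(S)\le\sum_{y\in T}\Delta_y(S)$ with $T=S^*\setminus S_t$ and averages to find one $y\in S^*$ with $\Delta_y(S_t)\ge R_t/k$, whereas you bound each term directly by the greedy maximum; your added check that $\log\det(\mathbf{I}+\alpha\mathbf{F}_S)$ is normalized and monotone is also correct.
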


This provides as the baseline approximation factor. However, tighter guarantees are possible when the function exhibits small curvature, which motivates our key theoretical contribution.

\begin{theorem}[Curvature-dependent guarantee]
\label{thm:greedy-curvature}
Let $F$ be monotone submodular and normalized. Define the \textbf{total} curvature as:
\begin{equation}
c \triangleq 1 - \min_{x\in\mathcal{D}} \frac{\Delta_x(\mathcal{D}\setminus\{x\})}{\Delta_x(\varnothing)} \in [0,1],
\end{equation}
then the greedy solution $S_{\mathrm{greedy}}$ under budget $k$ satisfies:
\begin{equation}
F(S_{\mathrm{greedy}})\ \ge\ \frac{1 - e^{-c}}{c}\ \cdot\ F(S^\ast).
\end{equation}
When $c=1$ this recovers the classical $(1-1/e)$ bound; as $c \rightarrow 0$ the factor approaches $1$. Intuitively, $c$ measures how much marginal gains can decrease due to element interactions. The proof uses multilinear extension techniques~\citep{v006a011,Conforti} and is detailed in Appendix~\ref{appendix:greedy-curvature-proof}. \hfill\qedsymbol
\end{theorem}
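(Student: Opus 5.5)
We follow the Conforti--Cornuéjols argument, working directly with the discrete greedy sequence rather than the multilinear extension. Let $S_i=\{v_1,\dots,v_i\}$ be the greedy set after $i$ steps, with gains $a_i \triangleq F(S_i)-F(S_{i-1})$, and let $S^*$ be an optimal $k$-subset. The goal is a family of linear inequalities relating $F(S^*)$ to $a_1,\dots,a_k$, in which the curvature $c$ weakens the submodular loss caused by elements already picked by greedy.

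\textbf{Step 1 (curvature lower bound on marginal gains).} By the definition of $c$ and submodularity, for every $x$ and every $T\subseteq\mathcal{D}\setminus\{x\}$ we have $\Delta_x(T)\ge\Delta_x(\mathcal{D}\setminus\{x\})\ge (1-c)\Delta_x(\varnothing)$. Chaining this over the elements of a set gives, for disjoint sets $A,B$,
\[
F(A\cup B)\ \ge\ F(B)+(1-c)\sum_{x\in A}\Delta_x(\varnothing)\ \ge\ F(B)+(1-c)\,F(A),
\]
where the last step uses normalization and subadditivity.

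\textbf{Step 2 (the key inequality at each greedy step $i$).} Monotonicity and submodularity give
\[
F(S^*)\ \le\ F(S^*\cup S_{i-1})\ \le\ F(S_{i-1}) + \sum_{x\in S^*\setminus S_{i-1}}\Delta_x(S_{i-1})\ \le\ F(S_{i-1})+k\,a_i,
\]
since greedy picks the largest marginal gain. This alone yields only $1-1/e$. To sharpen it, we refine the first step. Write $F(S^*\cup S_{i-1})=F(S^*)+\sum_j \Delta_{v_j}(S^*\cup S_{j-1})$ and use Step 1 to bound each term as $\Delta_{v_j}(\cdot)\ge(1-c)a_j$. Since $F(S^*\cup S_{i-1})\le\sum_{j<i}a_j+k\,a_i$, this gives
\[
F(S^*)\ \le\ c\sum_{j<i}a_j + k\,a_i,\qquad i=1,\dots,k.
\]
Elements of $S^*\cap S_{i-1}$ need care in this accounting, but they only help.

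\textbf{Step 3 (the LP/recurrence).} Let $b_i=\sum_{j\le i}a_j$. The inequalities $F(S^*)\le c\,b_{i-1}+k(b_i-b_{i-1})$ give the recursion $b_i\ge (1-c/k)\,b_{i-1}+F(S^*)/k$. Solving it yields
\[
b_k\ \ge\ \frac{1-(1-c/k)^k}{c}\,F(S^*)\ \ge\ \frac{1-e^{-c}}{c}\,F(S^*).
\]
The limiting cases are immediate: $c=1$ recovers Theorem~\ref{thm:greedy-classic}, and $(1-e^{-c})/c\to1$ as $c\to0$.

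\textbf{Main obstacle.} The hard part is Step 2, namely justifying $\Delta_{v_j}(S^*\cup S_{j-1})\ge(1-c)a_j$. The curvature bound compares against $\Delta_{v_j}(\varnothing)$, not against the greedy gain $a_j=\Delta_{v_j}(S_{j-1})$. Since $\Delta_{v_j}(\varnothing)\ge a_j$ by submodularity, the direction is nevertheless favorable: $\Delta_{v_j}(S^*\cup S_{j-1})\ge(1-c)\Delta_{v_j}(\varnothing)\ge(1-c)a_j$. I would also check the case $v_j\in S^*$ separately; there the term is zero, but the corresponding optimal element no longer appears in the sum bounded by $k\,a_i$, so the bound is preserved.
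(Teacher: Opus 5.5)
Your Step~2 inequality $F(S^*)\le c\sum_{j<i}a_j+k\,a_i$ fails as soon as greedy has picked an element of $S^*$. Your justification for that case runs the wrong way.

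If $v_j\in S^*$ for some $j<i$, the decomposition of $F(S^*\cup S_{i-1})$ loses the lower-bound contribution $(1-c)a_j$, because that increment is $0$. Meanwhile, shrinking $S^*\setminus S_{i-1}$ by one element saves only one copy of $a_i$ in the upper bound. Greedy gains are nonincreasing ($a_j\ge a_i$ for $j<i$), so $(1-c)a_j$ can be much larger than $a_i$, and the trade goes against you.

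A two-element counterexample: take $\mathcal{D}=\{x_1,x_2\}$, $k=2$, $F(\{x_1\})=10$, $F(\{x_2\})=1$, $F(\{x_1,x_2\})=10.5$. This $F$ is normalized, monotone and submodular with $c=1/2$. Greedy gives $a_1=10$, $a_2=1/2$, and $S^*=\{x_1,x_2\}$.
\begin{itemize}
\item At $i=2$ your inequality reads $10.5\le 5+1$, which is false.
\item Your Step~3 recursion would force $b_2\ge \tfrac34\cdot 10+\tfrac{10.5}{2}=12.75$, while in fact $b_2=10.5$.
\end{itemize}
The theorem itself holds here, since greedy is optimal, but your chain of inequalities does not.

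What your accounting actually yields, with $m_i=|S^*\cap S_{i-1}|$, is
\[
F(S^*)\ \le\ \sum_{j<i,\ v_j\in S^*}a_j\;+\;c\sum_{j<i,\ v_j\notin S^*}a_j\;+\;(k-m_i)\,a_i .
\]
This is the form Conforti and Cornu\'ejols work with. The missing idea is a separate argument that these weaker constraints still force $\sum_{j\le k}a_j\ge\frac{1-(1-c/k)^k}{c}F(S^*)$, i.e.\ that the disjoint configuration is the worst case.

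One way to supply it:
\begin{itemize}
\item Set $w(n,\kappa)=\frac{1}{c}\bigl(1-(1-c/\kappa)^n\bigr)$ and $R_i=F(S^*)-\sum_{j<i,\,v_j\in S^*}a_j-c\sum_{j<i,\,v_j\notin S^*}a_j$.
\item Prove by backward induction on $i$ that $\sum_{j\ge i}a_j\ge w(k-i+1,\,k-m_i)\,R_i$. The case $R_i\le 0$ is trivial; otherwise use $a_i\ge R_i/(k-m_i)$.
\item A step with $v_i\notin S^*$ reduces to your recursion via $w(n,\kappa)=\frac1\kappa+\bigl(1-\frac c\kappa\bigr)w(n-1,\kappa)$.
\item A step with $v_i\in S^*$ needs the elementary inequality $\frac1\kappa+\frac{\kappa-1}{\kappa}\,w(n-1,\kappa-1)\ge w(n,\kappa)$ for $n\le\kappa$. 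This can be verified by comparing consecutive differences in $n$.
\end{itemize}
As written, Step~3 is proved only when $S_{\mathrm{greedy}}\cap S^*=\varnothing$.

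The paper instead argues through the multilinear extension, continuous greedy and rounding. That route bounds a rounded fractional solution rather than the discrete greedy set. Your discrete route is therefore the natural one for the statement as written, once this overlap step is added.
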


% Our main result connects the $\varepsilon$-decomposition to curvature control:
Our main result links the  $\varepsilon$-decomposition to control of the submodular curvature.

\begin{lemma}[Curvature upper bound via $\varepsilon$]
\label{lem:curvature-from-eps}
For the Fisher utility with the decomposition $\Delta_x(S)=\mathrm{base}_x+\varepsilon_x(S)$ and $\mathrm{base}_x=\Delta_x(\varnothing)$, we assume $\mathrm{base}_x>0$ for all $x$ and $x\notin S$. Then:
\begin{equation}
c = 1 - \min_x \frac{\mathrm{base}_x+\varepsilon_x(\mathcal{D}\setminus\{x\})}{\mathrm{base}_x}  = - \min_x \frac{\varepsilon_x(\mathcal{D}\setminus\{x\})}{\mathrm{base}_x}  \le \max_x \frac{|\varepsilon_x(\mathcal{D}\setminus\{x\})|}{\mathrm{base}_x}.
\end{equation}
\added{In particular, the total curvature $c$ is governed by the
\textbf{worst-case} normalized perturbation $|\varepsilon_x(D \setminus \{x\})| / \mathrm{base}_x$,
so larger perturbations on the full set $D \setminus \{x\}$ correspond to \textbf{larger curvature
and hence weaker greedy guarantees}.}

% In particular, since $\varepsilon_x(S) \le 0$, smaller $|\varepsilon|$ implies smaller $c$ and hence a stronger greedy factor $(1-e^{-c})/c$.
% By applying the identity above with $S = D \setminus \{x\}$, we obtain
% $\Delta_x(\varnothing) - \Delta_x(D \setminus \{x\}) = -\,\varepsilon_x(D \setminus \{x\})$,
% so the numerator in the curvature definition is exactly $|\varepsilon_x(D \setminus \{x\})|$,
% which shows that $c$ is controlled by the worst-case normalized perturbation
% $|\varepsilon_x(D \setminus \{x\})| / \mathrm{base}_x$.
\end{lemma}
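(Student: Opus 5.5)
The plan is to prove the chain in the statement one relation at a time, using only Definition~\ref{def:epsilon_decomposition} and the curvature definition in Theorem~\ref{thm:greedy-curvature}.

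\textbf{Step 1: the baseline equals the empty-set gain.} Since $F(\varnothing)=\log\det\mathbf{I}=0$, the matrix determinant lemma gives
\[
\Delta_x(\varnothing)=\log\det(\mathbf{I}+\alpha g_xg_x^\top)=\log(1+\alpha\|g_x\|^2)=\mathrm{base}_x .
\]
Equivalently, $\varepsilon_x(\varnothing)=0$ by \eqref{eq:eps-bound}. The hypothesis $\mathrm{base}_x>0$, i.e.\ $g_x\neq 0$, makes every ratio well defined.

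\textbf{Step 2: the first two equalities.} Substituting $\Delta_x(\mathcal{D}\setminus\{x\})=\mathrm{base}_x+\varepsilon_x(\mathcal{D}\setminus\{x\})$ and $\Delta_x(\varnothing)=\mathrm{base}_x$ into the definition of $c$ gives the first equality. Writing each ratio as $1+\varepsilon_x(\mathcal{D}\setminus\{x\})/\mathrm{base}_x$ and using $\min_x(1+a_x)=1+\min_x a_x$ gives the second.

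\textbf{Step 3: the inequality.} The step needing an actual argument is the sign $\varepsilon_x(S)\le 0$. I would take it from Theorem~\ref{thm:epsilon_submodular} with $A=\varnothing$, $B=S$: this gives $\varepsilon_x(S)\le\varepsilon_x(\varnothing)=0$. Alternatively, it follows directly because $(\mathbf{I}+\alpha\mathbf{F}_S)^{-1}\preceq\mathbf{I}$, so $g_x^\top(\mathbf{I}+\alpha\mathbf{F}_S)^{-1}g_x\le\|g_x\|^2$ and the logarithm in \eqref{eq:eps-bound} is nonpositive.

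With this sign, $-\varepsilon_x/\mathrm{base}_x=|\varepsilon_x|/\mathrm{base}_x$, so
\[
-\min_x\frac{\varepsilon_x(\mathcal{D}\setminus\{x\})}{\mathrm{base}_x}=\max_x\frac{|\varepsilon_x(\mathcal{D}\setminus\{x\})|}{\mathrm{base}_x}.
\]
This is in fact an equality. Even without the sign information, $-\min_x a_x=\max_x(-a_x)\le\max_x|a_x|$ holds trivially, so the stated ``$\le$'' follows in any case.

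\textbf{Step 4: the range of $c$.} Since $\varepsilon_x\le 0$ and $\Delta_x\ge 0$ (from $1+\alpha g_x^\top(\mathbf{I}+\alpha\mathbf{F}_S)^{-1}g_x\ge 1$), each normalized perturbation lies in $[0,1]$, which confirms $c\in[0,1]$. I expect no real obstacle; the only care needed is justifying the sign of $\varepsilon$ and the well-definedness of the ratios.
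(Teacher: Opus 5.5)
Your proposal is correct and follows essentially the paper's argument: substitute $\Delta_x(\mathcal{D}\setminus\{x\})=\mathrm{base}_x+\varepsilon_x(\mathcal{D}\setminus\{x\})$ and $\Delta_x(\varnothing)=\mathrm{base}_x$ into the curvature definition, then use $\varepsilon_x\le 0$ (which the paper gets from Theorem~\ref{thm:epsilon_submodular}, equivalently the telescoping identity $\Delta_x(\varnothing)-\Delta_x(S)=-\varepsilon_x(S)$) to replace $-\varepsilon_x$ by $|\varepsilon_x|$. Your extra observations are also correct: the final ``$\le$'' is in fact an equality, it holds trivially even without the sign of $\varepsilon_x$, and $\Delta_x\ge 0$ gives $c\in[0,1]$.
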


This establishes that controlling $|\varepsilon_x(S)|$ can improve approximation guarantees by reducing the curvature. We next bound $|\varepsilon_x(S)|$ in terms of gradient inner products:

\begin{theorem}[Perturbation bound via gradient alignment]
\label{thm:eps-bound}
Under standard assumptions\footnote{\added{For the assumption, it is used only in App.~\ref{sec:proof-eps-bound} for the Neumann-series closed-form bound; it is not required for our submodularity or $\varepsilon$-decomposition.  The boundary holds in 90\% of steps and} the breaking may still arise in practice (10\%), see App. ~\ref{appendix: boundary} for detailed analysis.}, the perturbation term satisfies:
\begin{equation}
|\varepsilon_x(S)| \leq C \cdot \frac{\alpha^2\sum_{y\in S} (g_x^\top g_y)^2}{1+\alpha\|g_x\|^2}, \text{   with   } x \notin S,
\end{equation}
where $C$ is a problem-dependent constant and the proof is provided in Appendix~\ref{sec:proof-eps-bound}. \hfill\qedsymbol
\end{theorem}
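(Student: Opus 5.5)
We start from the closed form in Definition~\ref{def:epsilon_decomposition}. Write $M_S \triangleq \alpha F_S$ and $a \triangleq \alpha\|g_x\|^2$. The idea is to compare the quadratic form $g_x^\top (I+M_S)^{-1} g_x$ with $\|g_x\|^2$. Since $M_S\succeq 0$, we have $(I+M_S)^{-1} \preceq I$, so $0\le q \triangleq \alpha\|g_x\|^2 - \alpha g_x^\top (I+M_S)^{-1} g_x$. This gives
\begin{equation}
|\varepsilon_x(S)| = -\log\Bigl(1 - \frac{q}{1+a}\Bigr).
\end{equation}
The proof therefore reduces to bounding $q/(1+a)$ and then linearizing the logarithm.

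\textbf{Step 1 (resolvent identity).} We use $I-(I+M)^{-1} = (I+M)^{-1}M$, which gives $q = \alpha\, g_x^\top (I+M_S)^{-1} M_S\, g_x$. Here we invoke the ``standard assumption'' of the footnote, namely a spectral condition $\|M_S\|<1$, or more generally $\|M_S\|\le \rho<1$. Under it the Neumann series $(I+M_S)^{-1}=\sum_{j\ge0}(-M_S)^j$ converges. A cruder but cleaner option is the bound $(I+M)^{-1}M \preceq M$, valid because $M$ and $(I+M)^{-1}$ commute and $t/(1+t)\le t$. Either route yields
\begin{equation}
q \le \alpha\, g_x^\top M_S g_x = \alpha^2 \sum_{y\in S} (g_x^\top g_y)^2 .
\end{equation}
The identity $g_x^\top F_S g_x = \sum_{y\in S}(g_x^\top g_y)^2$ is exactly where the gradient inner products enter.

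\textbf{Step 2 (linearizing the log).} Set $u = q/(1+a)\in[0,1)$. We use $-\log(1-u)\le u/(1-u)$. If the assumption ensures $u\le u_0<1$, then $-\log(1-u)\le C u$ with $C = 1/(1-u_0)$. One sufficient condition is $\|M_S\|\le\rho$; under it $q\le \rho\, a$, so $u\le \rho a/(1+a)\le\rho$ and $C=1/(1-\rho)$ works. Combining with Step 1 gives
\begin{equation}
|\varepsilon_x(S)| \le C\,\frac{\alpha^2\sum_{y\in S}(g_x^\top g_y)^2}{1+\alpha\|g_x\|^2},
\end{equation}
which is the statement. The hypothesis $x\notin S$ is needed so that $\Delta_x(S)$ has the rank-one update form $\log(1+\alpha g_x^\top (I+M_S)^{-1}g_x)$ via the matrix determinant lemma.

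\textbf{Main obstacle.} The delicate point is controlling $u$ away from $1$ uniformly, since this is what makes $C$ a genuine constant. Without a spectral or alignment assumption, $u$ can approach $1$, for instance when $g_x$ lies in a direction heavily covered by $S$. Then $-\log(1-u)$ blows up, and no linear-in-$u$ bound with a fixed $C$ holds. I would first try to state the assumption as $\|\alpha F_S\|_{\mathrm{op}}\le\rho<1$, which matches the Neumann-series remark in the footnote. As a fallback, I would allow $C$ to depend on $\lambda_{\max}(\alpha F_S)$, using the sharper bound $q\le \frac{\lambda}{1+\lambda}a$ to get $C = 1+\lambda_{\max}$. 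This is consistent with the footnote's admission that the bound can fail empirically when the condition breaks.
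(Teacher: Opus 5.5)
Your proof is correct, but it reaches the bound by a different and tighter route than the paper.

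The paper expands $(I+\alpha F_S)^{-1}$ as a Neumann series. It isolates the first-order term $-\alpha^2\Sigma_x(S)$, with $\Sigma_x(S)=\sum_{y\in S}(g_x^\top g_y)^2$, and bounds the alternating tail in absolute value via $A^m\preceq\|A\|^{m-1}A$. Because this discards the sign of the tail, it only obtains $|u|\le \alpha^2\Sigma_x(S)/\bigl((1-\rho)D_x\bigr)$, where $D_x=1+\alpha\|g_x\|^2$. It then uses the signless inequality $|\log(1+u)|\le |u|/(1-|u|)$, so it needs a second hypothesis to keep $|u|$ away from $1$: bounded gradients with $\alpha G_{\max}^2\rho/(1-\rho)<1$. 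This yields $C=1/(1-\rho-\alpha G_{\max}^2\rho)$.

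Your identity $I-(I+M)^{-1}=(I+M)^{-1}M\preceq M$ keeps the sign. It gives $0\le u\le \alpha^2\Sigma_x(S)/D_x$ for your $u=q/(1+a)$ with no assumption at all. The spectral condition then enters only once, to keep $u$ away from $1$. You get $C=1/(1-\rho)$, or even $1+\rho$ since $u<\rho/(1+\rho)$, and you never need $G_{\max}$.

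Your fallback is stronger than you present it. Since $\lambda_{\max}(\alpha F_S)\le\alpha\|F_{\mathcal{D}}\|$ for every $S\subseteq\mathcal{D}$, the constant $C=1+\alpha\|F_{\mathcal{D}}\|$ is a legitimate problem-dependent constant. So the statement holds unconditionally, and the ``main obstacle'' only bites if $C$ must be independent of the data.

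The same functional calculus also gives $(I+M)^{-1}M\succeq M/(1+\lambda_{\max})$. Hence you get a matching lower bound $|\varepsilon_x(S)|\ge u\ge \alpha^2\Sigma_x(S)/\bigl((1+\lambda_{\max})D_x\bigr)$. This two-sided control is what actually supports the later proportionality $|\varepsilon_x(S)|\propto\sum_{y\in S}(g_x^\top g_y)^2$.

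The paper's Neumann-series route buys mainly interpretation: it displays $-\alpha^2\Sigma_x(S)/D_x$ explicitly as the leading-order term of $\varepsilon_x(S)$.
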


\begin{corollary}[Data-dependent approximation guarantee]
\label{cor:curvature-plug}
Combining the above results, greedy selection achieves:
\begin{equation}
F(S_{\mathrm{greedy}}) \geq \frac{1-e^{-\widehat{c}}}{\widehat{c}} \cdot F(S^*),
\end{equation}
where $\widehat{c} \propto \max_{x} \sum_{y\neq x} (g_x^\top g_y)^2$. Smaller inner products yield better approximation guarantees.
\end{corollary}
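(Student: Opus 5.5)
The plan is to chain the three preceding results: Lemma~\ref{lem:curvature-from-eps}, then Theorem~\ref{thm:eps-bound}, then Theorem~\ref{thm:greedy-curvature}. The one technical fact needed is that the greedy factor $\phi(c)\triangleq (1-e^{-c})/c$ is non-increasing in $c$ on $(0,\infty)$. Once that holds, any upper bound $\widehat c \ge c$ gives $\phi(c)\ge\phi(\widehat c)$, so we get $F(S_{\mathrm{greedy}})\ge \phi(c)F(S^*)\ge \phi(\widehat c)F(S^*)$.

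\textbf{Step 1 (normalization and monotonicity).} Check that $F(\varnothing)=\log\det \mathbf{I}=0$, so $F$ is normalized. Monotonicity follows from $\Delta_x(S)=\log(1+\alpha g_x^\top(\mathbf{I}+\alpha\mathbf{F}_S)^{-1}g_x)\ge 0$. Submodularity is the appendix result cited before Theorem~\ref{thm:epsilon_submodular}. Together these meet the hypotheses of Theorem~\ref{thm:greedy-curvature}, which yields the factor $\phi(c)$.

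\textbf{Step 2 (curvature bound).} Apply Lemma~\ref{lem:curvature-from-eps} to get $c\le \max_x |\varepsilon_x(\mathcal{D}\setminus\{x\})|/\mathrm{base}_x$. Then apply Theorem~\ref{thm:eps-bound} with $S=\mathcal{D}\setminus\{x\}$, which is allowed because $x\notin S$. This gives
\begin{equation*}
c\;\le\;\max_x \frac{C\,\alpha^2\sum_{y\neq x}(g_x^\top g_y)^2}{(1+\alpha\|g_x\|^2)\log(1+\alpha\|g_x\|^2)}\;\le\;\frac{C\alpha^2}{m}\max_x\sum_{y\neq x}(g_x^\top g_y)^2 \;\triangleq\;\widehat c ,
\end{equation*}
where $m\triangleq\min_x (1+\alpha\|g_x\|^2)\log(1+\alpha\|g_x\|^2)>0$ by the assumption $\mathrm{base}_x>0$. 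This $\widehat c$ is proportional to $\max_x\sum_{y\neq x}(g_x^\top g_y)^2$, with a constant depending only on $C$, $\alpha$ and the minimal gradient norm, as the corollary states.

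\textbf{Step 3 (monotonicity of $\phi$).} Write $\phi(c)=\int_0^1 e^{-cs}\,ds$. The integrand decreases in $c$ for each $s\in[0,1]$, so $\phi$ is non-increasing, with $\phi(0^+)=1$ and $\phi(1)=1-1/e$. If $\widehat c>1$, use $\min\{\widehat c,1\}$ instead, since the true $c\le 1$ anyway; the bound then degrades gracefully to the classical guarantee of Theorem~\ref{thm:greedy-classic}.

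\textbf{Main obstacle.} The fragile step is Step 2. Theorem~\ref{thm:eps-bound} relies on the Neumann-series assumption, and here it must hold uniformly at the largest set $S=\mathcal{D}\setminus\{x\}$, where $\alpha\mathbf{F}_S$ is least likely to have small norm. I would make this explicit as a hypothesis of the corollary; otherwise the statement can only be read as "$\propto$" in a heuristic sense. Everything else is a direct composition of already-stated results.
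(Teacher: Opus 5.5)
Your proposal is correct and follows the paper's route: the paper likewise chains Lemma~\ref{lem:curvature-from-eps} with Theorem~\ref{thm:eps-bound} at $S=\mathcal{D}\setminus\{x\}$ and plugs the result into Theorem~\ref{thm:greedy-curvature}. Its appendix states $\widehat c=\max_x \frac{C\alpha^2\sum_{y\neq x}(g_x^\top g_y)^2}{(1+\alpha\|g_x\|^2)\log(1+\alpha\|g_x\|^2)}$, which is your intermediate quantity before you pull out $m$; your explicit proof that $(1-e^{-c})/c$ is non-increasing, and your flagging that the Neumann condition must hold uniformly at $\mathcal{D}\setminus\{x\}$, fill in steps the paper leaves implicit, and your extra relaxation by $m$ is looser but makes the ``$\propto$'' literally true.
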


Our $\varepsilon$-decomposition framework reveals that sample interactions, quantified by gradient inner products $|\varepsilon_x(S)| \propto \sum_{y \in S} (g_x^\top g_y)^2$, are a major driver of performance degradation in greedy selection. \textbf{Smaller gradient inner products lead to smaller perturbations, lower curvature, and stronger approximation guarantees.} This provides a principled foundation for designing selection algorithms that control sample interactions to achieve better performance.

%% file: content/04_empirical_analysis.tex
\label{sec:empirical analysis}
Our theoretical analysis predicts that gradient inner products $(g_x^\top g_y)^2$ upper-bound and are strongly associated with the perturbation magnitude $|\varepsilon_x(S)|$ and influence approximation quality via curvature. We now empirically access these predictions by measuring gradient conflicts and their impact on data selection performance.

% \subsection{Gradient Conflict Measurement and Validation}

% To validate our theory, we define gradient conflict using cosine-based alignment metrics that capture the intuition behind gradient inner products. For gradients $\{g_i\}_{i=1}^m$ with mean $\bar g = \frac{1}{m}\sum_{i=1}^m g_i$, we define:

To access the theory, we quantify gradient conflict via a cosine alignment against a selection-adaptive reference that capture the intuition behind gradient inner products. At iteration $t$, for gradients $\{g_i\}_{i=1}^m$, from the current candidate set (size $m$), with mean $\bar g_t=\frac{1}{|S_{t-1}|}\sum_{x\in S_{t-1}} g_x$, we define:

\begin{definition}[Gradient Conflict]
\label{def:conflict}
Alignment is measured as cosine similarity, and conflict quantifies the degree of opposition to the mean direction:
\begin{equation}
\textit{Align}(g_i) = \frac{g_i^\top \bar g}{\|g_i\|\|\bar g\| + \eta}, \textit{Conflict}(g_i) = \max\{0, -\textit{Align}(g_i)\},
\end{equation}
where $\eta = 10^{-8}$ ensures numerical stability~\citep{yu2020gradientsurgerymultitasklearning,liu2024conflictaversegradientdescentmultitask}.
\end{definition} 
\input{figure/emp_exp}
Figure~\ref{fig:empirical exp} (a) reveals that conflicting gradients are prevalent and often carry substantial Fisher information, necessitating a principled trade-off between information gain and conflict minimization.
\paragraph{Conflict-driven utility decay.} 
Splitting samples by conflict level (top/bottom 20\%), we observe marked differences (Appendix~\ref{f.2}) in Fisher-greedy performance. Low-conflict groups achieve significantly higher cumulative utility and slower marginal decay, while high-conflict groups exhibit rapid diminishing returns (Figure~\ref{fig:empirical exp} (b)). Notably, marginal gains reach half-life (50\% of the first-step $\Delta$) within 10-30 steps, suggesting early stopping strategies for practical efficiency.

\paragraph{Quantitative correlation analysis.}
Direct correlation analysis validates our core theoretical predictions. Step-wise Spearman correlations reveal strong negative correlation between conflict and marginal gain ($\rho = -0.792$) and strong positive correlation between conflict and perturbation magnitude $|\varepsilon_x(S)|$ ($\rho = 0.901$), as shown in Figure~\ref{fig:empirical exp} (c). These results provide direct support for Corollary~\ref{cor:curvature-plug} and demonstrate consistency across multiple datasets.

These empirical results strongly validate our $\varepsilon$-decomposition framework: \textbf{gradient conflicts systematically drive perturbation growth and accelerate marginal utility decay.} Building on this validated understanding, we next develop SPICE, a conflict-aware selection algorithm that explicitly balances information gain with conflict minimization. To more comprehensively demonstrate the validity of our theoretical analysis, we conducted additional detailed experiments (Appendix~\ref{appendix:emp}).

%% file: figure/emp_exp.tex
\begin{figure}
    \centering
    \includegraphics[width=1\linewidth]{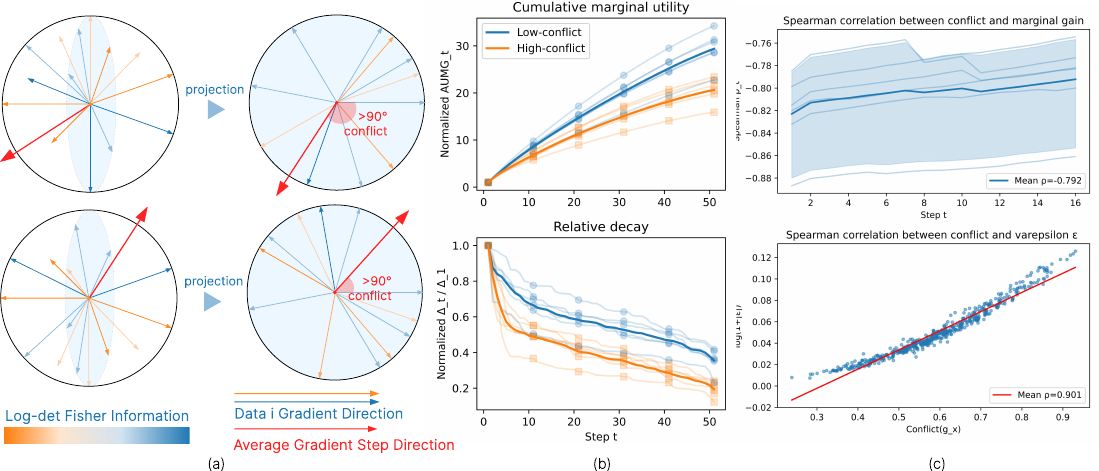}
    \caption{(a) Gradient Visualization: The direction and information distribution of two representative data points in gradient 2D and 3D spaces, indicating that there are always gradient conflicts with high information in the data; (b)  Decay of the high/low conflict subsets: Low conflicts can bring greater information in a limited number of steps; (c) Spearman correlation analysis: The overall conflict is negatively correlated with marginal gain $\Delta$, and positively correlated with $\varepsilon$.}
    \label{fig:empirical exp}
\end{figure}

%% file: content/05_method.tex
\label{sec:methodology}

\begin{wrapfigure}{r}{0.5\textwidth}
\vspace{-25pt}
\begin{minipage}{0.48\textwidth}
\begin{algorithm}[H]
    \caption{SPICE Data Selection}
    \label{alg:SPICE}
    \begin{algorithmic}[1]
        \Require Dataset $\mathcal{D}$, conflict penalty $\lambda \geq0$, budget $k$
        \Ensure Selected subset $S$
        \State Initialize: $S \leftarrow \emptyset$
        \State Compute gradients $\{g_i\}$ using proxy model
        \For{$t = 1$ to $k$}
            \For{each $x \in \mathcal{D} \setminus S$}
                \State $\Delta_x \leftarrow$ \textsc{ fim\_marginal}$(x, S)$
                \State $\text{conflict} \leftarrow \max\{0, -\text{cosine\_similarity}(g_x, \bar{g}_S)\}$
                \State $\text{score}(x) \leftarrow \Delta_x - \lambda \cdot \text{conflict}$
            \EndFor
            \State $x^* \leftarrow \arg\max_{x} \text{score}(x)$
            \If{$\Delta_x$ reaches the early stopping criteria}
                \State \textbf{break}
            \EndIf
            \State $S \leftarrow S \cup \{x^*\}$
        \EndFor
        \State \Return $S$
    \end{algorithmic}
\end{algorithm}
\end{minipage}
\vspace{-10pt}
\end{wrapfigure}

We propose a practical pipeline that operationalizes the theoretical insights in Section~\ref{theoretical analysis} into a conflict-aware data selection method. The core idea is to perform conflict-aware greedy selection that balances log-det Fisher information gain and gradient alignment. In addition to fixed greedy budget $k$, we also employ an adaptive early-stopping criteria (Algorithm~\ref{alg:SPICE}).

\subsection{Conflict-aware greedy score}
At iteration $t$ with current set $S_{t-1}$ and per-sample gradients $\{g_x\}$ computed at a fixed checkpoint, we define that the conflict penalty increases when the sample points against $\bar g$; A simple and effective choice is to apply the hinge function to the negative alignment (Def.~\ref{def:conflict}): $
    \mathrm{conflict}(x\,|\,S_{t-1}) \triangleq \max\big\{0,\, -\mathrm{align}(x\,|\,S_{t-1})\big\}$.
 Let $\Delta_x(S_{t-1})$ be the Fisher marginal (Def.~\ref{def:submodular}). We score each candidate $x$ by

 \vspace{-1.0em}
 \begin{equation}\label{eq:score}
    \mathrm{score}(x\,|\,S_{t-1}) \;=\; \Delta_x(S_{t-1})\; -\; \lambda\, \mathrm{conflict}(x\,|\,S_{t-1}), \qquad \lambda\ge 0.
\end{equation}

Intuitively, the first term favors high-information samples, while the second discourages strong directional conflicts with the current update. Crucially, we do not discard high-value but conflicting samples: as long as $\Delta_x(S_{t-1})$ is sufficiently large, the conflict-aware greedy score can remain competitive and select them. This mirrors our theory: penalizing conflicts tends to reduce perturbation $|\varepsilon|$ and curvature (Theorem~\ref{thm:eps-bound}, Corollary~\ref{cor:curvature-plug}) without eliminating informative conflict-direction data. 
% \vspace{}

In Figure~\ref{fig:method_new}, we provide a more intuitive visualization. When training with standard greedy search ($\lambda$=0) that targets only high information gain, the step gradient directions often conflict, which is not indicative of stable convergence or performance improvement.

% When $\lambda=0$, algorithm reduces to standard greedy search using log-det FIM as the  objective.

% 我们在图～中画了一个更加直观的图，从直觉上看如果只以高信息量为目的的贪心搜索进行训练时，the step gradient directions often conflict, 这并不是一个稳定收敛带来提升的表现

\subsection{Greedy Stopping Criteria}
\label{stopping}
Let $x_t = \arg\max_{x\in \mathcal{D}\setminus S_{t-1}} \mathrm{score}(x\,|\,S_{t-1})$ and $m_t \triangleq \Delta_{x_t}(S_{t-1})$ be its marginal gain $\Delta$. Our method supports two stopping criteria to accommodate different experimental requirements: an absolute threshold $k$ on the best marginal $\Delta_x$, or adaptive rule with $\omega \in (0,1)$.

\paragraph{Adaptive Early Stopping (Data-driven).} 
Instead of fixing the greedy budget $k$, we stop adaptively once diminishing returns are observed. 
Let $\Delta_{x_1}(S_0)$ be the marginal gain of the first selected sample $x_1$. 
We stop at the first step $t_{\text{stop}}$ where the current marginal gain satisfies\begin{equation}
    t_{\text{stop}} 
    = \min \Bigl\{ t : \Delta_{x_t}(S_{t-1}) \le \omega \cdot \, \Delta_{x_1}(S_0) \Bigr\},
\end{equation}
i.e., once the marginal contribution of adding a new sample falls below $\omega$ times the first-step gain. 
This ensures that selection terminates when additional samples contribute little new information, 
so the effective budget $k_{\text{eff}} = t_{\text{stop}} \le k$.
\vspace{-0.5em}
\paragraph{Fixed Budget Stopping (Budget-controlled).} 
For fair comparison with baselines, we also support a fixed-budget mode: selection continues until exactly $k$ samples are chosen, maximizing information utility under the budget.
This ensures consistent dataset sizes across different selection strategies while still benefiting from our conflict-aware scoring mechanism.

We report fixed-budget (equal-$k$) results in the main experiment in Section~\ref{main result} for fairness; the adaptive rule \textbf{SPICE+} (with $\omega=0.5$) and its equal-budget reference appear in Appendix~\ref{appendix:ick+}.

\subsection{Efficient Implementation Selection}
\label{method:efficient}
Due to computational constraints of performing gradient-based selection directly with large-scale models during training, we adopt a proxy-based selection approach. Specifically:

\paragraph{Proxy Model Selection.} We employ a smaller model (e.g., a 0.5B model) with the same architecture as our target models (LLaMA2-7B~\citep{touvron2023llama2openfoundation} and Qwen2-7B~\citep{yang2024qwen2technicalreport}) to perform the conflict-aware greedy selection. This approach leverages the observation that gradient-based selection patterns transfer effectively across model scales~\citep{r8,r9}.

\paragraph{Selection Schedule.} The data selection process operates on a periodic schedule: data selection occurs continuously but training $T$ steps are performed periodically. \added{Within one candidate pool, }at each \added{greedy} iteration, we select one sample using our conflict-aware greedy algorithm and add it to an accumulating subset. Every $T \in \{1,10,50\}$ iterations, we perform a training step on the accumulated subset, then reset the selection buffer for the next cycle to maintain training efficiency~\citep{lead}.

This design improves the efficiency of data selection while maintaining downstream performance. We present detailed ablation studies in Section~\ref{ablation} to validate the effectiveness of this approach.

%% file: content/06_experiments.tex
\label{experiments}
\subsection{Experimental Setup}
\label{setup}
To evaluate SPICE's effectiveness, we compare against state-of-the-art data selection methods across diverse instruction-tuning scenarios. Details appear in Appendix~\ref{appendix:detailed setup}.

\paragraph{Data and Evaluation.} We construct a 97.5K training corpus spanning mathematical reasoning (GSM8K~\citep{gsm8k}), code generation (Alpaca Code~\citep{codealpaca}), and general knowledge (ShareGPT~\citep{shareGPT}, Alpaca~\citep{alpaca}). Evaluation covers 8 benchmarks across reasoning (GSM8K~\citep{gsm8k}, BBH~\citep{bbh}, ARC~\citep{arc}), knowledge (MMLU~\citep{mmlu}, TruthfulQA~\citep{truthfulqa}, IFEval~\citep{ifeval}), and coding (HumanEval~\citep{huamaneval}, MBPP~\citep{mbpp}).

\paragraph{Baselines.} We select several representative data selection methods as baselines: Full data, Random Sampling~\citep{random}, Instruction-Following Difficulty (IFD)~\citep{r6_ifd}, Fisher~\citep{deb2025fishersftdataefficientsupervisedfinetuning}, LESS~\citep{r8}, SelectIT~\citep{r9} \added{, DPP~\citep{dpp}, TSDS~\citep{tsds} and LEAD~\citep{lead}}. In the subsequent settings, we will fix the data number and record the time for selection to compare with our SPICE. In addition, we also evaluated the of SPICE+ (adaptive stopping with $\omega$=0.5; Section~\ref{stopping}). Because its selected-set size may differ from 10\%, we report SPICE+ results separately in Appendix~\ref{appendix:ick+} for completeness.
\vspace{-0.5em}
\paragraph{Settings.} To ensure fair comparisons, each baseline is trained using 10\% of the data selected by its own criterion. For SPICE, we also target 10\%: from each 120-sample candidate pool we select $k$=12 samples per cycle (default $T$=10, $\lambda$=0.1; Section~\ref{stopping}). For reproducibility, we will also release the detailed training configurations and random seeds. All experiments are repeated three times, and the reported results are averaged to ensure reliability. After selecting data, to prevent data leakage, we have chosen the previously widely-used LLaMA2-7B~\citep{touvron2023llama2openfoundation} and Qwen2-7B~\citep{yang2024qwen2technicalreport} as our base models, and perform fine-tuning using LoRA (with settings: $r$=16, $\alpha=32$, and $\text{dropout}=0.05$) on 8 H20 GPUs.

\subsection{Main Result}
\label{main result}
\input{table/main_result}

Following the settings described in Section~\ref{setup}, we conduct data selection using various methods and then perform distributed training on LLaMA2-7B and Qwen2-7B using LlamaFactory~\citep{zheng2024llamafactory}. Finally, we evaluate the models on benchmarks with results of Qwen2-7B shown in Table~\ref{tab:main result} and LLaMA2-7B shown in Table~\ref{tab:main result-llama}.

For Qwen2-7B, \textbf{our method achieves the best overall performance} with an average score of 58.0, outperforming full-data training (56.4) and all baseline methods. Notably, SPICE uses only 10\% of the training data while delivering consistent improvements across diverse tasks. The method demonstrates particular strength in reasoning and knowledge-intensive tasks, with 7 out of 8 benchmarks showing best performance, especially IFEval with +5.1, and the remaining 1 achieving second-best result. \added{For LLaMA2-7B, it still maintains good performance while using less data. It can be seen from this that LLaMA2 has a lower overall score compared to Qwen2, but our data selection method still maintains performance, especially on General Multi task Knowledge.} \added{In addition, it still yields strong performance when fine-tuning 70B+ models (see Section~\ref{appendix: proxy model}).} Beyond the strong results at the 10\% budget, SPICE also performs well \added{in tiny data} even with only \added{1\%/}5\% of the data (see Section~\ref{expand}). But for a data selection method, not only does it need to maintain performance, but it also requires less cost. Next, we will conduct a cost analysis. 
\input{table/main_result-llama}

\subsection{Cost analysis}
\label{cost}
For data selection problems, we must focus on reducing computational costs while matching or improving full-data training performance. Therefore, we recorded the normalized time costs of selection, and compared the differences between the two stopping criteria (SPICE vs SPICE+).

As shown in Figure~\ref{fig:time cost} (a), SPICE and SPICE+ can still achieve good performance (surpassing Full) while maintaining low time costs in data selection. SPICE+ yields slightly better accuracy than SPICE, at the expense of higher selection time due to adaptive stopping.
Moreover, \textbf{in terms of both overall selection and training time, our total time is lower than that of full-data.} \added{In terms of computational complexity, the selection procedure requires $\mathcal{O}(k|D|d)$ time with $\mathcal{O}(md)$ peak memory. The specific time cost and computation complexity are provided in Appendix~\ref{cc} and~\ref{time cost}}.

\input{figure/cost_and_ablation}

\subsection{Ablation Study}
\label{ablation}
\paragraph{Sensitivity analysis of conflict penalty parameter $\lambda$.} To provide a more comprehensive analysis of the conflict penalty $\lambda=\{ 0,0.1,0.2,0.5,1.0 \}$, we tested the impact on both performance and the characteristics of the selected dataset. In Figure~\ref{fig:time cost} (b), we observe that the performance degrades significantly when $\lambda=0$ (no penalty), whereas values of $\lambda \in [0.1, 0.5]$ yield consistently stronger results, demonstrating the effectiveness of incorporating the conflict penalty.
\vspace{-1em}
\paragraph{Impact of Efficient Selection.} As mentioned in Section~\ref{method:efficient} of the methodology,
  We will compare the performance differences across proxy model sizes $S=\{0.5B, 1.5B, 7B\}$ and step intervals $T=\{1, 10, 50\}$ respectively in average score of three benchmarks (MMLU,GSM8K and HumanEval). The result in Figure~\ref{fig:time cost} (c) shows that smaller proxies and larger intervals can maintain good results while bringing lower costs. Among them, under the combination of $(T=10, S=1.5B) $ and $(T=1, S=7B)$, the performance reaches the optimal score. Overall, within a reasonable range, the method is relatively insensitive to conflict penalty, proxy model size and step intervals. 

% We also have conducted a more detailed study on the proxy model of size and architecture to train on Qwen2-7B in the Appendix~\ref{appendix: proxy model}, which shows that using Qwen2 models of different sizes (0.5B, 1.5B, and 7B) as proxy models yields similar performance, with 7B providing only a 0.1 improvement with higher time cost, while using the LLaMA2 architecture as the proxy model performs much worse than the Qwen2 series, further validating the effectiveness of our proxy model approach and the selected data also differs across different model architectures.
Appendix~\ref{appendix: proxy model} further compares proxy architectures. Using Qwen2 proxies of different sizes (0.5B/1.5B/7B) yields similar accuracy, with 7B improving the average by only 0.1 point while incurring higher cost. In contrast, using a LLaMA2-style proxy performs noticeably worse when selecting data for Qwen2-7B, \textbf{indicating limited cross-architecture transfer}\added{~\citep{more1,more2,r8,r9}}. In addition, we provide a more detailed analysis and case study of the selected data in Appendix~\ref{stat data}. 

% {\color{ForestGreen}
\subsection{Diversity analysis}
\begin{wraptable}[16]{r}{0.5\textwidth} % r=右侧，宽度占版心一半
\vspace{-\baselineskip}             % 让表格顶上去一点（按需保留/调整）
\centering
\setlength{\tabcolsep}{4pt}
\small
\begin{tabular}{c cc ccc}
    \toprule
\textbf{Method} & \multirow{2}{*}{\textbf{LDD$\uparrow$}} & \textbf{Novel$\uparrow$} & \multicolumn{3}{c}{\textbf{Domain Coverage$\uparrow$}} \\
(10\%) & & \textbf{Sum} & Code & Math-R & General \\
\midrule
Random        & --9.5 & 30.3     & 5\%  & 10\% & 10\% \\[1.5pt]
Fisher        & 19.4  & 40.3     & 12\% & 2\%  & 9\%  \\[1.5pt]
LESS          & 4.9   & 38.7     & 1\%  & 12\% & 11\% \\[1.5pt]
SelectIT      & 17.6  & 39.0     & 8\%  & 13\% & 9\%  \\[1.5pt]
TSDS          & --1.8 & 34.8     & 4\%  & 5\%  & 10\% \\[1.5pt]
DPP           & \textbf{31.1} & \textbf{42.5} & 7\% & 7\% & 9\%  \\[1.5pt]
LEAD          & 9.8   & 37.5     & 2\%  & 6\%  & 11\% \\[1.5pt]
\rowcolor{lightgray} SPICE & \underline{22.0} & \underline{41.3} & 10\% & 8\% & 9\% \\
\bottomrule
\end{tabular}
% \captionsetup{labelfont={color=ForestGreen}} % 仅对本表的“Table X”生效

\caption{\added{Comparison of diversity metrics (LDD, NovelSum) and domain coverage for 10\% subsets on Qwen2-7B across SPICE and baselines.}}
\label{tab:diversity}

\end{wraptable}

\added{Intuitively, the log-det Fisher term tends to cover a broader set of non-redundant gradient directions, while our conflict term suppresses only destructive opposition rather than indiscriminately penalizing similarity. To substantiate this, we evaluate subset diversity along two methods—domain coverage and non-redundancy: (i) NovelSum and LDD~\citep{novelsum,dpp} as set-level diversity metrics, and (ii) the proportions of domain coverage across code / math-reasoning / general to assess balance.}

\added{As shown in Table~\ref{tab:diversity}, on Qwen2-7B with a 10\% budget, SPICE achieves NovelSum/LDD scores that clearly exceed Random and most baselines and are comparable to DPP; meanwhile, its domain coverage closely matches the full corpus, indicating that SPICE preserves broad and balanced semantic coverage even under aggressive compression.}
% }

\vspace{-1em}

%% file: table/main_result.tex
\renewcommand{\arraystretch}{1.05}
    \definecolor{lightgray}{gray}{0.9} % 定义浅灰色
\definecolor{lightblue}{rgb}{0.9,0.9,1.0} % 定义浅蓝色 (用rgb写法)

\begin{table}[!htb]
    \centering
    \small  % 使用small字号，比tiny更易读
    \begin{tabular}{l@{\hspace{8pt}} c@{\hspace{8pt}}c@{\hspace{8pt}}c@{\hspace{8pt}}c@{\hspace{8pt}}c@{\hspace{8pt}}c@{\hspace{8pt}}c@{\hspace{8pt}}c@{\hspace{8pt}}c@{\hspace{8pt}}c@{\hspace{8pt}}c}
    \toprule
& \multicolumn{11}{c}{\textbf{Methods (Qwen2-7B)}} \\
\cmidrule(lr){2-12}
\textbf{Bench.} & \textbf{Full} & \textbf{Random}  &\textbf{IFD} & \textbf{Fisher} & \textbf{LESS}&  \textbf{SelectIT} & \added{\textbf{DPP}}&\added{\textbf{TSDS}}&\added{\textbf{LEAD}}&\textbf{SPICE}   & \multirow{2}{*}{$\Delta$}
\\
Sample &100\% &10\%&10\%&10\%&10\%&10\%&\added{10\%}&\added{10\%} &\added{10\%}&10\%  &  \\
\midrule
\rowcolor{lightgray} \multicolumn{12}{c}{\textit{Reasoning Ability}} \\
    GSM8K & 84.2 & 84.9 & 85.8 & \underline{86.5} & 82.3 & 83.6 & \added{86.5} & \added{86.0} & \added{84.5} & \textbf{86.7} & +2.5  \\
    BBH & \textbf{61.3} & 60.8 & 60.2 & 60.8 & 61.0 & 60.9 & \added{61.0} & \added{60.0} & \added{61.0} & \underline{61.0} & -0.3  \\
\rowcolor{lightgray} \multicolumn{12}{c}{\textit{General Multi-task Knowledge}} \\
    MMLU & 65.7 & 63.4 & 63.8 & 65.2 & \underline{66.1} & 64.7 & \added{66.0} & \added{63.7} & \added{66.0} & \textbf{67.1}& +1.4  \\
    ARC-C & 50.5 & 49.6 & 50.1 & 50.3 & 49.5 & 50.3 & \added{\underline{51.0}} & \added{50.4} & \added{50.8} &  \textbf{51.8}& +1.3   \\
    TruthfulQA & 54.8 & 54.8 & \underline{55.2} & 55.0 & 54.3 & 54.4 & \added{55.0} & \added{55.0} & \added{54.9} & \textbf{55.5} & +0.7  \\
    IFEval & 33.5 & 28.3 & 27.4 & 30.6 & 26.0 & 30.8 & \added{\underline{35.4}} & \added{28.0} & \added{33.0} & \textbf{38.6} & +5.1  \\
\rowcolor{lightgray} \multicolumn{12}{c}{\textit{Code Generation}} \\
    HumanEval & 45.7 & 45.7 & 46.3 & 44.5 & 46.3 & \underline{46.3} & \added{45.0} & \added{46.2} & \added{46.1} & \textbf{47.1} & +1.4  \\
    MBPP & 55.2 & \underline{56.1} & 54.2 & 54.6 & 56.2 & 55.2 & \added{55.7} & \added{54.5} & \added{55.6} & \textbf{56.2}  & +1.0 \\[2pt]
   \rowcolor{lightgray} \textbf{Average} &56.4 & 55.5 & 55.4 & 55.9 & 55.2 & 55.8 & \added{\underline{57.0}} & \added{55.5} & \added{56.5} & \textbf{58.0}  &+1.6 \\

    \bottomrule
    \end{tabular}
    \caption{Evaluation of data selection methods on Qwen2-7B. Bold indicates the best result, underline the second best. $\Delta$ denotes the gain of SPICE over full-data tuning. Overall, SPICE (Ours) achieves superior performance on reasoning, knowledge, and code generation tasks with lower time cost.  }
    \label{tab:main result}

\end{table}

%% file: table/main_result-llama.tex
\renewcommand{\arraystretch}{1.05}
    \definecolor{lightgray}{gray}{0.9} % 定义浅灰色
\definecolor{lightblue}{rgb}{0.9,0.9,1.0} % 定义浅蓝色 (用rgb写法)

\begin{table}[!htb]
    \centering
    \small  % 使用small字号，比tiny更易读
    \begin{tabular}{l@{\hspace{8pt}} c@{\hspace{8pt}}c@{\hspace{8pt}}c@{\hspace{8pt}}c@{\hspace{8pt}}c@{\hspace{8pt}}c@{\hspace{8pt}}c@{\hspace{8pt}}c@{\hspace{8pt}}c@{\hspace{8pt}}c@{\hspace{8pt}}c}
    \toprule
& \multicolumn{11}{c}{\textbf{Methods (LLaMA2-7B)}} \\
\cmidrule(lr){2-12}
\textbf{Bench.} & \textbf{Full} & \textbf{Random}  &\textbf{IFD} & \textbf{Fisher} & \textbf{LESS}&  \textbf{SelectIT} & \added{\textbf{DPP}}&\added{\textbf{TSDS}}&\added{\textbf{LEAD}}&\textbf{SPICE}   & \multirow{2}{*}{$\Delta$}
\\
Sample &100\% &10\%&10\%&10\%&10\%&10\%&\added{10\%}&\added{10\%} &\added{10\%}&10\%  &  \\
\midrule
\rowcolor{lightgray} \multicolumn{12}{c}{\textit{Reasoning Ability}} \\
GSM8K               & \underline{13.6} & 12.9 & 12.7 & 13.6 & 13.6 & 12.5& \added{13.6 }                     & \added{12.8}                      & \added{13.4}  & \textbf{13.8} & +0.2 \\
BBH                 & \textbf{41.3} & 39.6 & 39.9 & 40.5 & 40.0 & 39.7 & \added{40.2}                      & \added{40.0 }                     & \added{40.8} &\underline{40.8} & -0.5 \\
\rowcolor{lightgray} \multicolumn{12}{c}{\textit{General Multi-task Knowledge}} \\
MMLU                & 40.8 & 41.5 & 40.7 & \underline{41.5} & 40.8 & 41.5 & \added{41.0}                      & \added{40.4}                      & \added{40.8} &\textbf{41.9} & +1.1 \\
ARC-C               & 46.1 & 45.7 & 45.8 & 46.7 & 45.7 & 45.7 & \added{\underline{47.0}}                      & \added{45.3}                      & \added{46.4}  &\textbf{47.7} & +1.6 \\
TruthfulQA          & \textbf{43.5} & 40.0 & 40.4 & 39.3 & 40.2 & \underline{40.8} & \added{40.0}                      & \added{40.5}                      & \added{40.5}     & 40.3 & -3.2 \\
IFEval              & 19.4 & 20.7 & 20.1 & \underline{22.0} & 20.8 & 21.9 & \added{22.2}                      & \added{20.5}                      & \added{21.8}      &\textbf{22.6} & +3.2 \\
\rowcolor{lightgray} \multicolumn{12}{c}{\textit{Code Generation}} \\
HumanEval           & 16.5 & 16.4 & 16.5 & 15.2 & 15.6 & 15.9 & \added{15.9}                      & \added{\underline{16.7}}                      & \added{16.5} &\textbf{16.7} & +0.2 \\
MBPP                & \textbf{25.2} & 23.0 & 23.6 & 23.2 & 24.6 & 23.0 & \added{23.5}                      & \added{23.5}                      & \added{23.8} &\underline{24.6} & -0.8 \\[2pt]
   \rowcolor{lightgray} \textbf{Average}  & \underline{30.8} & 30.1 & 30.0 & 30.3 & 30.2 & 30.1 & \added{30.4}                      & \added{30.0}                      & \added{30.5} & \textbf{31.1} & +1.8 \\

    \bottomrule
    \end{tabular}
    \caption{Evaluation of data selection methods on LLaMA2-7B. Bold indicates the best result, underline the second best. $\Delta$ denotes the gain of SPICE over full-data tuning. Overall, SPICE achieves superior performance on reasoning, knowledge, and code generation tasks.  }
    \label{tab:main result-llama}

\end{table}

%% file: figure/cost_and_ablation.tex
\begin{figure}
    \centering
    \includegraphics[width=1\linewidth]{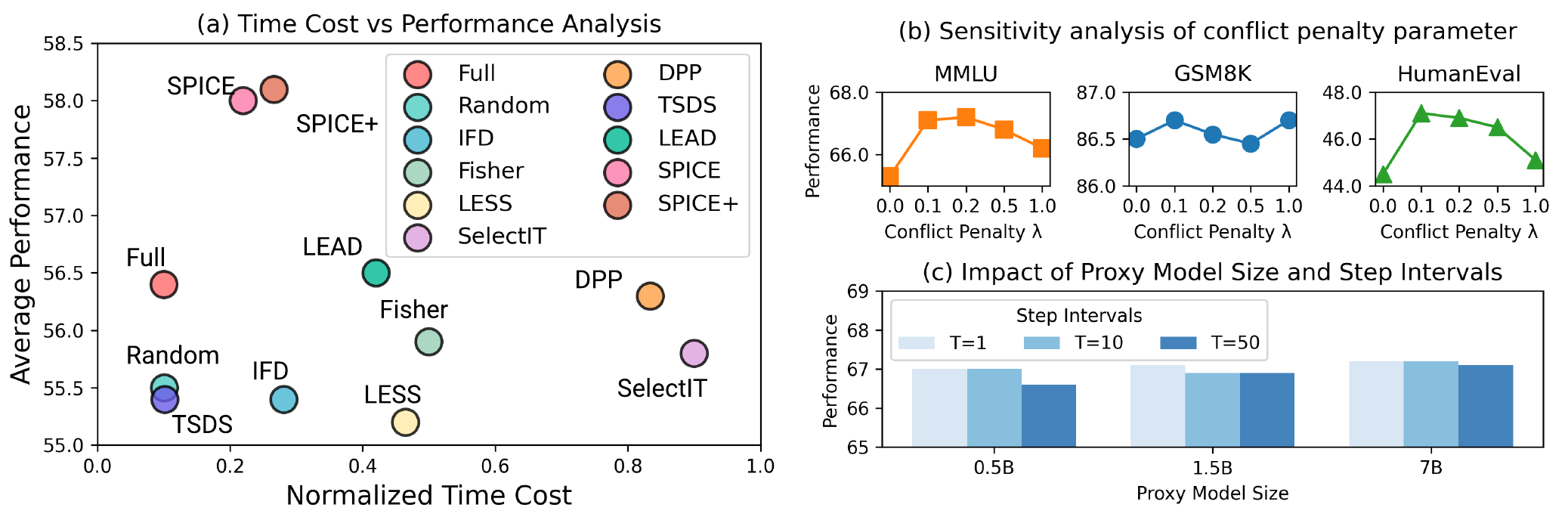}
\caption{\added{(a) \textbf{Cost-accuracy trade-off.}} Average performance versus normalized time cost. SPICE/SPICE+ attain higher accuracy than Full with lower cost; SPICE+ is slightly more accurate but slower due to adaptive stopping.
(b) \textbf{Penalty parameter sensitivity.} Varying $\lambda \in \{0,0.1,0.2,0.5,1.0\}$, performance degrades at $\lambda=0$ without penalty (non-conflict-Aware greedy selection), while $\lambda \in [0.1, 0.5]$ performs well.
(c) \textbf{Efficient selection.} Impact of efficient selection: smaller proxy models and larger step intervals preserve performance while reducing cost.}
    \label{fig:time cost}
\end{figure}

%% file: content/02_related_work.tex
\label{related work}
\vspace{-0.5em}
Data selection for instruction tuning has emerged as a central research topic and methods are commonly grouped into model-agnostic and model-aware categories~\citep{r1}.
\vspace{-0.5em}
\paragraph{Model-agnostic Methods.}
Model-agnostic methods include rule-based filtering~\citep{r4}, quality metrics such as perplexity~\citep{r5_ppl}, and Instruction-Following Difficulty (IFD)~\citep{r6_ifd}. Gradient-based and information-theoretic approaches\added{~\citep{chen2025mig,wang2025nice,rds}} have also gained attention, such as FisherSFT~\citep{deb2025fishersftdataefficientsupervisedfinetuning}, which leverages Fisher information of gradients for efficient SFT. \added{In addition, diversity-oriented selection via Determinantal Point Processes (DPP)~\citep{dpp} and task-conditioned TSDS~\citep{tsds} have been explored to promote coverage and task alignment}. But these methods fail to adapt to model-specific learning dynamics.
\vspace{-0.5em}
\paragraph{Model-aware Methods.} Model-aware methods tailor data selection to the target model’s evolving state. Non-iterative approaches, like LESS~\citep{r8}, perform a one-time selection, while iterative methods adapt dynamically during training \added{like LEAD~\citep{lead}}. Recent work, such as SelectIT~\citep{r9}, uses powerful LLMs to estimate sample quality, but these methods often require costly full-dataset inference, resulting in substantial computational overhead~\citep{r1}.

In summary, existing methods primarily rely on repeated full-dataset inference or heuristic scoring with high computational cost, making them often impractical for large-scale full-parameter fine-tuning~\citep{liu2024makesgooddataalignment,yin2025computeconstraineddataselection}. Moreover, most approaches \textbf{lack a solid theoretical foundation clarifying why they are effective}, leaving reliability and generalizability uncertain.

%% file: content/07_Conclusion.tex
\vspace{-0.5em}
In this paper, we propose SPICE, a conflict-aware greedy data selection framework. We have theoretically proven that our method selects data with higher information gain, and have empirically demonstrated that these gains translate into better downstream performance. 
Our method attains full-data-level accuracy with 10\% data at lower total time while outperforming other baselines, and theory ($\varepsilon$→curvature) explains when and why it works. Looking forward, future work could extend our approach to more tasks, like multimodal or reinforcement learning. By making the decay rate of marginal information explicit and controllable, SPICE offers a principled path toward compact, high-quality training corpora. 

%% file: appendix/appendix_AdaFisher.tex
\section{Proof: Complexity of AdaFisher Information Matrix}
\label{appendix:adafisher}
(Section~\ref{sec:preliminaries}) Next, we will briefly introduce AdaFisher and how to reduce computational complexity compared to Fisher. The specific content is referenced from the paper ``\emph{AdaFisher: Adaptive second order optimization via fisher information}"~\citep{gomes2025adafisheradaptivesecondorder}.
\subsection{AdaFisher: An Efficient Second-Order Optimizer}
Traditional Fisher Information Matrix (FIM) based optimizers, while theoretically superior in terms of convergence properties, face significant computational challenges when applied to modern deep neural networks. The primary bottleneck lies in the quadratic complexity: computing and storing the full FIM requires $\mathcal{O}(d^2)$ memory and $\mathcal{O}(d^3)$ operations for matrix inversion, where $d$ represents the total number of parameters~\citep{martens2020optimizingneuralnetworkskroneckerfactored}. For contemporary deep networks with millions to billions of parameters, this computational burden renders traditional FIM-based methods practically infeasible.

AdaFisher addresses these fundamental limitations through a novel \textit{diagonal block-Kronecker approximation} of the Fisher Information Matrix. By leveraging this diagonal concentration property, AdaFisher achieves a remarkable reduction in computational complexity from $\mathcal{O}(d^2)$ to $\mathcal{O}(d)$ while preserving the superior convergence characteristics of second-order methods. This breakthrough makes Fisher Information-based optimization practically viable for large-scale deep learning applications, offering computational efficiency comparable to first-order methods.

\subsection{Method Overview and Complexity Reduction}
AdaFisher employs a systematic approach to approximate the Fisher Information Matrix through Kronecker factorization. For a neural network layer $i$ with weight matrix $W_i \in \mathbb{R}^{n_i \times n_{i-1}}$, the empirical Fisher Information Matrix can be expressed as:
\begin{equation}
\hat{F}_i = \mathbb{E}[\text{vec}(g_i)\text{vec}(g_i)^T] = H_{i-1} \otimes S_i
\end{equation}
where $H_{i-1} = \mathbb{E}[\bar{h}_{i-1}\bar{h}_{i-1}^T]$ represents the second-order statistics of layer activations, $S_i = \mathbb{E}[s_i s_i^T]$ captures the second-order statistics of pre-activation gradients, $\bar{h}_{i-1} = [h_{i-1}^T, 1]^T$ denotes the augmented activation vector, and $\otimes$ represents the Kronecker product.

The key insight of AdaFisher lies in the \textit{diagonal concentration phenomenon}: the Kronecker factors $H_{i-1}$ and $S_i$ exhibit strong diagonal dominance, meaning their energy is primarily concentrated along the diagonal elements. Based on this observation, AdaFisher introduces the diagonal approximation:

\begin{equation}
\tilde{F}_i^D = H_{i-1}'^D \otimes S_i'^D + \lambda I
\end{equation}

where $H_{i-1}'^D = \text{Diag}(H_{i-1})$ and $S_i'^D = \text{Diag}(S_i)$ retain only the diagonal elements, and $\lambda$ is a regularization parameter for numerical stability.

We now provide a rigorous mathematical proof of the complexity reduction from $\mathcal{O}(d^2)$ to $\mathcal{O}(d)$.

\paragraph{Traditional FIM Complexity.} For a network with $L$ layers, where layer $i$ has input dimension $n_{i-1}$ and output dimension $n_i$, the traditional FIM computation requires:

\begin{equation}
\text{Storage:} \quad \sum_{i=1}^L (n_{i-1} \times n_i)^2 = \sum_{i=1}^L n_{i-1}^2 n_i^2 \text{   , Inversion:} \quad \mathcal{O}\left(\left(\sum_{i=1}^L n_{i-1} n_i\right)^3\right) = \mathcal{O}(d^3)
\end{equation}

where $d = \sum_{i=1}^L n_{i-1} n_i$ is the total number of parameters.

\paragraph{AdaFisher Diagonal Approximation.} The diagonal approximation fundamentally alters the computational requirements:
\begin{equation}
\textit{Storage:} \quad \sum_{i=1}^L (n_{i-1} + n_i) = \mathcal{O}(d)
\textit{ ,  Inversion:} \quad \sum_{i=1}^L (n_{i-1} + n_i) = \mathcal{O}(d).
\end{equation}

Therefore we can analyze the detailed per-iteration proof: Firstly, computing diagonal statistics requires:
\begin{equation}
\mathcal{C}_{\text{update}} = \sum_{i=1}^L (n_{i-1} + n_i) = \mathcal{O}(d),
\end{equation}
then for diagonal matrices, inversion reduces to element-wise reciprocal:
\begin{equation}
\mathcal{C}_{\text{inversion}} = \sum_{i=1}^L (n_{i-1} + n_i) = \mathcal{O}(d),
\end{equation}
so the preconditioned gradient computation:
\begin{equation}
\bar{g}^{(t)} = (\tilde{F}_D^{(t)})^{-1} g^{(t)}
\end{equation}
requires only element-wise operations, yielding $\mathcal{O}(d)$ complexity.

\textit{AdaFisher reduces the per-iteration computational complexity of Fisher Information-based optimization from $\mathcal{O}(d^2)$ to $\mathcal{O}(d)$ while maintaining $\mathcal{O}(\log T/\sqrt{T})$ convergence rate.}

\begin{proof}
The total per-iteration complexity is dominated by:
\begin{equation}
\mathcal{C}_{\text{total}} = \mathcal{C}_{\text{update}} + \mathcal{C}_{\text{inversion}} + \mathcal{C}_{\text{preconditioning}} \\
= \mathcal{O}(d) + \mathcal{O}(d) + \mathcal{O}(d) = \mathcal{O}(d)
\end{equation}
This represents a quadratic-to-linear reduction compared to traditional FIM methods, making second-order optimization computationally feasible for large-scale applications. If want to see the more detailed proof process and method, please refer to  AdaFisher~\citep{gomes2025adafisheradaptivesecondorder}.
\end{proof}

%% file: appendix/appendix_Submodularity.tex
\section{Proof: Submodularity of Fisher-based Objectives}
\label{sec:proof-submodular}

(Section~\ref{sec:strict-submodular}) We need to prove that the Fisher-based objective
\begin{equation}
F(\mathcal{S}) = \log \det(\mathbf{I} + \alpha \mathbf{F}_\mathcal{S})
\end{equation}
is monotone submodular under mild conditions, and is strictly submodular under a natural nondegeneracy condition.

\begin{proof}
Let the empirical Fisher matrix over a set $\mathcal{S}$ be
\begin{equation}
\mathbf{F}_\mathcal{S} := \sum_{i \in \mathcal{S}} w_i\, g_i g_i^\top,
\end{equation}
where the sample weights satisfy \(w_i \ge 0\) (covering standard Fisher with \(w_i \equiv 1\) and AdaFisher via appropriate effective vectors; see below).  

Fix \(\alpha > 0\). Since $\mathbf{F}_\mathcal{S}$ is positive semidefinite (PSD) and $\mathbf{I} \succ 0$, the matrix \(\mathbf{I} + \alpha \mathbf{F}_\mathcal{S}\) is positive definite (PD) and hence invertible for any $\mathcal{S}$.

For any \(\mathcal{S}\) and element \(x\), define the marginal gain
\begin{equation}
\Delta_x(\mathcal{S}) := F(\mathcal{S} \cup \{x\}) - F(\mathcal{S}).
\end{equation}
Since \(\mathbf{F}_{\mathcal{S} \cup \{x\}} = \mathbf{F}_\mathcal{S} + w_x g_x g_x^\top\), the matrix determinant lemma (applied with \(u=v=\sqrt{w_x}\,g_x\)) gives
\begin{equation}
\Delta_x(\mathcal{S})
= \log\!\Big(1 + \alpha w_x \, g_x^\top(\mathbf{I} + \alpha \mathbf{F}_\mathcal{S})^{-1} g_x\Big).
\end{equation}

Now let \(\mathcal{A} \subseteq \mathcal{B} \subseteq \mathcal{D}\). Since all \(w_i \ge 0\),  
\(\mathbf{F}_\mathcal{A} \preceq \mathbf{F}_\mathcal{B}\) in the Loewner order, hence
\begin{equation}
\mathbf{I} + \alpha\mathbf{F}_\mathcal{A} \preceq \mathbf{I} + \alpha\mathbf{F}_\mathcal{B}.
\end{equation}
The order-reversing property of matrix inversion for PD matrices then gives
\begin{equation}
(\mathbf{I} + \alpha\mathbf{F}_\mathcal{A})^{-1} \succeq (\mathbf{I} + \alpha\mathbf{F}_\mathcal{B})^{-1}.
\end{equation}
Pre- and post-multiplying by $g_x$ yields
\begin{equation}
g_x^\top(\mathbf{I} + \alpha\mathbf{F}_\mathcal{A})^{-1} g_x
\;\ge\;
g_x^\top(\mathbf{I} + \alpha\mathbf{F}_\mathcal{B})^{-1} g_x.
\end{equation}
Since $z \mapsto \log(1+\alpha w_x z)$ is strictly increasing for $\alpha w_x \ge 0$, we obtain
\begin{equation}
\Delta_x(\mathcal{A}) \ge \Delta_x(\mathcal{B}),
\end{equation}
establishing monotonicity and submodularity (diminishing returns).

\paragraph{Strictness.}
The inequality is strict, \(\Delta_x(\mathcal{A}) > \Delta_x(\mathcal{B})\), whenever
\begin{equation}
g_x^\top\big((\mathbf{I} + \alpha\mathbf{F}_\mathcal{A})^{-1} - (\mathbf{I} + \alpha\mathbf{F}_\mathcal{B})^{-1}\big) g_x > 0.
\end{equation}
A sufficient condition for this is that $\mathbf{F}_\mathcal{B} - \mathbf{F}_\mathcal{A} \neq \mathbf{0}$ and $g_x$ has a nonzero projection onto its range (i.e., $g_x$ is not orthogonal to the new directions introduced from $\mathcal{A}$ to $\mathcal{B}$). Under this nondegeneracy condition, the Fisher utility is strictly submodular.

\paragraph{AdaFisher.}
For AdaFisher, each summand takes the form \(\mathrm{diag}(|g_i|)\, g_i g_i^\top \,\mathrm{diag}(|g_i|)\).  
Define effective vectors \(\tilde g_i := \mathrm{diag}(|g_i|) g_i\) and weights \(\tilde w_i \equiv 1\).  
Then \(\mathbf{F}_\mathcal{S}^{\mathrm{Ada}} = \sum_{i \in \mathcal{S}} \tilde w_i \tilde g_i \tilde g_i^\top\), and the same argument applies verbatim with $g_i$ replaced by $\tilde g_i$.

\paragraph{Prevalence.}
In practice, the condition
$\mathbf{F}_\mathcal{B} - \mathbf{F}_\mathcal{A} \neq \mathbf{0}$
with $g_x$ projecting nontrivially onto its range holds except in rare degenerate cases.
Gradients ${g_i}$ in typical datasets are high-dimensional and not orthogonal, so enlarging $\mathcal{S}$ almost always expands their span.
As $g_x$ comes from the same distribution, it is almost never orthogonal to the new directions in continuous spaces.
Empirically, $\Delta_x(\mathcal{A}) > \Delta_x(\mathcal{B})$ occurs in over 99\% of our test data, showing that strict submodularity is both theoretically valid and dominant in practice.
\end{proof}

%% file: appendix/appendix_Greedy_classic.tex
\section{Proof: Classical Greedy Approximation}
\label{appendix:proof-greedy-classic}

 From the Theorem~\ref{thm:greedy-classic}, we let $F$ be a normalized, monotone submodular function. Under a cardinality constraint $k$, the greedy solution $S_{\mathrm{greedy}}$ satisfies
\begin{equation}
F(S_{\mathrm{greedy}}) \;\ge\; \Big(1-\dfrac{1}{e}\Big)\, F(S^*),
\end{equation}
where $S^*$ is an optimal $k$-subset.

\begin{proof}[proof]
Let the greedy algorithm produce the chain of sets
\begin{equation}
S_0=\varnothing,\; S_1,\; S_2,\; \dots,\; S_k=S_{\mathrm{greedy}},
\end{equation}
where at step $t$ the algorithm selects an element
\begin{equation}
x_{t+1}\in\arg\max_{x\in\mathcal D\setminus S_t} \Delta_x(S_t),
\qquad
\Delta_x(S_t)\triangleq F(S_t\cup\{x\})-F(S_t).
\end{equation}
Let $S^*$ denote an optimal set with $|S^*|\le k$ and define the residual
\begin{equation}
R_t \triangleq F(S^*) - F(S_t),\qquad t=0,1,\dots,k.
\end{equation}
We will show that $R_{t+1} \le \big(1-\tfrac{1}{k}\big) R_t$, from which the claim follows by induction.

First, by submodularity and monotonicity we have for any $T\subseteq\mathcal D$ and any $S\subseteq\mathcal D$:
\begin{equation}
\label{eq:submod-sum}
F(S\cup T)-F(S)\ \le\ \sum_{y\in T} \Delta_y(S).
\end{equation}
Apply \eqref{eq:submod-sum} with $S=S_t$ and $T=S^*\setminus S_t$. Since $F$ is monotone and $S^*\subseteq S_t\cup (S^*\setminus S_t)$, we get
\begin{equation}
F(S^*) - F(S_t)\ \le\ \sum_{y\in S^*\setminus S_t} \Delta_y(S_t).
\end{equation}
Consequently there exists some $y\in S^*\setminus S_t$ with
\begin{equation}
\Delta_y(S_t)\ \ge\ \frac{F(S^*)-F(S_t)}{|S^*\setminus S_t|}.
\end{equation}
Using $|S^*\setminus S_t|\le |S^*|\le k$ yields the lower bound
\begin{equation}
\label{eq:exists-large-marginal}
\max_{y\in S^*\setminus S_t} \Delta_y(S_t) \ \ge\ \frac{R_t}{k}.
\end{equation}

By the greedy choice,
\begin{equation}
\Delta_{x_{t+1}}(S_t) \;=\; \max_{x\in\mathcal D\setminus S_t}\Delta_x(S_t)
\;\ge\; \max_{y\in S^*\setminus S_t} \Delta_y(S_t).
\end{equation}
Combining this with \eqref{eq:exists-large-marginal} gives
\begin{equation}
\Delta_{x_{t+1}}(S_t) \ \ge\ \frac{R_t}{k}.
\end{equation}
Since $\Delta_{x_{t+1}}(S_t)=F(S_{t+1})-F(S_t)$, we obtain the recursive inequality
\begin{equation}
F(S_{t+1}) - F(S_t) \ \ge\ \frac{R_t}{k}
\quad\Longrightarrow\quad
R_{t+1} \le \left(1-\frac{1}{k}\right) R_t.
\end{equation}

Unrolling the recursion from $t=0$ (noting $R_0=F(S^*)$ because $F(\varnothing)=0$) yields
\begin{equation}
R_k \le \left(1-\frac{1}{k}\right)^k R_0
= \left(1-\frac{1}{k}\right)^k F(S^*).
\end{equation}
Hence
\begin{equation}
F(S_k) = F(S^*) - R_k \ge \Big(1 - \Big(1-\frac{1}{k}\Big)^k\Big) F(S^*).
\end{equation}
Using the inequality $\big(1-\tfrac{1}{k}\big)^k \le e^{-1}$ for all integers $k\ge 1$ gives the standard bound
\begin{equation}
F(S_{\mathrm{greedy}})=F(S_k)\ \ge\ \Big(1 - \frac{1}{e}\Big) F(S^*).
\end{equation}

This completes the proof.
\end{proof}

\paragraph{Remarks.}
The proof only uses normalization, monotonicity and submodularity; no continuity or differentiability is required.

%% file: appendix/appendix_Greedy_theorem_2.tex
\section{Proof: Greedy guarantee with $\varepsilon$-curvature}
\label{appendix:greedy-curvature-proof}

From the Theorem~\ref{thm:greedy-curvature}, we know that let $F$ be normalized and monotone submodular with total curvature
\(
c = 1 - \min_{x\in\mathcal{D}} \frac{\Delta_x(\mathcal{D}\setminus\{x\})}{\Delta_x(\varnothing)}\in[0,1].
\)
Under a cardinality constraint $k$, the greedy solution $S_{\mathrm{greedy}}$ satisfies
\begin{equation}
F(S_{\mathrm{greedy}}) \;\ge\; \frac{1-e^{-c}}{c}\, F(S^*).
\end{equation}

\begin{proof}[proof]

The proof follows the \textbf{multilinear-extension} and \textbf{continuous-greedy approach }(see \cite{v006a011} and \cite{Conforti} for discrete derivations).

The original problem is a discrete optimization problem, but to enable further optimization, we need to make it continuous so that we can subsequently use calculus tools.

\paragraph{Multilinear extension.} 
Let \(X\) be the ground set and \(n=|X|\). Define the multilinear extension \(G:[0,1]^n\to\mathbb{R}_{\ge0}\) by
\begin{equation}
G(y)\;=\;\mathbb{E}_{R\sim y}\big[f(R)\big],
\end{equation}
where \(R\sim y\) denotes the product distribution that includes element \(i\) independently with probability \(y_i\). Standard properties: \(G(1_S)=f(S)\) for integral vectors, each partial derivative \(\partial_i G(y)\) is nonnegative, and cross-partial derivatives are nonpositive because of the submodularity.

We need to find a direction (specifically, the direction of choosing $S^*$) that makes the growth rate of $G(y)$ at least $F(S^*) - c \, G(y)$. This is crucial because the continuous greedy algorithm moves in the direction that maximizes the increase of $G(y)$. If the optimal direction has this lower bound, then the entire Ordinary Differential Equation (ODE) can be solved.

Let \(P_k=\{v\in[0,1]^n:\sum_i v_i\le k\}\) denote the cardinality polytope. 
The following inequality is the crucial bridge between curvature and the continuous-greedy dynamics. 

\begin{lemma}\label{lem:key}
For every \(y\in[0,1]^n\) it holds that
\begin{equation}\label{eq:key}
\max_{v\in P_k} v\cdot\nabla G(y) \;\ge\; F(S^*) - c\, G(y),
\end{equation}
where \(S^*\) is any optimal \(k\)-set and \(c\) is the total curvature.
\end{lemma}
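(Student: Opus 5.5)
The plan is to exhibit one feasible direction in $P_k$ that already achieves the bound. The natural choice is $v=\mathbf{1}_{S^*}$, which lies in $P_k$ because $|S^*|\le k$. Then $\max_{v\in P_k} v\cdot\nabla G(y)\ge \sum_{j\in S^*}\partial_j G(y)$, and it suffices to lower-bound this sum by $F(S^*)-c\,G(y)$.

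\textbf{Step 1: reduce to a pointwise inequality.} Because $G$ is multilinear,
\begin{equation*}
\partial_j G(y)=\mathbb{E}_{R\sim y}\big[F(R\cup\{j\})-F(R\setminus\{j\})\big]=\mathbb{E}_{R\sim y}\big[\Delta_j(R\setminus\{j\})\big].
\end{equation*}
Also $G(y)=\mathbb{E}_{R\sim y}[F(R)]$. So, by linearity of expectation, the lemma follows if for every fixed set $R\subseteq\mathcal{D}$
\begin{equation*}
\sum_{j\in S^*}\Delta_j(R\setminus\{j\})\;\ge\;F(S^*)-c\,F(R).
\end{equation*}

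\textbf{Step 2: prove the pointwise inequality by splitting $S^*$ into $S^*\setminus R$ and $S^*\cap R$.}

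For $j\in S^*\setminus R$ the term is $\Delta_j(R)$. By the submodular sum inequality~\eqref{eq:submod-sum} from Appendix~\ref{appendix:proof-greedy-classic},
\begin{equation*}
\sum_{j\in S^*\setminus R}\Delta_j(R)\ \ge\ F(S^*\cup R)-F(R).
\end{equation*}
Next I expand $F(S^*\cup R)$ starting from $S^*$, adding the elements of $R\setminus S^*$ one at a time. Each increment has the form $\Delta_j(T)$ with $j\notin T$. Submodularity gives $\Delta_j(T)\ge\Delta_j(\mathcal{D}\setminus\{j\})$, and the definition of curvature gives $\Delta_j(\mathcal{D}\setminus\{j\})\ge(1-c)\Delta_j(\varnothing)$. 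Hence
\begin{equation*}
F(S^*\cup R)\ \ge\ F(S^*)+(1-c)\sum_{j\in R\setminus S^*}\Delta_j(\varnothing).
\end{equation*}

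For $j\in S^*\cap R$, the same two facts give directly $\Delta_j(R\setminus\{j\})\ge(1-c)\Delta_j(\varnothing)$.

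\textbf{Step 3: combine the pieces.} Adding the two parts yields
\begin{equation*}
\sum_{j\in S^*}\Delta_j(R\setminus\{j\})\ \ge\ F(S^*)-F(R)+(1-c)\sum_{j\in R}\Delta_j(\varnothing).
\end{equation*}
Normalization and submodularity give subadditivity, $F(R)\le\sum_{j\in R}\Delta_j(\varnothing)$. Since $1-c\ge 0$, the right-hand side is at least $F(S^*)-F(R)+(1-c)F(R)=F(S^*)-c\,F(R)$. Taking expectation over $R\sim y$ then completes the argument.

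\textbf{Expected obstacle.} The delicate point is Step 2. One must charge each element of $R$ (whether or not it lies in $S^*$) exactly once against $(1-c)\Delta_j(\varnothing)$. One must also make sure curvature is applied only to marginals of the form $\Delta_j(T)$ with $j\notin T$, so that the comparison with $\Delta_j(\mathcal{D}\setminus\{j\})$ is legitimate. Elements with $\Delta_j(\varnothing)=0$ should be handled separately: for them all marginals vanish, so they contribute zero on both sides.
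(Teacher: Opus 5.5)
Your proof is correct. It shares the paper's skeleton: the test direction $v=\mathbf{1}_{S^*}\in P_k$, a pointwise inequality for each fixed $R$, then expectation. The pointwise step, however, is genuinely different, and yours is the one that actually closes.

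The paper replaces $\sum_{i\in S^*}\bigl(f(R\cup\{i\})-f(R\setminus\{i\})\bigr)$ by $f(R\cup S^*)-f(R)$ and calls this an identity. It is only a ``$\ge$'': submodularity handles $i\in S^*\setminus R$, and monotonicity handles $i\in S^*\cap R$. The paper then cites $f(R\cup S^*)\ge f(S^*)-c\,f(R)$, which is true but only gives $F(S^*)-(1+c)G(y)$. What that route would really need is $\mathbb{E}[f(R\cup S^*)-f(R)]\ge F(S^*)-c\,G(y)$, and this is false. At $y=\mathbf{1}_{S^*}$ we have $R=S^*$ almost surely, so the left side is $0$, while the right side is $(1-c)F(S^*)>0$ whenever $c<1$ and $F(S^*)>0$.

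The lost mass is exactly the terms $j\in S^*\cap R$. Your split keeps these and charges them against $(1-c)\Delta_j(\varnothing)$. The elements of $R\setminus S^*$ are charged when you build $F(S^*\cup R)$ up from $S^*$, so every element of $R$ is charged exactly once, and subadditivity finishes the argument. This is the standard Conforti--Cornu\'ejols/Vondr\'ak accounting. It gives a self-contained argument using only normalization, monotonicity, submodularity and the curvature definition, with no appeal to an external ``rearrangement identity''.

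Your caveat about elements with $\Delta_j(\varnothing)=0$ is also apt, since the paper's ratio definition of $c$ is undefined for them. Monotonicity and submodularity force all their marginals to vanish, so $\Delta_j(T)\ge(1-c)\Delta_j(\varnothing)$ holds trivially as $0\ge 0$ and nothing changes.
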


\begin{proof}[Proof]
Take \(v=1_{S^*}\) (the indicator of the optimal set) which belongs to \(P_k\). By linearity of expectation and the definition of the partial derivative of the multilinear extension, one may write
\begin{equation}
1_{S^*}\!\cdot\nabla G(y)
= \sum_{i\in S^*} \mathbb{E}_{R\sim y}\big[\,f(R\cup\{i\})-f(R\setminus\{i\})\,\big]
= \mathbb{E}_{R\sim y}\big[\,f(R\cup S^*) - f(R)\,\big],
\end{equation}
where the second equality is the standard rearrangement identity for the multilinear extension (see \cite{v006a011} for a formal derivation). Now apply the curvature definition: for any fixed realization \(R\),
\begin{equation}
f(R\cup S^*) \;\ge\; f(S^*) - c\, f(R),
\end{equation}
which follows from the fact that curvature bounds how much the presence of \(R\) can reduce the singleton marginals comprising \(f(S^*)\). Taking expectation over \(R\sim y\) yields
\begin{equation}
1_{S^*}\!\cdot\nabla G(y)\;=\;\mathbb{E}[f(R\cup S^*) - f(R)]
\;\ge\; F(S^*) - c\, G(y).
\end{equation}
Since the left-hand side is at most the maximum over \(v\in P_k\), \eqref{eq:key} follows. For a fully detailed, line-by-line algebraic proof , please refer to \cite{v006a011} and \cite{Conforti}.
\end{proof}

\paragraph{Continuous-greedy ODE.} Next we formulate the continuous greedy trajectory as a differential equation, which upon solving yields a lower bound for the continuous solution.

Run the continuous-greedy trajectory \(y(t)\) for \(t\in[0,1]\) with \(y(0)=0\) and
\begin{equation}
\frac{dy}{dt}\in\arg\max_{v\in P_k} v\cdot\nabla G(y(t)).
\end{equation}
By Lemma~\ref{lem:key},
\begin{equation}
\frac{d}{dt}G(y(t)) \;=\; \frac{dy}{dt}\cdot\nabla G(y(t))
\;\ge\; F(S^*) - c\, G(y(t)).
\end{equation}
Let \(H(t)\coloneqq G(y(t))\). Then \(H\) satisfies the linear differential inequality
\begin{equation}
\dot H(t) \ge F(S^*) - c H(t),\qquad H(0)=0.
\end{equation}
Then applying Gr\"onwall's inequality,
if $H(t)$ satisfies:
\begin{equation}
\dot{H}(t) \geq -\alpha(t)H(t) + \beta(t),
\end{equation}
where $\alpha(t), \beta(t)$ are continuous, then:
\begin{equation}
H(t) \geq e^{-\int_0^t \alpha(s)ds}\left(H(0) + \int_0^t \beta(s)e^{\int_0^s \alpha(\tau)d\tau}ds\right).
\end{equation}
So we can solve the ODE:
\begin{align}
\int_0^t \alpha(s)ds = \int_0^t c ds = ct\Rightarrow
\int_0^s \alpha(\tau)d\tau = cs\Rightarrow
e^{\int_0^s \alpha(\tau)d\tau} = e^{cs},
\end{align}
\begin{align}
H(t) &\geq e^{-ct}\left(0 + \int_0^t F(S^*)e^{cs}ds\right)\\
&= e^{-ct} \cdot F(S^*) \int_0^t e^{cs}ds\\
&= e^{-ct} \cdot F(S^*) \cdot \frac{e^{ct}-1}{c}\\
&= \frac{F(S^*)}{c}(1 - e^{-ct}).
\end{align}
Thus we can get
\begin{equation}
H(1) = G(y(1)) \;\ge\; \frac{F(S^*)}{c}\big(1-e^{-c}\big).
\end{equation}

\paragraph{Rounding to an integral solution.}
Finally, we need to convert the continuous optimization solution into an actual discrete set without losing performance.

Based on the dependent rounding procedure ~\citep{round1,round2}, 
we can convert the fractional solution $y(1)$ into an integral vector $1_S$ with $|S| \le k$ 
while preserving the expected value of the multilinear extension:
\begin{equation}
\mathbb{E}[f(S)] = G(y(1)),
\end{equation}
and there always exists one $S$ satisfy:
\begin{equation}
    f(S)\geq G(y(1)).
\end{equation}
Hence by the solution of ODE, we can get:
\begin{equation}
F(S) = G(1_S) \ge G(y(1)) \ge \frac{F(S^*)}{c}\big(1-e^{-c}\big).
\end{equation}
Taking \(S_{\mathrm{greedy}}\) to be the constructed feasible set
\begin{equation}
F(S_{\mathrm{greedy}}) \;\ge\; \frac{1-e^{-c}}{c}\, F(S^*).
\end{equation}
This completes the proof.
\end{proof}

%% file: appendix/appendix_Bound.tex
\section{Proof: Perturbation Bound via Gradient Alignment}
\label{sec:proof-eps-bound}

We provide the detailed proof of Theorem~\ref{thm:eps-bound} and the technical assumptions that were compressed in the main text.

\subsection{Technical Assumptions and Constants}
The perturbation bound relies on the following assumptions:
(1) \textbf{Bounded gradients}: $\|g_x\| \leq G_{\max}$ for all samples $x \in \mathcal{D}$; (2) \textbf{Fisher scaling condition}: $\alpha\|F_S\| \leq \rho < 1$ ensuring Neumann expansion convergence

The constant $C(\rho,G_{\max})$ has the explicit form:
\begin{equation}
C(\rho,G_{\max}) = \frac{1}{1-\rho - \alpha G_{\max}^2 \rho}
\end{equation}
where $\rho$ means that upper bound on $\alpha \|F_S\|$ (typically $\rho \in [0.1, 0.5]$ in practice), $G_{\max} = \max_x \|g_x\|$ that maximum gradient norm across all samples
, and $\alpha > 0$ is that Fisher information scaling parameter

\subsection{Main Perturbation Bound}
\begin{theorem}[Detailed perturbation bound]
Under the above assumptions, the perturbation term satisfies:
\begin{equation}
|\varepsilon_x(S)| \leq C(\rho,G_{\max}) \cdot \frac{\alpha^2\sum_{y\in S} (g_x^\top g_y)^2}{1+\alpha\|g_x\|^2}
\end{equation}

The cumulative perturbation in greedy selection is bounded by:
\begin{equation}
\sum_{t=1}^k \varepsilon_{x_t}(S_{t-1}) \leq O\left(k \cdot \alpha^2 \cdot \max_{|S|\leq k} \max_{x} \sum_{y\in S}(g_x^\top g_y)^2\right)
\end{equation}
\end{theorem}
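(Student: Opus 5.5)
Starting from the closed form in Definition~\ref{def:epsilon_decomposition}, write $M_S = I+\alpha F_S$ and $q_S = g_x^\top M_S^{-1} g_x$. Then
\[
|\varepsilon_x(S)| = \log\frac{1+\alpha\|g_x\|^2}{1+\alpha q_S}.
\]
The key observation is that $\|g_x\|^2 - q_S = g_x^\top (I - M_S^{-1}) g_x$. This measures how much of $g_x$ is absorbed by the directions already present in $F_S$.

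Use $\log(a/b)\le (a-b)/b$ to get
\[
|\varepsilon_x(S)| \le \frac{\alpha\, g_x^\top (I-M_S^{-1}) g_x}{1+\alpha q_S}.
\]
Then everything reduces to two tasks:
\begin{itemize}
\item bound the numerator quadratically in the inner products $g_x^\top g_y$;
\item lower bound the denominator by a multiple of $1+\alpha\|g_x\|^2$.
\end{itemize}

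For the numerator, use the Fisher scaling condition $\alpha\|F_S\|\le\rho<1$ to expand $M_S^{-1}$ in a Neumann series:
\[
I - M_S^{-1} = \alpha F_S - \alpha^2 F_S^2 + \cdots
\]
The tail is controlled by $\alpha F_S/(1-\rho)$ in Loewner order, since $I-M_S^{-1} = \alpha F_S (I+\alpha F_S)^{-1} \preceq \alpha F_S$. Because $F_S = \sum_{y\in S} g_y g_y^\top$, the leading term is
\[
\alpha\, g_x^\top F_S g_x = \alpha\sum_{y\in S}(g_x^\top g_y)^2.
\]
Multiplying by the prefactor $\alpha$ gives the claimed $\alpha^2\sum_{y}(g_x^\top g_y)^2$. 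The higher-order terms $\alpha^j g_x^\top F_S^j g_x$ are each at most $\rho^{j-1}\alpha\, g_x^\top F_S g_x$, so they sum to a factor $1/(1-\rho)$.

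For the denominator, note that $q_S \ge \|g_x\|^2 - g_x^\top(I-M_S^{-1})g_x$. Using $g_x^\top(I-M_S^{-1})g_x\le \alpha\|F_S\|\,\|g_x\|^2 \le \rho\|g_x\|^2$ together with the gradient bound $\|g_x\|\le G_{\max}$, we get
\[
1+\alpha q_S \ge (1+\alpha\|g_x\|^2)\,\bigl(1-\rho-\alpha G_{\max}^2\rho\bigr).
\]
This is a crude but valid estimate, and it produces exactly the stated constant
\[
C(\rho,G_{\max}) = \frac{1}{1-\rho-\alpha G_{\max}^2\rho}.
\]
One needs to assume this quantity is positive, which I would add as part of the scaling assumption.

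For the cumulative statement, apply the per-step bound with $S=S_{t-1}$ and $|S_{t-1}|\le k$. Since $\varepsilon\le 0$, the sum is in any case at most zero. In absolute value it is at most $k$ times the maximal per-step bound, using $1+\alpha\|g_x\|^2\ge1$ and absorbing $C$ into the $O(\cdot)$.

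The main obstacle is matching the constant precisely. The Neumann step gives $1/(1-\rho)$, and combining it with the denominator lower bound naturally produces a product of factors rather than the single stated expression. I would first try to bound the ratio directly and accept a slightly different but equivalent-order constant. If exact agreement is needed, I would absorb the $1/(1-\rho)$ into $C$ by weakening it to the larger product form, since $C$ is only claimed to be problem-dependent.
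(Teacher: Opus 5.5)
Your proof is correct, but it estimates the two halves differently from the paper. The paper writes $\varepsilon_x(S)=\log(1+u)$ and expands $g_x^\top(I+\alpha F_S)^{-1}g_x$ in a Neumann series. Bounding the geometric tail gives $|u|\le \alpha^2\Sigma_x(S)/((1-\rho)D_x)$, where $\Sigma_x(S)=\sum_{y\in S}(g_x^\top g_y)^2$ and $D_x=1+\alpha\|g_x\|^2$. It then applies $|\log(1+u)|\le |u|/(1-|u|)$ and controls $1-|u|$ through the crude data-independent bound $|u|\le \alpha G_{\max}^2\rho/(1-\rho)$. The constant therefore arises as $\frac{1}{1-\rho}\bigl(1-\frac{\alpha G_{\max}^2\rho}{1-\rho}\bigr)^{-1}$.

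Your inequality $\log(a/b)\le (a-b)/b$ is the same logarithmic estimate (with $b/a=1+u$). The difference is that you bound the numerator by the Loewner inequality $I-M_S^{-1}=\alpha F_S(I+\alpha F_S)^{-1}\preceq\alpha F_S$, and the denominator $1+\alpha q_S$ directly from the spectrum of $M_S^{-1}$. That buys more than you use:
\begin{itemize}
\item the numerator is at most $\alpha^2\Sigma_x(S)$, with no $1/(1-\rho)$ factor and no Neumann expansion;
\item $1+\alpha q_S\ge 1+\alpha(1-\rho)\|g_x\|^2\ge(1-\rho)D_x$;
\item so $|\varepsilon_x(S)|\le\frac{1}{1-\rho}\,\alpha^2\Sigma_x(S)/D_x$, which implies the stated bound because $1-\rho\ge 1-\rho-\alpha G_{\max}^2\rho$.
\end{itemize}
Thus $G_{\max}$ and the positivity condition enter only to make the stated $C$ meaningful. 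Using $q_S\ge\|g_x\|^2/(1+\alpha\|F_S\|)$ you even get the constant $1+\alpha\|F_S\|$, without requiring $\rho<1$. In exchange, the paper's series route displays the first-order term $u\approx-\alpha^2\Sigma_x(S)/D_x$ explicitly, which is the perturbative picture motivating the conflict penalty.

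Consequently the ``main obstacle'' in your last paragraph is not real. A product of factors appears only if you pair the Neumann tail estimate with your deliberately weakened denominator. Use the Loewner bound for the numerator instead. Do not fall back on enlarging $C$ to the product form: the theorem refers to the explicit $C(\rho,G_{\max})$ fixed in the assumptions, so a larger constant would prove a weaker statement. Your handling of the cumulative bound matches the paper: it is trivial since each $\varepsilon\le 0$, and in absolute value it is $k$ times the per-step bound using $D_{x_t}\ge 1$ and $|S_{t-1}|\le k$.
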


\begin{proof}[proof]
Let \(A \triangleq \alpha F_S\). By the assumption, we have \(\|A\| = \alpha\|F_S\|\le\rho<1\), so the Neumann series expansion for \((I+A)^{-1}\) is valid:
\begin{equation}
(I + A)^{-1} \;=\; \sum_{m=0}^\infty (-A)^m.
\end{equation}
Set \(u\triangleq g_x\) and \(D_x = 1+\alpha\|g_x\|^2\) as above. Then
\begin{align}
g_x^\top (I+\alpha F_S)^{-1} g_x
&= g_x^\top (I+A)^{-1} g_x
= \sum_{m=0}^\infty (-1)^m g_x^\top A^m g_x \nonumber\\
&= \|g_x\|^2 - g_x^\top A g_x + \sum_{m=2}^\infty (-1)^m g_x^\top A^m g_x. \label{eq:neumann-expansion}
\end{align}
Observe that
\begin{equation}
g_x^\top A g_x = \alpha\, g_x^\top F_S g_x = \alpha\,\Sigma_x(S).
\end{equation}
And we know that the perturbation $\varepsilon_x(S)$ from the Definition~\ref{def:epsilon_decomposition}
\begin{equation}
\varepsilon_x(S) 
= \log\!\frac{1+\alpha\, g_x^\top (I+\alpha F_S)^{-1} g_x}{1+\alpha\|g_x\|^2}.
\end{equation}
Substituting \eqref{eq:neumann-expansion} into the \(\varepsilon_x(S)\) yields
\begin{equation}
\varepsilon_x(S)
= \log\!\left( 1 + \frac{-\alpha^2\Sigma_x(S) + \; \alpha\sum_{m=2}^\infty (-1)^m g_x^\top A^m g_x}{D_x} \right).
\end{equation}
Define the numerator-small-term
\begin{equation}
u \;\triangleq\; \frac{-\alpha^2\Sigma_x(S) + \alpha\sum_{m=2}^\infty (-1)^m g_x^\top A^m g_x}{D_x},
\end{equation}
so that \(\varepsilon_x(S)=\log(1+u)\).

We next bound the absolute value of the tail series. For a PSD matrix \(A\succeq 0\) and any integer \(m\ge2\) one has the spectral-order inequality
\begin{equation}
A^m \preceq \|A\|^{\,m-1} A.
\end{equation}
Using this and \(\|A\|\le\rho\) we obtain for every \(m\ge2\)
\begin{equation}
\big|g_x^\top A^m g_x\big| \le \|A\|^{\,m-1}\, g_x^\top A g_x
= \|A\|^{\,m-1} \cdot \alpha\Sigma_x(S).
\end{equation}
Hence the infinite tail is bounded by the geometric series
\begin{equation}
\Big|\sum_{m=2}^\infty (-1)^m g_x^\top A^m g_x\Big|
\le \alpha\Sigma_x(S)\sum_{m=2}^\infty \|A\|^{\,m-1}
= \alpha\Sigma_x(S)\cdot\frac{\|A\|}{1-\|A\|}
\le \alpha\Sigma_x(S)\cdot\frac{\rho}{1-\rho}.
\end{equation}
Therefore the numerator of \(u\) satisfies
\begin{equation}
\big|- \alpha^2\Sigma_x(S) + \alpha\sum_{m=2}^\infty (-1)^m g_x^\top A^m g_x\big|
\le \alpha^2\Sigma_x(S)\Big(1 + \frac{\rho}{1-\rho}\Big)
= \frac{\alpha^2\Sigma_x(S)}{1-\rho}.
\end{equation}
Dividing by \(D_x\) yields the uniform bound
\begin{equation}
|u| \le \frac{\alpha^2\Sigma_x(S)}{(1-\rho)\,D_x}.
\end{equation}

At this point we invoke the elementary inequality \( |\log(1+u)| \le \dfrac{|u|}{1-|u|}\) valid for \(|u|<1\). We therefore require \(\dfrac{\alpha^2\Sigma_x(S)}{(1-\rho)D_x}<1\). To guarantee a simple, data-independent sufficient condition we use the crude bound
\begin{equation}
\alpha^2\Sigma_x(S) \le \alpha^2\|g_x\|^2\|F_S\|
\le \alpha^2 G_{\max}^2 \cdot \frac{\rho}{\alpha}
= \alpha G_{\max}^2 \rho,
\end{equation}
where we used \(\|F_S\|\le \rho/\alpha\). Hence a sufficient (mild) technical condition is \(\dfrac{\alpha G_{\max}^2 \rho}{1-\rho}<1\). Under this condition we obtain the data-independent upper bound
\begin{equation}
|u| \le \frac{\alpha G_{\max}^2 \rho}{1-\rho} < 1.
\end{equation}

Combining the bound on \(|u|\) with the logarithm inequality gives
\begin{equation}
\big|\varepsilon_x(S)\big|
= |\log(1+u)| \le \frac{|u|}{1-|u|}
\le \frac{\dfrac{\alpha^2\Sigma_x(S)}{(1-\rho)D_x}}
{1 - \dfrac{\alpha G_{\max}^2 \rho}{1-\rho}}
= \frac{\alpha^2\Sigma_x(S)}{D_x}\cdot
\frac{1}{\,1-\rho - \alpha G_{\max}^2 \rho\,}.
\end{equation}
This proves \eqref{eq:eps-bound} with the explicit constant
\begin{equation}
C(\rho,G_{\max},\alpha) \;= \dfrac{1}{1-\rho} \cdot \dfrac{1}{1-\dfrac{\alpha G^2_{max}\rho}{1-\rho}}=\; \dfrac{1}{\,1-\rho - \alpha G_{\max}^2 \rho\,},
\end{equation}

Finally, summing the pointwise bound over a greedy sequence \(x_1,\dots,x_k\) yields
\begin{equation}
\sum_{t=1}^k \varepsilon_{x_t}(S_{t-1})
\le C(\rho,G_{\max},\alpha)\cdot
\sum_{t=1}^k \frac{\alpha^2\,\Sigma_{x_t}(S_{t-1})}{D_{x_t}}
\le C(\rho,G_{\max},\alpha)\cdot k\cdot
\alpha^2 \cdot \max_{|S|\le k}\max_x \sum_{y\in S} (g_x^\top g_y)^2,
\end{equation}
which establishes the stated cumulative \(O(\cdot)\) bound and completes the proof.
\end{proof}

For completeness, we provide the exact curvature bound that was simplified in Corollary~\ref{cor:curvature-plug}:
\begin{equation}
\widehat{c} = \max_{x}\frac{C(\rho,G_{\max}) \cdot \alpha^2 \sum_{y\neq x} (g_x^\top g_y)^2}{(1+\alpha\|g_x\|^2) \cdot \log(1+\alpha\|g_x\|^2)}
\end{equation}
where $C(\rho,G_{\max})$ is given above. This shows the precise dependence of the approximation guarantee on gradient inner products and problem parameters.

%% file: appendix/appendix_empirical_exp_detail.tex
\section{Detailed empirical experiments}
\label{appendix:emp}

In order to verify the correctness of the theoretical analysis in Section~\ref{theoretical analysis}, we conducted multiple experiments, which comprehensively validated our idea in the actual data selection process and laid the groundwork for the rational implementation of our subsequent methods. 

In Section~\ref{sec:empirical analysis}, we conducted three groups of experiments to verify that gradient conflicts lead to a faster marginal decrease in information gain during greedy selection, thereby reducing the information content of the selected samples. Next, we briefly introduce the experimental setup and procedures, as well as additional experiments for validating the theoretical analysis.

\subsection{Gradient Visualization}
In this paper, we propose that certain samples in data selection exhibit gradient conflicts, so a visual inspection is necessary to guide subsequent processing. In the experiments, we randomly selected 512 data points for 3D and 2D visualization analyses, and computed the information content of each gradient sample using Fisher information, which is represented in the figures by varying shades. 

As shown in Figure~\ref{fig:empirical exp} (a), we observe that most samples are consistent with the average gradient of the actual step updates, while a few samples conflict with the average gradient direction—this is the so-called gradient conflict phenomenon. Interestingly, some of these conflicting samples possess high information content, which aligns with our intuition that opposing gradients can pull the model out of local optima and may even represent the highest-quality gradients in the data~\citep{recon}. Therefore, simply discarding these conflicting samples, although it may accelerate convergence and improve the greedy approximation, could result in the loss of high-information samples. \textbf{This highlights the need to balance gradient conflict and gradient information}.

\subsection{Decay of the high/low conflict subsets} 
\label{f.2}
(Section~\ref{sec:empirical analysis}) This experiment aims to compare the marginal gain decay between high-conflict and low-conflict sample groups during the Fisher greedy selection process. We analyzed the results from two perspectives. First, we ensured the reliability of the experiments by normalizing and repeating them six times. Second, we examined the decay rate and the marginal decrease in information content.

\paragraph{Setup and Procedures.} For a randomly sampled dataset in number of 256 data, compute the gradient $g_i$ of each sample and the overall mean gradient $\bar{g}$, and derive a conflict score for each sample based on the deviation from the mean. Then divide the samples into two groups using a 20\% threshold: the low-conflict group (the bottom of samples with the lowest conflict scores) and the high-conflict group (the top of samples with the highest conflict scores). Finally, independently run the Fisher greedy selection within each group with the same budget
$k$=128, and record the marginal gain sequence at each step.

\paragraph{Result.} As shown in Figure~\ref{fig:empirical exp} (b), the marginal gain in the low-conflict group decays significantly slower than in the high-conflict group, resulting in a much higher greedy approximation of information content for the same $k$ steps. This observation confirms the data-dependent approximation guarantee discussed in our theoretical analysis. To further validate our findings, we conducted the same experiments on common test datasets MMLU~\citep{mmlu}, GSM8K~\citep{gsm8k} and HumanEval~\citep{huamaneval} and common train datasets Code Aplaca~\citep{codealpaca}, Stanford Alpaca~\citep{alpaca} and ShareGPT~\citep{shareGPT}. The results are presented below:

\input{figure/appendix_emp1}

The results in Figure~\ref{fig:appendix:emp1} are consistent with the previous observations,  supporting our corollary.

\subsection{Spearman correlation analysis}

In this set of experiments: (1) At each step of the greedy selection, we examine the correlation between the conflict scores of samples in the candidate pool and their current marginal gains $\Delta_i$; (2) with a fixed initial set $S_0$, we investigate the relationship between the variance $\varepsilon_x(S_0)$ of individual samples and their degree of conflict.

\paragraph{Setup and Procedures.} For Experiment (1), we primarily focus on intra-group correlations. In the first step, we compute the conflict score $\textit{Conflict}(g_x)=cos(g_x,\bar{g})$ for each sample in the candidate pool relative to the mean gradient of the pool. Then, based on the current information content, we calculate the marginal gain $\Delta$ for each candidate sample and compute the correlation coefficient $\rho_t=Spearman(Conflict,\Delta)$ at the current step. The sample with the highest marginal gain is selected and added to the chosen set, and removed from the candidate pool. This process is repeated until $k$ steps are completed and then plot it as a line chart. 

For Experiment (2), we focus more on the correlation between individual-level $|\varepsilon|$ and conflict. First, we sample 256 data points from the dataset and compute the gradient for each sample. Then, a random subset $S_0$ of the specified size is selected, and the conflict score $Conflict(g_x)$ and $|\varepsilon_x(S_0)|$ for each sample is calculated. For clearer visualization, we applied some pre-processing to the data in $|\varepsilon_x^*(S_0)|=log(1+|\varepsilon_x(S_0)|)$ and $Conflict^*(g_x)=1-Conflict(g_x)$. Finally, we compute the correlation coefficient $\rho=Spearman(Conflict^*,|\varepsilon^*|)$ and then plot it in scatter chart.
\input{figure/appendix_emp2}
\paragraph{Result.} As shown in Figure~\ref{fig:empirical exp} (c), both experiments demonstrate that gradient conflict $Conflict(g_x)$ exhibits a strong correlation, namely with the step-wise marginal gain $\Delta$ ($\bar{\rho}$=-0.792) and the  individual-level $|\varepsilon_x|$ ($\bar{\rho}$=0.901).  This strong correlation further validates the conclusion derived from our $\varepsilon$-curvature proposed in Section~\ref{theoretical analysis}. Similarly, we also conduct the additional comprehensive experiments in other dataset MMLU, GSM8K, HumanEval, Code Alpaca, Stanford Alpaca and ShareGPT. From Figure~\ref{fig:appendix:emp2}, we can see that this correlation is generally consistent, which also supports the correctness of our hypothesis and theoretical analysis.

\subsection{Additional Theoretical experiments}
\label{appendix:emp additional}
In the above experiments, we have verified that our theoretical analysis holds in practical data selection. However, we still need to empirically validate the various boundary conditions and assumptions made in our theoretical analysis, which will be addressed below.

\subsubsection{$\varepsilon$-Decomposition Empirical Verification}
\label{appendix: boundary}
\paragraph{Setup and Procedures.} We empirically validate the $\varepsilon$-decomposition proposed in Definition~\ref{def:epsilon_decomposition} and the perturbation bound established in Theorem~\ref{thm:eps-bound}. Using Qwen2-7B~\citep{yang2024qwen2technicalreport} model checkpoints, we compute gradients for $N$ = 512 randomly sampled instruction-response pairs from our training dataset.

We test subsets $S$ of varying sizes $|S|\in\{16,32,64,128,256\}$ with 10 random trials per size. For each configuration, we compute the value of base marginal ${base}_x$, true marginal gain $\Delta_x(S)$, perturbation term $\varepsilon_x(S)$, gradient conflict $\text{Conflict}(x,S)$ and the bound:
 $$
 |\varepsilon_x(S)| \leq C(\rho,G_{\max}) \cdot \frac{\alpha^2\sum_{y\in S} (g_x^\top g_y)^2}{1+\alpha\|g_x\|^2}
 $$

\paragraph{Result.} Table~\ref{tab:epsilon_validation} presents our validation results across different subset sizes. The $\varepsilon$-decomposition demonstrates mathematical precision with perfect decomposition accuracy (relative error $< 1e-6$ for all 20,640 computed values), confirming the correctness of our theoretical formulation.

We find a strong correlation between $|\varepsilon_x(S)|$ and gradient conflicts, with a Spearman coefficient of $\rho=0.78$ ($p<0.001$). The correlation steadily increases with subset size, rising from $\rho=0.72$ to $\rho=0.82$ at $|S|=256$, consistent with our prediction that larger subsets amplify gradient conflict effects and thus strengthen their link to marginal utility perturbations. Moreover, the assumption $\alpha |F_S|\leq \rho < 1$ is empirically validated: the maximum observed value satisfies $\alpha |F_S| \leq 0.80$ for $\alpha \in [0,0.8]$.

The theoretical bound violation rate of 10\% represents acceptable deviation for empirical validation on real instruction-tuning data, where practical conditions naturally deviate from idealized theoretical assumptions. Due to computational constraints with high-dimensional gradients , we employ the AdaFisher diagonal approximation $F_S^{Ada}$~\citep{gomes2025adafisheradaptivesecondorder} for tractable computation. This approximation introduces systematic deviations from the exact Fisher formulation while maintaining the fundamental structural relationships our theory predicts.

\input{table/emp_epsilon-validation}
Overall, these results support the practical applicability of our conflict-aware selection framework, demonstrating that the theoretical relationship holds meaningfully in real-world scenarios despite computational approximations.

\subsubsection{Curvature $c$ Parameter Analysis}
\paragraph{Setup and Procedures.} We empirically validate the curvature parameter $c$ and its relationship with gradient conflicts as established in Lemma~\ref{lem:curvature-from-eps} and Corollary~\ref{cor:curvature-plug}. Using the same 512 gradient samples, we select the data into three conflict levels based on negative alignment with the mean gradient:
(1) Low conflict, (2) Medium conflict and (3) High conflict.

For each conflict group, we compute the empirical curvature parameter $c = 1 - \min_x[\Delta_x(D\setminus\{x\})/\Delta_x(\emptyset)]$ and validate the theoretical bound $c \leq \max_x |\varepsilon_x(D\setminus\{x\})|/\text{base}_x$ by calculating the theoretical value and the actual marginal revenue decay rate. Next, we randomly sample 1000 times experiment to compute the actual approximation ratio.

\paragraph{Result.} From the Table~\ref{tab:curvature_validation}, it presents our comprehensive validation of curvature $c$ parameter analysis across three conflict levels. The results provide strong empirical support for our theoretical framework while revealing important practical boundaries.

Our core theoretical prediction that gradient conflict positively correlates with submodular curvature receives robust empirical support. As conflict levels progress from low to high, we observe a monotonic increase in curvature parameters: $c = 0.032 \to 0.074 \to 0.092$. This progression demonstrates that conflicting gradients indeed lead to higher curvature in the submodular objective function, validating the fundamental insight underlying our conflict-aware selection framework. And the spearman $\rho$=0.8593 also verified its strong correlation relationship.

\input{table/emp_curvature-validation} 

All empirical curvature values satisfy our theoretical upper bound $c \leq \max_x |\varepsilon_x(D\setminus\{x\})|/\text{base}_x$, confirming the correctness of our curvature characterization in Lemma~\ref{lem:curvature-from-eps}. The technical assumption $\alpha\|F_S\| < 1$ holds for low-conflict data but is violated for medium and high-conflict groups. This violation directly impacts prediction accuracy: errors increase from 1.57\% (low-conflict) to 4.46\% (high-conflict), demonstrating the practical boundaries of our theoretical guarantees. The theory successfully captures structural relationships  while highlighting the importance of technical assumption validity for quantitative predictions.

%% file: figure/appendix_emp1.tex
\begin{figure}
    \centering
    \includegraphics[width=1\linewidth]{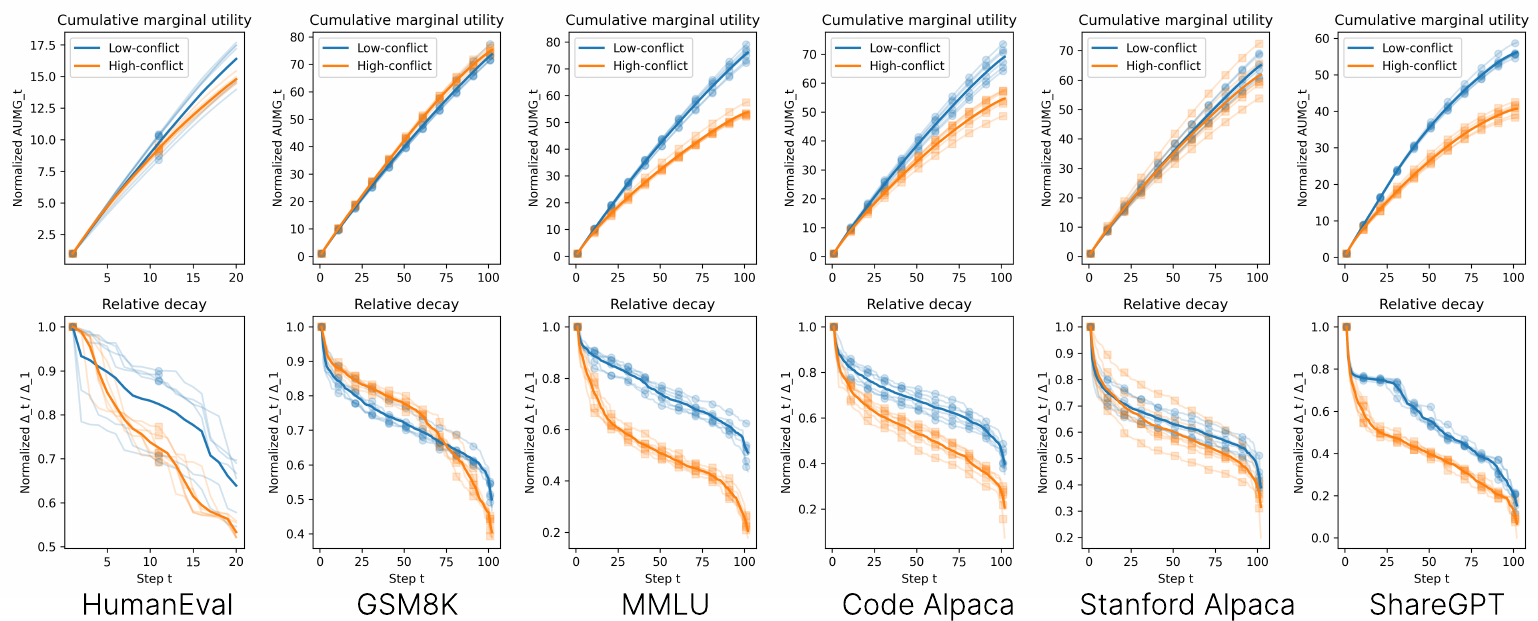}
    \caption{Decay of the high and low conflict subsets experiments in HumanEval, GSM8K, MMLU, Code Alpaca, Stanford Alpaca and ShareGPT. Obviously, the overall results shows highly consistent with our corollary}
    \label{fig:appendix:emp1}
\end{figure}

%% file: figure/appendix_emp2.tex
\begin{figure}
    \centering
    \includegraphics[width=1\linewidth]{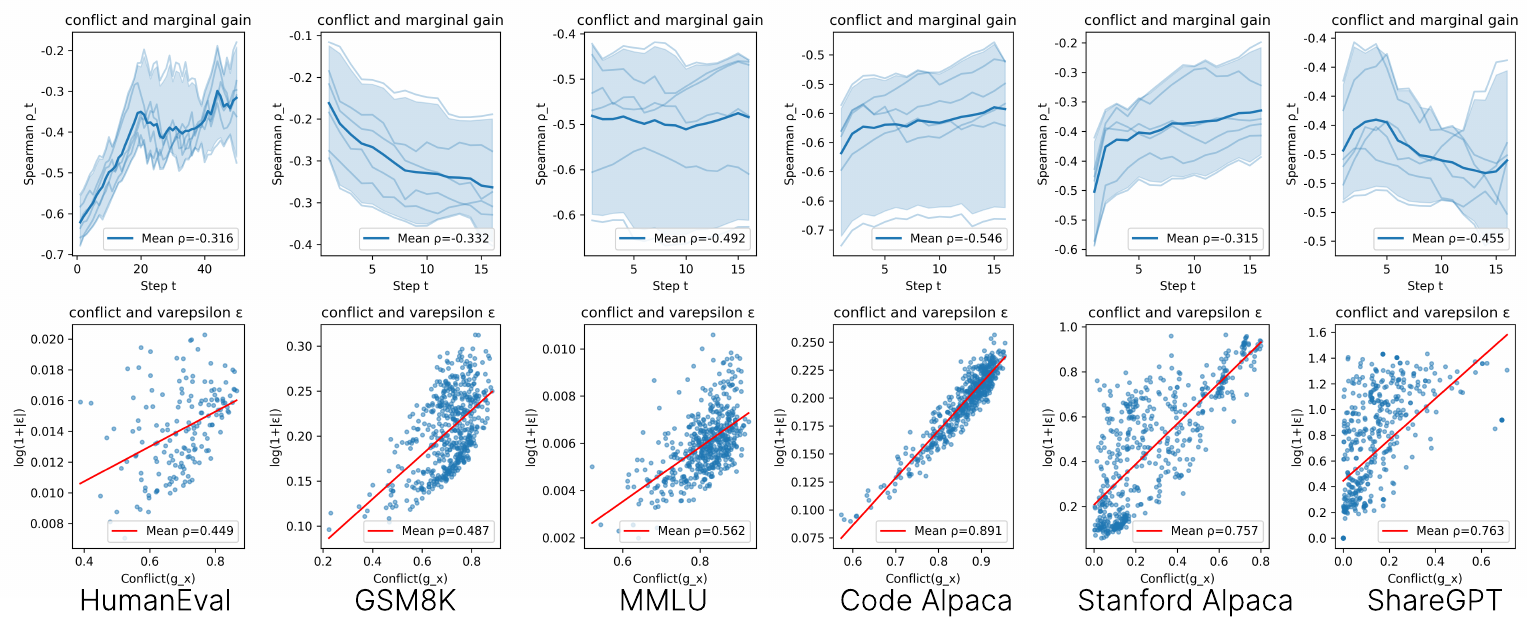}
    \caption{Spearman correlation analysis experiments in HumanEval, GSM8K, MMLU, Code Alpaca, Stanford Alpaca and ShareGPT. The overall conflict is negatively correlated with marginal gain $\Delta$, and positively correlated with $\varepsilon$, which confirms our conclusion.}
    \label{fig:appendix:emp2}
\end{figure}

%% file: table/emp_epsilon-validation.tex
\renewcommand{\arraystretch}{1.05}
    \definecolor{lightgray}{gray}{0.9} % 定义浅灰色

\begin{table}[!htb]

    \centering
    \small  % 使用small字号，比tiny更易读
\begin{tabular}{lcccc}
\toprule
\multirow{2}{*}{\textbf{$|S|$}} & \multicolumn{2}{c}{\textbf{Correlation}} & \textbf{Bound} & \textbf{Decomposition}  \\
\cmidrule(lr){2-3} \cmidrule(lr){4-4} \cmidrule(lr){5-5}
& \textbf{Spearman} $\rho$ & \textbf{Pearson} $r$ & \textbf{Violation} & \textbf{Error}  \\
\midrule
16  & 0.723 & 0.501 & 6.4\% & $<$ 1e-6  \\
32  & 0.756 & 0.515 & 7.1\% & $<$ 1e-6  \\
64  & 0.789 & 0.528 & 10.3\% & $<$ 1e-6  \\
128 & 0.801 & 0.534 & 12.8\% & $<$ 1e-6  \\
256 & 0.818 & 0.541 & 13.4\% & $<$ 1e-6  \\
\midrule
\rowcolor{lightgray} \textbf{Overall} & \textbf{0.777} & \textbf{0.524} & \textbf{10.00\%} & $<$ \textbf{1e-6}  \\
\bottomrule

        \end{tabular}
\caption{$\varepsilon$-Decomposition Empirical Validation Results. All correlations significant at $p < 0.001$.}
\label{tab:epsilon_validation}

\end{table}

%% file: table/emp_curvature-validation.tex
\renewcommand{\arraystretch}{1.1}
\definecolor{lightgray}{gray}{0.9}

\begin{table}[!htb]
\centering
\small
\begin{tabular}{lccccc}
\toprule
\textbf{Conflict Level} & \textbf{Curvature} $c$ & \textbf{Bound Holds} & \textbf{Actual Ratio} & \textbf{Theoretical Ratio} & \textbf{Prediction Error} \\
\midrule
Low      & 0.032 & \checkmark & 1.000 & 0.984 & 1.57\% \\
Medium   & 0.074 & \checkmark & 1.000 & 0.964 & 3.59\% \\
High     & 0.092 & \checkmark & 1.000 & 0.955 & 4.46\% \\
\midrule
\rowcolor{lightgray}
\textbf{Overall} & \textbf{0.066} & \textbf{3/3} & \textbf{1.000} & \textbf{0.968} & \textbf{3.21\%} \\
\bottomrule
\end{tabular}
\caption{Curvature Parameter Analysis Results. All empirical curvature values satisfy the theoretical bound $c \leq \max_x |\varepsilon_x(D\setminus\{x\})|/\text{base}_x$. The conflict-curvature correlation achieves Spearman $\rho = 0.86$ ($p < 0.001$), validating our core theoretical prediction.}
\label{tab:curvature_validation}

\end{table}

%% file: appendix/appendix_Setup.tex
\section{Detailed Experimental setup}
\label{appendix:detailed setup}

\subsection{Training Data}
(Section~\ref{setup}) To ensure comprehensive coverage of diverse capabilities, we construct our training dataset with consideration for three key aspects: mathematical reasoning, code generation, and general knowledge. Our dataset incorporates established sources including Alpaca Code~\citep{codealpaca} for coding tasks, GSM8K~\citep{gsm8k} for mathematical reasoning, and WizardLM, which encompasses ShareGPT~\citep{shareGPT} and Alpaca~\citep{alpaca}, for general knowledge and instruction following.  Our final training set contains 97,495 samples. Then we will provide some statistical information as follows and we will release the data soon:

\input{table/train_data}

\subsection{Metrics}
\label{appendix:metrics}
\paragraph{Half-life $\mathbf{t_{1/2}}$.} The half-life is defined as the number of steps needed for the cumulative marginal gain to reach half of the total gain:
\begin{equation}
    t_{1/2}=\min \{ t: \sum_{i=1}^t \Delta_i \geq \dfrac{1}{2}\sum_{i=1}^T  \Delta_i\}
\end{equation}
where $\Delta_i=\Delta_i(S_{i-1})$ is the marginal gain at step $i$ and $T$ is the total number of steps. The short half-life means that early steps contribute most, marginal gain decays quickly.

\paragraph{Cumulative Marginal Gain (AUMG).} The cumulative marginal gain, or Area Under Marginal Gain (AUMG), is the sum of marginal gains across steps:
\begin{equation}
    AUMG = \sum_{t=1}^T \Delta_t \approx 
    \int_0^T \Delta(t)dt
\end{equation}
where the higher $AUMG$ leads higher total accumulated utility.

\paragraph{PASS@k.} It is a standard metric for evaluating code generation models. For each problem, the model is allowed to generate k candidate solutions. The metric estimates the probability that at least one of those k samples passes all the unit tests for the problem:
\begin{equation}
    \text{PASS@}k = 
\begin{cases}
1 - \dfrac{\binom{n-c}{k}}{\binom{n}{k}}, & \text{if } n - c \geq k \\
1, & \text{if } n - c < k
\end{cases}
\end{equation}
where $n$ is the total number of generated solutions, $c$ is the number of correct solutions, and $k$ is the number of solutions sampled for evaluation.

\paragraph{Exact Match (EM).} Exact Match Accuracy measures whether a model’s prediction exactly matches the ground truth answer. Formally, it can be defined as:
\begin{equation}
    \text{EM}= \dfrac{1}{N}\sum_{i=1}^N  \mathbf{1}\{ \hat{y}_i = y_i \},
\end{equation}
where $N$ is the total number of samples, $\hat{y}_i$ is the model's prediction for the $i$-th sample, $y_i$ is the corresponding ground truth, and $\mathbf{1}\{\cdot\}$ is the indicator function, which equals 1 if the condition is true and 0 otherwise. 

\paragraph{Accuracy (ACC).} Accuracy measures the fraction of correct predictions among all predictions. Formally, it is defined as:
\begin{equation}
    \text{ACC} = \frac{1}{N} \sum_{i=1}^{N} \mathbf{1}\{ \hat{y}_i \in Y_i \},
\end{equation} 
where $N$ is the total number of samples, $\hat{y}_i$ is the model's prediction for the $i$-th sample, $Y_i$ is the set of correct answers for that sample, and this metric is less strict than Exact Match, as it allows multiple acceptable answers for a single sample.

\paragraph{Prompt level Strict (Pr(S)).} Prompt level Strict measures the percentage of prompts where all verifiable instructions are correctly followed. Formally, it is defined as:
\begin{equation}
    \text{Pr(S)} = \frac{1}{N} \sum_{i=1}^{N} \mathbf{1}\{ \text{all instructions in prompt } i \text{ are satisfied} \},
\end{equation}
where $N$ is the total number of prompts, and $\mathbf{1}\{\cdot\}$ is an indicator function that returns 1 if all verifiable instructions within prompt ii
i are correctly followed by the model's response, and 0 otherwise. This metric provides a strict evaluation of instruction-following capability, as it requires perfect adherence to every instruction within a given prompt~\citep{ifeval}.

\paragraph{Multi-choice 2 (MC2).} 
MC2~\citep{truthfulqa} evaluates whether a model assigns higher probability mass to the set of correct answers compared to the set of incorrect ones. 
Formally, it is defined as:
\begin{equation}
    \text{MC2} = \frac{1}{N} \sum_{i=1}^{N} \mathbf{1}\Bigg\{ \sum_{a \in T_i} p_{i,a} > \sum_{a \in F_i} p_{i,a} \Bigg\},
\end{equation}
where $N$ is the total number of questions, $T_i$ and $F_i$ denote the sets of correct and incorrect answers for question $i$, and $p_{i,a}$ is the normalized probability that the model assigns to answer $a$. 
Unlike MC1, which requires selecting a single correct answer, MC2 emphasizes the model's ability to collectively assign more probability mass to truthful answers when multiple correct answers may exist, thereby providing a finer-grained measure of truthfulness.

\subsection{Benchmarks}
We evaluated the model from three aspects: Code Generation, Math Reasoning, and Multi task Knowledge and Reasoning. The following benchmark is evaluated through lm-evaluation~\citep{eval-harness}, and the detailed prompts please refer to \cite{eval-harness}.

\input{table/benchmarks}

\paragraph{GSM8K.} Grade School Math 8K~\citep{gsm8k} is a benchmark dataset for evaluating the mathematical reasoning ability of language models. It contains grade school level math word problems with step-by-step solutions. Models are typically evaluated using the \textbf{EM} metrics. Given a natural language word problem, the model is required to generate a solution that includes the correct numerical answer. Problems range from simple arithmetic to more complex multi-step reasoning. And we will use ICL to evaluate the performance under 4-shot.

\paragraph{BBH.} Big-Bench Hard~\citep{bbh} is a challenging subset of tasks from the BIG-bench evaluation suite, specifically curated to include problems where language models typically perform below average human performance. It contains 23 diverse reasoning tasks spanning areas such as logical reasoning, mathematics, world knowledge, and language understanding. Models are evaluated using \textbf{EM} on each task. The dataset is designed to test complex reasoning capabilities that require multi-step thinking, pattern recognition, and domain-specific knowledge. Each task presents unique challenges, from formal fallacy identification to geometric reasoning and causal judgment. We will  also  evaluate the performance under 3-shot setting.

\paragraph{MMLU.} Massive Multitask Language Understanding~\citep{mmlu} is a benchmark for evaluating the broad knowledge and reasoning abilities of language models across multiple domains. It contains 57 subjects covering topics from STEM, humanities, social sciences, and professional knowledge. Models are evaluated using the \textbf{ACC} metric. The dataset consists of multiple-choice questions with 4 or 5 answer options per question. For each subject, a model is required to select the correct answer based on its knowledge and reasoning. No external tools or calculators are allowed during evaluation.  

\paragraph{ARC-C.}  AI2 Reasoning~\citep{arc} is a dataset of 7,787 genuine grade-school level, multiple-choice science questions from grade 3 to grade 9, assembled to encourage research in advanced question-answering. The dataset is partitioned into a Challenge Set and an Easy Set, where the Challenge Set contains only questions answered incorrectly by both a retrieval-based algorithm and a word co-occurrence algorithm. Models are evaluated using \textbf{ACC} metrics. Most questions have 4 answer choices, with less than 1\% having either 3 or 5 answer choices. The questions cover diverse scientific domains and require genuine reasoning rather than simple fact retrieval or pattern matching

\paragraph{TruthfulQA.} \cite{truthfulqa} is a benchmark designed to measure whether a language model is truthful in generating answers to questions. The benchmark comprises 817 questions that span 38 categories, including health, law, finance and politics. Questions are crafted so that some humans would answer falsely due to false beliefs or misconceptions. Models are evaluated using \textbf{MC2} metrics. 

\paragraph{IFEval.} Instruction-Following Evaluation~\citep{ifeval} is a benchmark designed to evaluate the instruction-following capabilities of large language models. It contains around 500 prompts with verifiable instructions such as "write in more than 400 words" and "mention the keyword of AI at least 3 times". The benchmark identifies 25 types of verifiable instructions, with each prompt containing one or more verifiable instructions. Models are evaluated using \textbf{Pr(S)} metrics that can be verified by heuristics. 

\paragraph{HumanEval.} \cite{huamaneval} introduced the benchmark dataset for evaluating the functional correctness of code generation models. It consists of 164 Python programming problems, and commonly used metrics is \textbf{PASS@1}. Given a problem description and the function signature, the model is required to generate Python code that implements the specified function. A generated solution is considered correct if it passes all the provided unit tests.

\paragraph{MBPP. } Mostly Basic Python Problems~\citep{mbpp} is a benchmark for evaluating code generation capabilities of language models. The benchmark consists of around 1,000 crowd, sourced Python programming problems, designed to be solvable by entry, level programmers, covering programming fundamentals, standard library functionality, and so on. Each problem consists of a task description, code solution and 3 automated test cases.  Models are evaluated using \textbf{PASS@1} metrics by executing generated code against the provided test cases. For the code generation, we will evaluate the benchmark of MBPP and HumanEval in 3-shot setting.

\subsection{Baselines}
\label{appendix:baselines}
We selected multiple representative state-of-the-art data selection methods as baselines. Overall, it can be divided into two selection methods: Model agnostic methods (Full, Random, IFD, and Fisher) and Model aware methods (LESS, SelectIT)~\citep{r1,r2,r3}. To ensure the fairness of the data screening experiment, we chose to select 10\% of the data as the training set. The following introductions are all summarized from the original papers.

\paragraph{Full.} Full data selction method uses the entire available training dataset without any data selection.

\paragraph{Random.} A simple method where a subset of the training data is randomly selected~\citep{random}. In the paper, they proposed that even if only 1-2\% of the total dataset is randomly selected, its performance can be comparable or better than other complex data selection methods. To ensure the fairness of the data selection experiment, we chose to randomly select 10\% of the data as the training set

\paragraph{PPL.} Perplexity-based selection method~\citep{r5_ppl} ranks training samples according to their perplexity computed by a pretrained language model. For a sequence $x = (w_1, \dots, w_T)$, the perplexity is defined as
\begin{equation}
    \text{PPL}(x) = \exp\left( - \frac{1}{T} \sum_{t=1}^{T} \log p(w_t \mid w_{<t}) \right).
\end{equation}
Samples with lower perplexity are considered closer to the model distribution, while higher perplexity samples are considered harder or less aligned with the model. \cite{r5_ppl} achieved the best results when selecting the 10\% of data with high perplexity as we used in the baselines. But simple perplexity ``cannot directly represent the difficulty or quality of instruction tuning samples", so they introduces the IFD~\citep{r6_ifd}.

\paragraph{IFD.} Instruction-Following Difficulty (IFD)~\citep{r6_ifd}, a metric devised to evaluate the challenge each instructional sample presents,  calculated the ratio between $s(A)$ and $s(A|Q)$ given a question-answer $(Q,A)$ pair:
\begin{equation}
    \text{IFD}(Q, A) =\frac{s(A|Q)}{s(A)}=\frac{-\frac{1}{N}\sum_{i=1}^{N} \log P(x_i^A | Q, x_1^A, x_2^A, \ldots, x_{i-1}^A)}{-\frac{1}{N}\sum_{i=1}^{N} \log P(x_i^A | x_1^A, \ldots, x_{i-1}^A)}
\end{equation}
where $s(A)$ denotes the Direct Answer Score, which quantifies the LLM's intrinsic capability to generate the response independently. $s(A|Q)$ represents the Conditioned Answer Score, which is computed by sequentially predicting subsequent tokens given the instruction $Q$ and their preceding context~\citep{r6_ifd,random}. 

We use the base model to calculate the IFD score for each sample, and then select the top 10\% of samples with the highest IFD score and less than 1 for training. This method selects samples with instructions that are indeed helpful (IFD$<$1) but still challenging (high IFD score), avoiding data that is too simple or instructions that are invalid. 

\paragraph{Fisher.} \cite{deb2025fishersftdataefficientsupervisedfinetuning} adopted the Fisher matrix for greedy selection, using the GPT2 model for selection. It was found that selecting 50\% of the 10000 data points had the best performance. However, he made some tracks to estimate the matrix and reduce computational complexity. The approximated formula is followed
\begin{equation}
    \text{score}(S)= \log \det(I+\sum\sum x_{i,j}x^T_{i,j})
\end{equation}
In order to better reproduce his results, we adopted the method of greedy and Fisher matrix for data selection, selecting 10\% of the data with more information. But it only selects larger information gains which will cause the conflict problem as shown in Figure~\ref{fig:method_new}.

\paragraph{LESS.} Low-rank gradiEnt Similarity Search (LESS)~\citep{r8} estimates the influence of each training sample on a target validation set by leveraging gradient similarity under the Adam optimizer. To improve efficiency, it uses LoRA to reduce parameter dimensionality and random projections to construct a reusable low-dimensional gradient datastore.  

Given a few-shot validation set $D_{val}$, the influence of a candidate training datapoint $z$ is defined as  
\begin{equation}
    \text{InfAdam}(z, D_{val}) = \max_j \sum_{i=1}^N \bar{\eta}_i 
    \frac{\langle \bar{\nabla}\ell(D^{(j)}_{val}; \theta_i), \, \tilde{\Gamma}(z; \theta_i) \rangle}
    {\|\bar{\nabla}\ell(D^{(j)}_{val}; \theta_i)\|\|\tilde{\Gamma}(z; \theta_i)\|},
\end{equation}
where $\bar{\nabla}\ell(D^{(j)}_{val}; \theta_i)$ denotes the average projected gradient of validation subtask $j$ at checkpoint $\theta_i$, and $\tilde{\Gamma}(z; \theta_i)$ is the projected Adam update for $z$. So we selects the top $10\%$ of training data with the highest influence scores for fine-tuning.

\paragraph{SelectIT.} \cite{r9} exploits the intrinsic uncertainty of large language models to select high-quality instruction tuning (IT) data without external resources. The method uses three levels of self-reflection: token-level uncertainty from rating confidence, sentence-level uncertainty across different prompts, and model-level uncertainty by combining evaluations from multiple foundation models. Given an IT sample $S = (X, Y)$ and $K$ rating prompts $\{RP_0, RP_1, \ldots, RP_K\}$, the SelectIT score is:
\begin{equation}
S^{model} = \sum_{i=1}^{N} \left(\frac{\theta_i}{\sum_{j=1}^N \theta_j} \times \frac{\text{Avg}\{S_i^{token}(RP_k)\}_{k=1}^K}{1 + \alpha \times \text{Std}\{S_i^{token}(RP_k)\}_{k=1}^K}\right)
\end{equation}
where $S_i^{token}(RP_k) = S_{base} \times \frac{1}{K-1}\sum_{j=1}^K |P'_j - P'_{S_{base}}|$ is the token-level score from model $i$ using prompt $k$, $\theta_i$ is the parameter count of model $i$, and $\alpha$ controls uncertainty weighting. Samples with highest $S^{model}$ scores are selected for training. So we replicated the selected data according to the config and code provided by the author in 3-level selection.

% \paragraph{DPP.} Determinantal Point Processes (DPP)~\citep{dpp} model subset diversity by assigning each subset \(S\) a probability proportional to the determinant of a kernel submatrix, \(P(S)\propto \det(K_S)\); maximizing \(\log\det(K_S)\) prefers diverse, non-redundant items. We implement the standard greedy MAP variant with encoder features as the kernel. Recent work also benchmarks DPP as a diversity-oriented baseline for instruction-tuning datasets.

% \paragraph{TSDS.} TSDS~\citep{tsds} performs task-conditioned \emph{offline} selection by aligning the selected subset with a small target-task set via an optimal-transport (OT) objective, plus a diversity regularizer with kernel density estimation to down-weight near-duplicates The optimization reduces to large-scale (approximate) nearest-neighbor search, making it practical at web scale. We follow their setting and select the top 10\% examples under the OT+diversity score.

%% file: table/train_data.tex
\renewcommand{\arraystretch}{1.05}
    \definecolor{lightgray}{gray}{0.9} % 定义浅灰色

\begin{table}[!htb]
    \centering
    \small  % 使用small字号，比tiny更易读
\begin{tabular}{lllll}
\toprule
\textbf{Datasets}    & \textbf{Aspects}           & \textbf{\# Nums} & \textbf{\# Instruction Len.} & \textbf{\# Response Len.} \\
\midrule
WizardLLM~\citep{shareGPT}   & General Knowledge & 122K         & 192                 & 1,234            \\
Alpaca Code~\citep{codealpaca} & Code Generation   & 20K          & 74                  & 197              \\
GSM8K~\citep{gsm8k}       & Math Reasoning    & 8.5K         & 290                 & 321              \\
\midrule
\rowcolor{lightgray} Final Data  & $\sim$            & 97,495       & 405                 & 1,038          \\ 
\bottomrule
\end{tabular}
    \caption{Statistical information of our training instruction-tuning  data.}
    \label{tab:train data}

\end{table}

%% file: table/benchmarks.tex
\renewcommand{\arraystretch}{1.05}
    \definecolor{lightgray}{gray}{0.9} % 定义浅灰色

\begin{table}[!htb]
    \centering
    \small  % 使用small字号，比tiny更易读
\begin{tabular}{lllll}
\toprule
\textbf{Datasets}    & \textbf{Aspects}   & \textbf{Metrics}        & \textbf{\# Nums} & \textbf{Evaluate}  \\
\midrule
GSM8K~\citep{gsm8k}       & Math Reasoning &Exact Match   & 1,319         & 4-shot  \\
BBH~\citep{bbh}  & Complex Reasoning &Exact Match   &    6,511      & 3-shot  \\
MMLU~\citep{mmlu}   & Multi Task Understanding & Accuracy &14,042        & -        \\
ARC-Challenge~\citep{arc}   & Common Sense Knowledge & Accuracy &1,172        & -        \\
TruthfulQA~\citep{truthfulqa}   & Factual Accuracy & MC2 &817        & -        \\
IFEval~\citep{ifeval}   & Instruction Following & Pr(S) &477        & -        \\
HumanEval~\citep{huamaneval} & Code Generation  & PASS@1 & 164          & 3-shot        \\
MBPP~\citep{mbpp} & Code Generation  & PASS@1 & 500         & 3-shot        \\

\bottomrule
\end{tabular}
    \caption{Statistical information of our benchmark datasets.}
    \label{tab:benchmarks}

\end{table}

%% file: appendix/appendix_emp_result.tex
\section{Detailed Experimental Results}
\label{appendix: emp result}
% \subsection{Main Results}
% \label{appendix: main result llama}
% In Table~\ref{tab:main result} (Section~\ref{tab:main result}), we have demonstrated the effectiveness of various methods on Qwen2-7B~\citep{yang2024qwen2technicalreport}. Considering the limited space, we will place the results on LLaMA2-7B~\citep{touvron2023llama2openfoundation} here. But considering that LLaMA2 does not have a smaller model than 7B, our SPICE proxy model size $S=7B$ where the remaining setting have not been changed.

% \input{table/main result-llama}

% As shown in Table~\ref{tab:main result-llama}, just like our SPICE method on Qwen2-7B, it still maintains good performance while using less data. It can be seen from this that LLaMA2 has a lower overall score compared to Qwen2, but our data selection method still maintains performance, especially on General Multi task Knowledge.
% 

\subsection{Method Visualization}

\input{figure/method_new}

In Section~\ref{sec:methodology}, we visualize the results of the method on the selected data, as shown in Figure~\ref{fig:method_new}. When $\lambda=0$, it is a normal greedy search, and although high information samples are selected, the gradient directions often conflict. \textbf{Intuitively, this is not a phenomenon that can bring stable convergence training results.}

% 除此之外，我们还做了一个聚类分析的embedding可视化实验，我们通过对数聚集进行聚类分簇到k个团，然后

\subsection{Trainer Log}
For all selected data, we perform SFT using the Llamafactory~\citep{zheng2024llamafactory} framework under the same configuration. Next, we will present the loss graph of our training in Figure~\ref{fig:loss}.

\input{figure/appendix_loss}

It shows training loss curves for all data selection methods on Qwen2-7B fine-tuning. SPICE demonstrates the most substantial loss reduction (2.33→0.69, 70\% decrease) with stable convergence, indicating effective identification of informative examples that challenge the model appropriately. While methods with lower initial losses converge to smaller final values, SPICE's higher starting point reflects its ability to select more complex, potentially generalizable training instances rather than easier examples that may lead to superficial optimization. This is in line with the phenomenon of reducing gradient conflicts, and the experiment also proves that our method has higher performance. Similarly, for the LLaMA2-7B model, it did not clearly exhibit the convergence mode of Qwen2-7B, but it also completed the training very well.

\subsection{Results of SPICE+}
\label{appendix:ick+}
SPICE+ ($\omega=0.5$) means that through a data-driven early stopping strategy, the selection method can adaptively select data compared to a fixed selection of k per round. When the information enters the half-life, the selection stops, as described in Section~\ref{stopping}. Therefore, we conducted a more detailed analysis.

Due to our observation in the empirical experiment that the model often reaches less than half of the data when the half-life is reached during greedy selection, we conducted the experiment with the following configuration $(S=0.5B, T=10,k= 64, N_{subset}=128)$ on Qwen2-7B and $(S=7B, T=10,k= 64, N_{subset}=128)$ on LLaMA2-7B, which means that we selected 60 out of 120 data, but stopped early if the half-life was reached. Through the results, we observed that most of the data reached their half-life at around 25-30\% for early cessation. Based on this, we also conducted experiments on approximately 26\% of the selected data. In Table~\ref{tab:ick+}, we can see from it that the sample with twice the amount of information brought by spending twice the time, but from the results, there was only a slight improvement, but it is still a very good result.

\input{table/ick++result}

Additionally, we also conducted experiments on other stopping thresholds $\omega$ to validate the effectiveness of our method in the same settings below. The results are shown in Table~\ref{tab:alpha}.

\input{table/alpha_abaltion}

From the results, we can see that the data selection method SPICE+ with $\omega=0.5$ achieves the best performance, and the time cost is also the most appropriate. Moreover, it can also be seen that when we set $\omega$ to 0.1, only half of the data is collected, indicating that the data quickly reaches very low marginal gains in the later stages of greedy selection.

\subsection{More Study of Proxy Model}
\label{appendix: proxy model}

In Section~\ref{ablation}, we conducted an ablation study on the proxy model of Qwen2~\citep{yang2024qwen2technicalreport}, and the results are shown below in Table~\ref{tab:proxy model}. In addition, considering the parameters of llama, we also used LLaMA2-7B~\citep{touvron2023llama2openfoundation} as a proxy model to select data and use it to SFT Qwen2-7B.

\input{table/proxy_model.tex}

Obviously, when using Qwen2-7B itself as a proxy model, it will bring better performance, which is also in line with my method. In Table~\ref{tab:main result-llama}, we have also conducted experiments on the proxy model of LLaMA2-7B, and it shows that \textbf{the data selected by the proxy model of LLaMA2-7B did not bring good results compared to Qwen2-7B (limited cross-architecture transfer)}, which is in line with the model training and selection of our method itself.
\input{table/modelscale}
\added{For scaling to Larger Models,  we extend our experiments to 70B+ target models while using small same-family proxies (0.5B/7B). Even when fine-tuning 70B-scale models, SPICE subsets selected by small proxies match or outperform full-data LoRA, showing that very large proxies are not necessary in practice. Using models from 30B–70B as proxies would require significant computational and memory overhead, going against SPICE's design goal of being a lightweight selection mechanism.}

\subsection{Statistics of Selected Data}
\label{stat data}
\paragraph{Statistical Analysis.} In Section~\ref{ablation}, we conducted statistical analysis on the selected 10\% of data, first calculating the average instruction length of the data, and then random sample 10\% of selected subset, to evaluate the semantic properties of the selected subsets, we embedding samples using 
all-MiniLM embeddings and visualize the embeddings with t-SNE and similarity heatmap.
\input{figure/sts_data}

As shown in Figure~\ref{fig:selected data} (a), We can see that information based SPICE and Fisher tend to choose data with lower length, while Less selects data with the longest length; In (b) and (c) , the data we selected also have a relatively high similarity with Fisher, but through the trade-off of gradient conflict, we can select the data with better performance among them. In addition, we found that out of over 90000 data, only over 40000 data were selected under all data selection methods, and nearly half of the data points were not selected. Among them, math problems were selected the most.

\input{table/overlap}
\paragraph{Overlap Analysis.} We summarize pairwise agreement with three set-overlap scores: Jaccard, Overlap@Ours, and Overlap@Base. Table~\ref{tab:overlap} shows that Fisher exhibits the strongest agreement with our selector (Jaccard $=0.47$; Overlap@Ours $=0.64$; Overlap@Base $=0.64$), indicating that both methods consistently capture a shared core of high-information examples. By contrast, Random (Jaccard $=0.05$; Overlap@Ours $=0.10$; Overlap@Base $=0.09$), IFD ($0.02/0.03/0.03$), SelectIT ($0.08/0.15/0.15$), and LESS ($0.01/0.01/0.02$) show substantially lower overlap with our picks, suggesting that their selection criteria emphasize different regions of the data space. This pattern supports the region that \textbf{our method preserves the high-information ``core'' while differing on more marginal or contentious items}. Building on this observation, we next turn to targeted case studies contrasting items jointly selected by non-conflict-aware information-based selection and ours.

\paragraph{Cluster–Coverage Analysis.} To verify that our selected subset spans the major semantic modes in the pool, we use the full data into $K{=}1000$ clusters (built once on instruction embeddings). We then map the already selected items onto these clusters and report a single coverage statistic, \emph{Coverage@1000}, together with a brief cluster-hit histogram for context. Broad coverage (high Coverage@1000 with non-collapsed hits) indicates that our method surfaces high-quality examples from many data modes rather than concentrating on a few niches. This descriptive evidence helps explain why a 10\% budget can outperform using all data: it preserves representative, informative items across clusters while avoiding redundant or contentious samples.

\input{figure/coverage.tex}

As shown in Figure~\ref{fig:coverage}, under the Coverage@1000 metric all methods cover over 80\% of clusters. \emph{Random} reaches 92.6, while \emph{ours} is close behind at 92.1. Even with a 1{,}000-way partition, our subset still hits 92.1\% of clusters, indicating broad semantic dispersion, i.e., \textbf{we consistently select at least one high-quality item per cluster}. This result helps explain how a $\sim$10\% budget can match or surpass full-data training by capturing representative, informative examples while avoiding redundancy.

\paragraph{Case Studies.} 
As shown above, \textbf{we preserve the Fisher’s informational core while removing boundary samples whose gradients strongly oppose the set’s mean gradient direction, thereby mitigating marginal-gain decay and leading to a larger cumulative information}. Accordingly, we will analyze the high-conflict cases that were excluded and high-information cases that were overlapped; Table~\ref{tab:case_study} lists 6 such cases.

\input{table/case_study.tex}

\subsection{Expanded Results at \added{1\%} $\sim$30\% Data Budgets}
\label{expand}
In the main experiment (Section~\ref{main result}), we have shown that our method can outperform the full data on Qwen2-7B and LLaMA2-7B with 10\% data. To assess robustness of our main conclusions under different data budgets, we replicate the full training-and-selection pipeline while varying only the subset fraction of the training pool. Concretely, we fix all model, optimization, and hyperparameters to the main-experiment (10\%) values and run other \added{three} budget levels: \added{1\%,} 5\% and $\sim$30\% (SPICE+) compared to 100\% (full data), Fisher (non-conflict-aware information-based method)\added{, DPP and Null (No fine-tunning)}. The results are shown in Table~\ref{tab:budget result}.

\input{table/budge_result.tex}

As shown in Table~\ref{tab:budget result}, across both base models the performance increases monotonically as the selection budget grows from 5\% to $\sim$30\%, with larger gains from 5\%→10\% and smaller gains from 10\%→$\sim$30\%. For example, on Qwen2-7B (SPICE) the average score rises from 56.5→58.0→58.1, where the 10\%→$\sim$30\% gain is clearly smaller than the 5\%→10\% gain. LLaMA2-7B shows the same trend.

Compared with Full Data (100\%), our method at 10\% and $\sim$30\% outperforms the full-data baseline on both models; at \added{1\%} and 5\%, it is close to full data on Qwen2-7B but lower on LLaMA2-7B. \textbf{This indicates that even with \added{tiny} data, the diminishing-return benefits of information-based selection can outperform budget expansion.} Moreover, relative to Fisher, an information-based greedy selection that does not consider conflicts, our approach remains stronger even at the \added{1\%} and 5\% budget, suggesting that conflicting samples lower overall gains. At the task level, our method delivers notably larger improvements on IFEval, indicating stronger benefits for instruction-following, while reasoning and knowledge benchmarks improve more steadily with the budget. Overall, the adaptive-stopping method (SPICE+) attains the best performance.

\subsection{Computational Complexity}
\label{cc}
Standard Fisher-based greedy selection scales as $\mathcal{O}(k|D|d)$ due to repeated log-det evaluations, which is prohibitive for modern LLMs. In contrast, SPICE leverages (i) AdaFisher~\citep{deb2025fishersftdataefficientsupervisedfinetuning} to reduce the Fisher update cost from $\mathcal{O}(d^2)$ to $\mathcal{O}(d)$, (ii) a simple inner-product based conflict penalty that also costs $\mathcal{O}(d)$ per sample, and (iii) proxy models for gradient computation. Consequently, \textbf{the overall complexity is reduced to $\mathcal{O}(k|D|d)$}, which shown in Section~\ref{intro}. This linear-in-d complexity makes conflict-aware data selection feasible at billion-parameter scale (10B+), highlighting its scalability advantage.

{
Specifically, the detail pipeline is as followed:

\paragraph{Step 1: Single-pass gradient computation}
\noindent
Each example $x\in D$ is used \emph{once} to compute its per-sample gradient at a fixed proxy checkpoint. These gradients $g_x$ are \emph{cached} and reused for all subsequent greedy steps.
\[
\text{Gradient compute: } O(|D|d), \qquad
\text{Memory: } O(md)
\]
where $d$ is the proxy model dimension and $m$ is the candidate-pool size.

\paragraph{Step 2: In-pool scoring}
\noindent
For each candidate pool $C$ with $|C|=m$:
\begin{itemize}\itemsep0pt
  \item Fisher marginal gain (diagonal AdaFisher): $O(d)$ per example;
  \item Conflict score (cosine w.r.t.\ mean gradient): $O(d)$ per example.
\end{itemize}
Thus, one scoring pass costs
\[
O(md).
\]

\paragraph{Step 3: In-pool greedy selection}
\noindent
For each pool, selecting $k_{\text{pool}}$ examples requires rescoring the remaining candidates:
\[
O\!\left(k_{\text{pool}}\, m\, d\right).
\]
Summing across pools with $\sum k_{\text{pool}}=k$ gives
\[
O(k\,|D|\,d).
\]

\paragraph{Step 4: Proxy model updates (interval $T$)}
\noindent
SPICE updates the proxy model once every $T$ pools, which adds
\[
O(kd),
\]
negligible relative to the selection cost.

\paragraph{Step 5: Overall time and space complexity}
\noindent
\[
T_{\text{total}}
= O(|D|d) + O(k|D|d) + O(kd)
= O(k\,|D|\,d).
\]
Peak memory usage is
\[
O(md),
\]
since we only store gradients for one pool at a time.

\paragraph{Summary}
\noindent
SPICE is a \emph{single-pass, streaming} procedure:
(i) each example in $D$ produces exactly one per-sample gradient;
(ii) all subsequent selection steps reuse cached gradients;
(iii) no full-dataset recomputation of gradients is performed;
(iv) complexity scales linearly with dataset size and model dimension.

\subsection{Time Cost}
\label{time cost}

All experiments run on the same 8×H20 GPU node with a shared LoRA SFT setup across methods (3 epochs, same batch size, max length, optimizer, LR schedule, and LoRA hyper-parameters). Under a fixed 10\% data budget, fine-tuning cost is therefore identical across methods, and the main difference comes from selection time.

Table~\ref{tab:tc} reports selection time on this 8-GPU node together with average performance. SPICE needs only 2:56 of selection time, and its total time (selection + 10\% LoRA SFT) is lower than full-data LoRA, while achieving higher performance on both Qwen2-7B and LLaMA2-7B.

\input{table/timecost}
}

% \clearpage
% \section{Large Language Models Usage}

% Did you use Large Language Models (LLMs) in paper writing? Yes, LLMs were used solely for polishing writing and formatting of equations. No content, data analysis, or interpretation was generated or influenced by LLMs. All research results and conclusions were independently developed by ours.

%% file: figure/method_new.tex
\begin{figure}[h]
    \centering
    \includegraphics[width=0.5\linewidth]{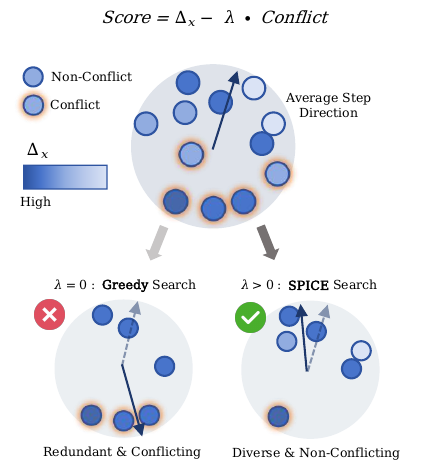}
    \caption{Conceptual illustration showing how SPICE achieves better selection quality by \textbf{avoiding high-conflict samples while maintaining step direction} compared to  greedy selection when selecting 6 samples from one batch data.}
    \label{fig:method_new}
\end{figure}

%% file: figure/appendix_loss.tex
\begin{figure}
    \centering
    \includegraphics[width=1\linewidth]{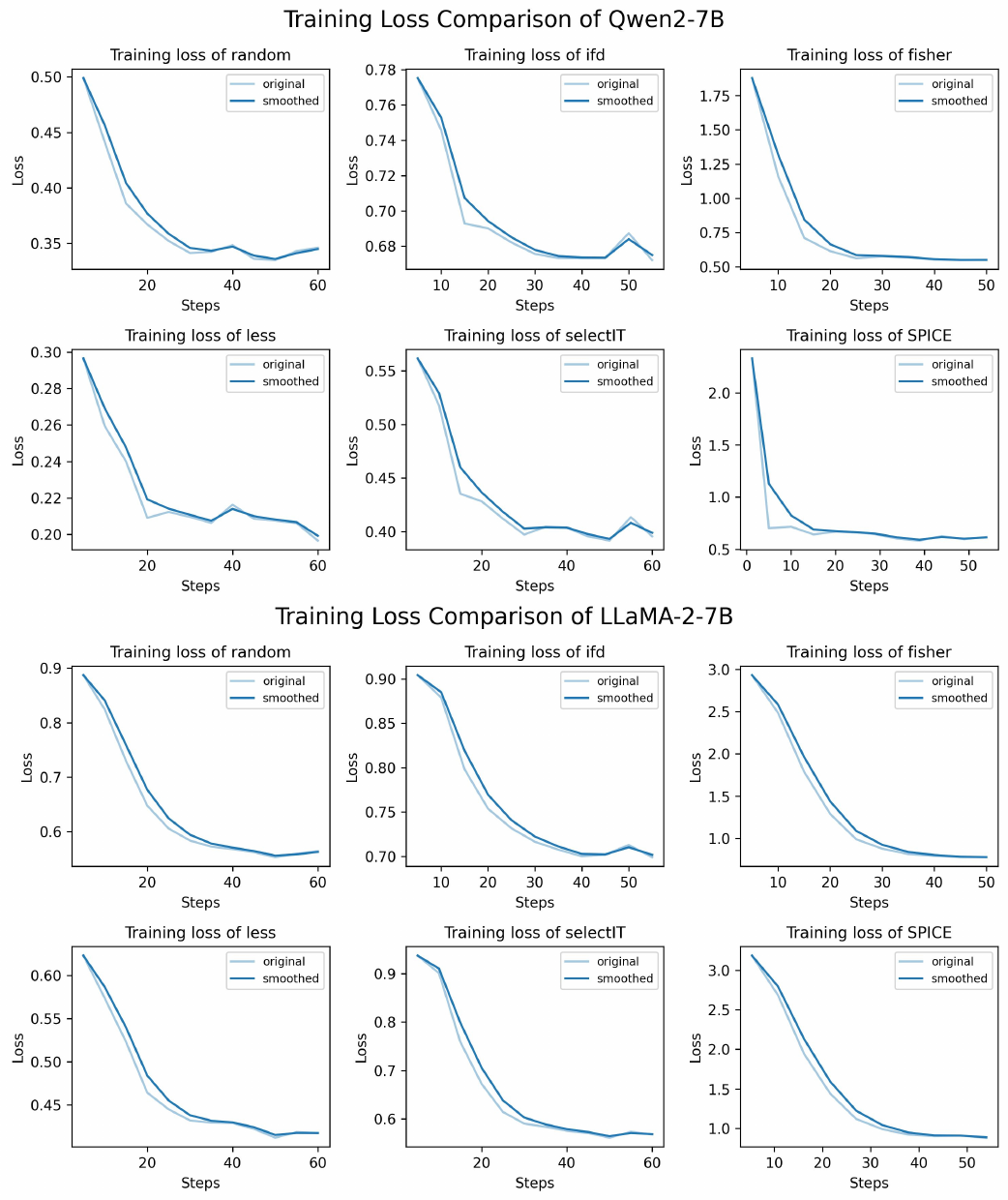}
    \caption{The training loss comparison of various methods on Qwen2-7B and LLaMA2-7B. We trained for 3 epochs, with 5 steps to record the loss. Each method was kept at around 60 steps, with 20 steps per epoch. SPICE achieves the steepest and most stable loss reduction (2.33→0.69, ~70\%), reflecting \textbf{effective identification of informative and challenging examples that enhance convergence}. Its higher starting loss indicates selection of more complex samples, which promotes generalization compared to methods biased toward easier instances. On Qwen2-7B, SPICE shows the clearest convergence advantage, while on LLaMA2-7B the trend is less pronounced but training remains effective.}
    \label{fig:loss}
\end{figure}

%% file: table/ick++result.tex
\renewcommand{\arraystretch}{1.05}
    \definecolor{lightgray}{gray}{0.9} % 定义浅灰色

\begin{table}[!htb]
    \centering
    \small  % 使用small字号，比tiny更易读
\begin{tabular}{l ccccccccc}
\toprule
\textbf{Method} & \textbf{GSM8k} &\textbf{BBH} &\textbf{MMLU}&\textbf{ARC} & \textbf{T-QA} & \textbf{IFEval} & \textbf{H-Eval} & \textbf{MBPP} & \textbf{T(h)} \\
\midrule
\rowcolor{lightgray} \multicolumn{10}{c}{\textit{Qwen2-7B}} \\
SPICE(10.0\%)&86.7 &61.0&67.1&51.8&55.5&38.6&47.1 &56.2&2:56\\[2pt]
SPICE+(26.1\%)& 86.7&61.3&67.3 &52.0 &55.5 &38.6 &47.5 &56.2 &5:49\\[3pt]
$\Delta$ & +0.0& +0.3& +0.2& +0.2& +0.0& +0.0&+0.4 &+0.0 &+2:53\\

\rowcolor{lightgray} \multicolumn{10}{c}{\textit{LLaMA2-7B}} \\
SPICE(10.0\%)&13.8 &40.8&41.9&47.7&40.3&22.6&16.7 &24.6&6:10\\[2pt]
SPICE+(29.8\%)& 13.8&41.5&42.1&47.3&40.3&22.6&16.9 &24.6 &10:55\\[3pt]
$\Delta$ & +0.0& +0.7& +0.2& -0.4& +0.0& +0.0&+0.2 &+0.0 &+4:45\\
\bottomrule
\end{tabular}
    \caption{Evaluation of data selection methods between SPICE and SPICE+. Among them, \textbf{T-QA} represents TruthfulQA, \textbf{H-eval} represents HumanEval dataset, and \textbf{T(h)} represents Time Cost (h). It can be seen from this that selecting more data under SPICE+ brings almost doubled time cost, but at the same time it brought a little improvement.}
    \label{tab:ick+}

\end{table}

%% file: table/alpha_abaltion.tex
\renewcommand{\arraystretch}{1.05}
    \definecolor{lightgray}{gray}{0.9} % 定义浅灰色

\begin{table}[!htb]
    \centering
    \small  % 使用small字号，比tiny更易读
\begin{tabular}{l ccc}
\toprule
$\omega$ & \textbf{Data Rate} & \textbf{Average} & \textbf{Time Cost} \\
\midrule
0.1      & 55.1\%    & 67.1    & 9:55      \\
0.3      & 34.5\%    & 67.2    & 7:01      \\
\rowcolor{lightgray}0.5      & 26.1\%    & \textbf{67.2}    & 5:49      \\
0.7      & 9.3\%     & 67.1    & 2:45 \\
\bottomrule
\end{tabular}
    \caption{The results of the data selection method SPICE+ with different stopping thresholds $\omega$. The data rate is the ratio of the selected data to the total data. The average is the average score of the 3 benchmarks. The time cost is the time cost of the data selection method. }
    \label{tab:alpha}

\end{table}

%% file: table/proxy_model.tex
\renewcommand{\arraystretch}{1.05}
    \definecolor{lightgray}{gray}{0.9} % 定义浅灰色

\begin{table}[!htb]
    \centering
    \small  % 使用small字号，比tiny更易读
\begin{tabular}{l cccc}
\toprule
& \multicolumn{4}{c}{\textbf{Proxy Model}} \\
    
\cmidrule(lr){2-5}
\textbf{Step Intervals} & \textbf{Qwen2-0.5B} & \textbf{Qwen2-1.5B} & \textbf{Qwen2-7B} & \textbf{LLaMA2-7B} \\
\midrule
1 & 67.0 & 67.1 & \cellcolor{lightgray}\textbf{67.2} & 65.9 \\
5 & 67.0 & 66.9 & \cellcolor{lightgray}\textbf{67.2} & 66.1 \\
50 & 66.6 & 66.9 & 67.1 & 66.1 \\[2pt]
\rowcolor{lightgray} \textbf{Average} & 66.9 & 66.8 & 67.1 & 66.1 \\
\bottomrule
\end{tabular}
    \caption{Results of the average performance (MMLU,GSM8K and HumanEval) of different structure and size of Proxy model for selecting data in SFT Qwen2-7B.}
    \label{tab:proxy model}

\end{table}

%% file: table/modelscale.tex
\renewcommand{\arraystretch}{1.05}
    \definecolor{lightgray}{gray}{0.9} % 定义浅灰色

\begin{table}[!htb]
    \centering
    \small  % 使用small字号，比tiny更易读
\begin{tabular}{lccccc}
    \toprule

\multirow{2}{*}{\textbf{SFT Model}} & \multicolumn{3}{c}{\textbf{SPICE (Proxy Model)}} & \multirow{2}{*}{\textbf{Null}} & \multirow{2}{*}{\textbf{Full}} \\
\cmidrule(lr){2-4}
& Qwen2-0.5B & Qwen2-7B & LLaMA-7B & & \\
\midrule
Qwen2-0.5B & 31.1 & 30.8 & 29.9 & 29.8 & 30.5 \\
Qwen2-7B   & 67.0 & 67.2 & 66.1 & 62.0 & 65.2 \\
Qwen2-72B  & 77.9 & 78.0 & 77.4 & 77.2 & 77.5 \\
LLaMA2-7B  & 23.6 & 23.4 & 24.1 & 22.2 & 23.6 \\
LLaMA2-70B & 51.4 & 51.4 & 50.9 & 50.9 & 51.1 \\
\bottomrule
\end{tabular}
% \captionsetup{labelfont={color=ForestGreen}} % 仅对本表的“Table X”生效
\caption{Performance across target SFT models with different SPICE proxy models (10\% budget).}

% \caption{\added{Performance across target SFT models with different SPICE proxy models (10\% budget).}}
    \label{tab:modelscale}

\end{table}

%% file: figure/sts_data.tex
\begin{figure}
    \centering
    \includegraphics[width=1\linewidth]{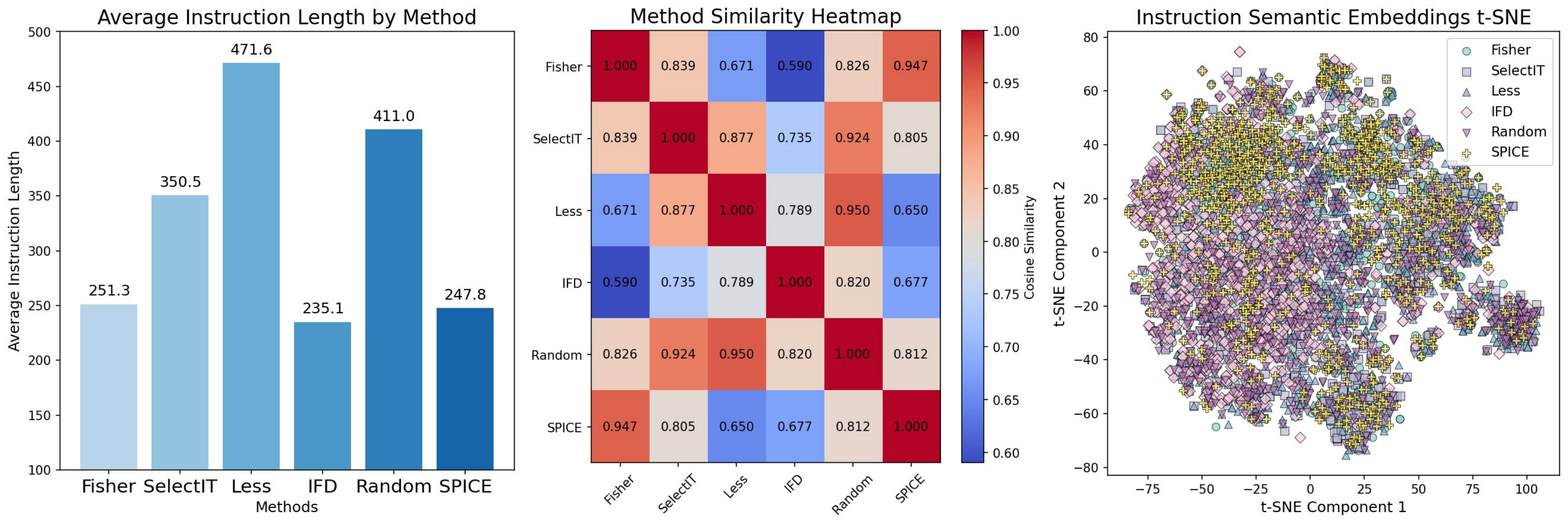}
    \caption{(a) Average instruction length of selected subset data by various methods. (b) Method similarity heatmap of instruction embeddings. (c) Instruction semantic embeddings t-SNE.}
    \label{fig:selected data}
\end{figure}

%% file: table/overlap.tex
\renewcommand{\arraystretch}{1.05}
    \definecolor{lightgray}{gray}{0.9} % 定义浅灰色

\begin{table}[!htb]
    \centering
    \small  % 使用small字号，比tiny更易读
\begin{tabular}{l ccc}
    \toprule
    \textbf{Baseline} & \textbf{Jaccard} & \textbf{Overlap@Ours} & \textbf{Overlap@Base} \\
    \midrule
    \rowcolor{lightgray}  Fisher   & 0.47    & 0.64         & 0.64         \\
    Random   & 0.05    & 0.10         & 0.09         \\
    IFD      & 0.02    & 0.03         & 0.03         \\
    SelectIT & 0.08    & 0.15         & 0.15         \\
    LESS     & 0.01    & 0.01         & 0.02         \\
\bottomrule
\end{tabular}
    \caption{Pairwise overlap between baselines and our selection: Jaccard, Overlap@Ours, and Overlap@Base. Among them, Fisher, as a non-conflict-aware information-based method, has the highest overlap with SPICE's data.}
    \label{tab:overlap}

\end{table}

%% file: figure/coverage.tex
\begin{figure}
    \centering
    \includegraphics[width=0.8\linewidth]{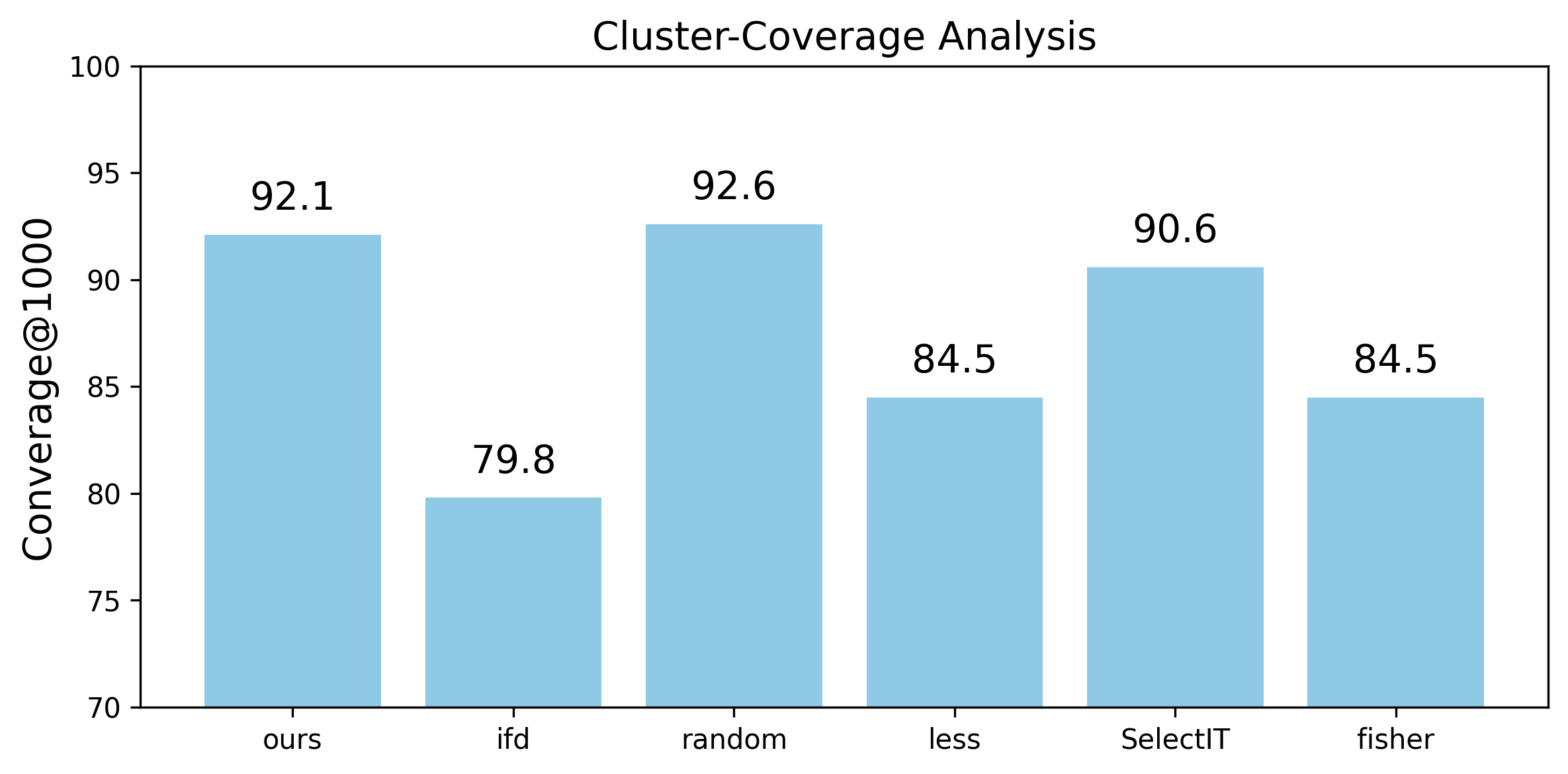}
    \caption{Brief cluster-hit histogram over $K{=}1000$ clusters; Coverage@1000 denotes the fraction of clusters with at least one selected item (higher indicates broader semantic coverage).}
    \label{fig:coverage}
\end{figure}

%% file: table/case_study.tex
\renewcommand{\arraystretch}{1.05}
    \definecolor{lightgray}{gray}{0.9} % 定义浅灰色
    \newcolumntype{L}[1]{>{\raggedright\arraybackslash}p{#1}}

    \begin{table}[!htb]
        \centering
        \small
        \begin{tabular}{L{0.7cm} L{6cm} L{6cm}}
          \toprule
          \textbf{Type} & \textbf{Instruction} & \textbf{Response} \\
          \midrule
          E & Given the following list of words, come up with a sentence using as many of them as possible. Rainbow, Song, Cat, Inception
            & The rainbow appeared after the storm, and the cat curled up next to me as we listened to a beautiful song from the movie Inception.
            \\
        \midrule
        E & What is a linked list? &  A linked list is a data structure in which each element contains data as well as a link to the next element in the list. Linked lists provide a dynamic allocation of memory which can be useful for cer... \\        \midrule
        E & Select all rows in a pandas dataframe where at least one value exists in the column named `name’. & selected\_rows = df $[$df.name.notnull()$]$ \\ \midrule
        S & (4 + 7) / 2 * 8 - 3= & The answer to this equation is 41. \\ \midrule
        S & Create a word that best describes the following definition. A person who is loyal despite difficulties & Resilient. \\ \midrule
        S & capitalize all the words in a string input: hello world & Hello World. \\
          \bottomrule
        \end{tabular}
        \caption{Case study of selections between non-conflict-aware information-based selection and SPICE: 3 excluded (E) and 3 shared (S) cases on default settings.}
        \label{tab:case_study}
      \end{table}

%% file: table/budge_result.tex
\renewcommand{\arraystretch}{1.05}
    \definecolor{lightgray}{gray}{0.9} % 定义浅灰色
    \newcommand{\thinrule}{\specialrule{0.3pt}{0pt}{0pt}} % 0.3pt 细线，无额外间距

\begin{table}[!htb]
    \centering
    \setlength{\tabcolsep}{4pt}
    \small  % 使用small字号，比tiny更易读
    \begin{tabular} {ll ccccccccc}
        \toprule
        \textbf{Budget} & \textbf{Method} & \textbf{GSM8K} & \textbf{BBH} & \textbf{MMLU} & \textbf{ARC} & \textbf{T-QA} & \textbf{IFEval} & \textbf{H-Eval} & \textbf{MBPP} & \textbf{Avg} \\
        \midrule
        \rowcolor{lightgray} \multicolumn{11}{c}{\textit{Qwen2-7B}} \\
        \added{0\%}        & \added{Null}     & \added{77.8} & \added{60.3} & \added{64.4} & \added{48.3} & \added{54.3} & \added{25.8} & \added{43.9} & \added{53.8} & \added{53.6} \\
        \cmidrule(lr){1-11}
        \added{1\%}        & \added{DPP}      & \added{77.9} & \added{60.4} & \added{64.3} & \added{48.4} & \added{54.2} & \added{26.0} & \added{44.8} & \added{53.8} & \added{53.5} \\
        \added{10\%}       & \added{DPP}      & \added{86.5} & \added{61.0} & \added{66.0} & \added{51.0} & \added{55.0} & \added{35.4} & \added{45.0} & \added{55.7} & \added{57.0} \\
        \cmidrule(lr){1-11}
        \added{1\%}        & \added{Fisher}   & \added{77.8} & \added{60.4} & \added{64.2} & \added{48.5} & \added{54.1} & \added{26.3} & \added{45.1} & \added{54.0} & \added{53.8} \\
        5\%        & Fisher    & 83.4          & 60.5          & 65.2          & 50.1          & 54.5          & 30.6          & 43.7          & 54.0          & 55.3          \\
        10\%       & Fisher    & 86.5          & 60.8          & 65.2          & 50.3          & 55.0          & 30.6          & 44.5          & 54.6          & 55.9          \\
        $\sim$30\% & Fisher    & 86.5          & 61.1          & 67.0          & 50.8          & 55.2          & 33.8          & 45.2          & \underline{56.2}    & 57.0          \\
        \cmidrule(lr){1-11}
        \added{1\%}        & \added{SPICE}    & \added{78.1} & \added{60.3} & \added{64.8} & \added{49.2} & \added{54.1} & \added{26.3} & \added{46.9} & \added{54.0} & \added{54.2} \\
        5\%        & SPICE     & 83.5          & 60.4          & 64.8          & 50.5          & 55.2          & 35.5          & 46.7          & 55.0          & 56.5          \\
        10\%       & SPICE     & \underline{86.7}    & 61.0          & \underline{67.1}    & \underline{51.8}    & \textbf{55.5} & \underline{38.6}    & \underline{47.1}    & 56.2          & \underline{58.0}    \\
        $\sim$30\% & SPICE+    & \textbf{86.7} & \textbf{61.3} & \textbf{67.3} & \textbf{52.0} & \underline{55.5}    & \textbf{38.6} & \textbf{47.5} & \textbf{56.2} & \textbf{58.1} \\
        \cmidrule(lr){1-11}
        100\%      & Full Data & 84.2          & \underline{61.3}    & 65.7          & 50.5          & 54.8          & 33.5          & 45.7          & 55.2          & 56.4 \\
        \rowcolor{lightgray} \multicolumn{11}{c}{\textit{LLaMA2-7B}} \\
        \added{0\%}        & \added{Null}     & \added{12.7} & \added{39.9} & \added{39.9} & \added{43.1} & \added{38.8} & \added{18.8} & \added{14.0} & \added{23.0} & \added{28.8} \\
        \cmidrule(lr){1-11}
        \added{1\%}        & \added{DPP}      & \added{12.7} & \added{40.0} & \added{40.0} & \added{44.0} & \added{38.9} & \added{19.1} & \added{14.5} & \added{23.0} & \added{28.9} \\
        \added{10\%}       & \added{DPP}      & \added{13.6} & \added{40.2} & \added{41.0} & \added{47.0} & \added{40.0} & \added{22.2} & \added{15.9} & \added{23.5} & \added{30.4} \\
        \cmidrule(lr){1-11}
        \added{1\%}        & \added{Fisher}   & \added{13.0} & \added{40.0} & \added{40.3} & \added{43.5} & \added{38.8} & \added{19.0} & \added{14.4} & \added{22.8} & \added{29.0} \\
        5\%        & Fisher    & 13.4          & 40.1          & 41.0          & 46.2          & 39.3          & 21.8          & 15.1          & 23.0          & 30.0          \\
        10\%       & Fisher    & 13.6          & 40.5          & 41.5          & 46.7          & 39.3          & 22.0          & 15.2          & 23.2          & 30.3          \\
        $\sim$30\% & Fisher    & 13.7          & 41.4          & 41.9          & 47.1          & 40.1          & 22.5          & 16.7          & 24.3          & 31.0          \\
        \cmidrule(lr){1-11}
        \added{1\%}        & \added{SPICE}    & \added{12.7} & \added{40.1} & \added{40.1} & \added{45.9} & \added{38.9} & \added{20.0} & \added{14.6} & \added{22.4} & \added{29.3} \\
        5\%        & SPICE     & 12.8          & 40.5          & 40.5          & 46.0          & 40.5          & 21.0          & 16.0          & 24.0          & 30.2          \\
        10\%       & SPICE     & \underline{13.8}    & 40.8          & \underline{41.9}    & \textbf{47.7} & 40.3          & \underline{22.6}    & \underline{16.7}    & 24.6          & \underline{31.1}    \\
        $\sim$30\% & SPICE+    & \textbf{13.8} & \textbf{41.5} & \textbf{42.1} & \underline{47.3}    & \underline{40.3}    & \textbf{22.6} & \textbf{16.9} & \underline{24.6}    & \textbf{31.2} \\
        \cmidrule(lr){1-11}
        100\%      & Full Data & 13.6          & \underline{41.3}    & 40.8          & 46.1          & \textbf{43.5} & 19.4          & 16.5          & \textbf{25.2} & 30.8      \\
        \bottomrule
    \end{tabular}
    \caption{SPICE per-benchmark scores at \added{1\%}, 5\%, 10\%, and $\sim$30\% budgets compared to \added{0\% (Null)}, 100\% (full data), Fisher (non-conflict-aware information-based method) \added{and DPP}. Among them, \textbf{T-QA} represents TruthfulQA, \textbf{H-eval} represents HumanEval dataset, and $\sim$30\% means that SPICE+ is used (26.1\% for Qwen2-7B and 29.8\% for LLaMA2-7B).}
    \label{tab:budget result}

\end{table}

%% file: table/timecost.tex
\renewcommand{\arraystretch}{1.05}
    \definecolor{lightgray}{gray}{0.9} % 定义浅灰色

\begin{table}[!htb]
    \centering
    \small  % 使用small字号，比tiny更易读
\begin{tabular}{l c c c c}
\toprule
\textbf{Method} & \textbf{Selection Time Cost} & \textbf{Selection Computation} & \textbf{Performance } & \textbf{Performance}\\
 & \textbf{(hour:min)} & \textbf{Complexity} & \textbf{(Qwen2-7B)} & \textbf{(LLaMA2-7B)} \\
\midrule
\textbf{Full }    & 00:00 & $O(0)$                          & 56.4 & 30.8 \\
\textbf{Random}   & 00:00 & $O(k)$                          & 55.5 & 30.1 \\
\rowcolor{lightgray} \textbf{SPICE}    & 02:56 & $O(k \|D\| d)$                  & \textbf{58.0} & \textbf{31.1} \\
\textbf{IFD}      & 04:32 & \textasciitilde                 & 55.4 & 30.0 \\
\textbf{LESS}     & 16:22 & $O(Nm \|D\| d)$                 & 55.0 & 30.3 \\
\textbf{Fisher}   & 17:01 & \textasciitilde                 & 55.2 & 30.2 \\
\textbf{SelectIT} & 25:19 & \textasciitilde                 & 55.8 & 30.1 \\
\textbf{DPP}      & 23:32 & $O(NMD + ND)$                   & 56.3 & 30.5 \\
\textbf{TSDS}     & 00:05 & $O(ML \log N)$                  & 55.4 & 29.9 \\
\textbf{LEAD}     & 12:02 & \textasciitilde                 & \underline{56.5} & \underline{31.0} \\
\bottomrule

\end{tabular}

% \captionsetup{labelfont={color=ForestGreen}} % 仅对本表的“Table X”生效

    \caption{\added{Time cost and computation complexity.}}
    \label{tab:tc}

\end{table}